\documentclass[10pt]{article}%
\usepackage[preprint]{tmlr}
\usepackage{microtype}
\usepackage{graphicx}
\usepackage{tikz}
\usepackage{threeparttable}%
\usepackage{tabularx}
\usepackage{subcaption}
\usepackage{enumitem}
\usepackage{booktabs}%
\usepackage{algorithm}
\usepackage{algpseudocode}
\usepackage{amsmath,amsthm}
\usepackage{siunitx}

\usepackage{amsmath,amsfonts,bm}

\def\eqref#1{equation~\ref{#1}}
\def\Eqref#1{Equation~\ref{#1}}

\def\1{\bm{1}}

\DeclareMathAlphabet{\mathsfit}{\encodingdefault}{\sfdefault}{m}{sl}
\SetMathAlphabet{\mathsfit}{bold}{\encodingdefault}{\sfdefault}{bx}{n}

\usepackage{hyperref}
\usepackage{url}
\theoremstyle{plain}
\newtheorem{theo}{Theorem}[section]
\newtheorem{proposition}[theo]{Proposition}
\newtheorem{lemma}[theo]{Lemma}
\newtheorem{corollary}[theo]{Corollary}
\theoremstyle{definition}
\newtheorem{definition}[theo]{Definition}

\theoremstyle{remark}
\newtheorem{remark}[theo]{Remark}

\title{VBO-MI: A Differentiable \textcolor{black}{Black-Box} Optimization Framework via Variational Mutual Information Estimation}

\author{\name Farhad Mirkarimi \email farhad.mirkarimi@mcgill.mail.ca \\
      \addr McGill University}

\def\month{10}%
\def\year{2026}%
\def\openreview{\url{https://openreview.net/forum?id=nkj5zYqJ2p&nesting=2&sort=date-desc}}%

\begin{document}

\maketitle
\begin{abstract}
    Many real-world tasks require optimizing expensive black-box functions accessible only through noisy evaluations, a setting commonly addressed with Bayesian optimization. While Gaussian processes~(GPs) and Bayesian neural networks have driven significant progress, GP-based methods scale cubically with observations, and BNN-based approaches remain burdened by expensive posterior sampling such as Hamiltonian Monte Carlo, and inner-loop acquisition optimization in high-dimensional setting. In this work, we propose \textbf{VBO-MI} (Variational \textcolor{black}{Black-Box} Optimization with Mutual Information), a fully gradient-based BO framework that requires no  assumptions on a Gaussian prior or a variational posterior family over $f$. While our framework uses gradient-based updates to train the internal action and information-estimation networks, it treats the objective function $f$ as a strict black box, requiring only zeroth-order evaluations (input-output pairs). To enable end-to-end gradient flow, we employ an actor-critic architecture consisting of an action-net to navigate the input space and a variational critic to estimate a \textcolor{black}{surrogate information signal} for exploration. This formulation effectively eliminates the traditional inner-loop acquisition optimization bottleneck, achieving up to a $10^{2}\times$ reduction in FLOPs compared to BNN-BO baselines. Additionally, we design a lightweight surrogate that approximates the objective from observed data using a replay buffer, which reduces the total required real function queries to one batch per iteration. We provide theoretical analysis for our framework, establishing
convergence of its training objective and surrogate
estimator, and showing, that near-optimal
solutions exist within our hypothesis class. We evaluate our method on a diverse suite of benchmarks, including high-dimensional synthetic functions (Ackley, Levy, Griewank) and complex real-world tasks such as Rover Trajectory planning optimization, the Lunar Lander, and Pest Control problems. Our experiments demonstrate that VBO-MI provides the same or superior optimization performance over the various baselines.
\end{abstract}

\section{Introduction}
\label{submission}
Bayesian optimization has matured into a de-facto tool for expensive global black-box optimization with applications in hyperparameter tuning, materials design, and experimental sciences~\citep{frazier2018tutorialbayesianoptimization,garnett_bayesoptbook_2023}\nocite{tanha2026structured}.
In Bayesian optimization, the algorithm treats $f$ as a black-box function and aims to optimize it sequentially under a finite budget $N$ that is, with access to only $N$ evaluations (input--output pairs) of the function~\citep{271617}. At each evaluation, the algorithm builds a probabilistic model of $f$, using the observed input--output pairs and uses this model to guide the selection of the next evaluation points, by maximizing an acquisition function that trades off exploration and exploitation with the goal of finding the global optimum as efficiently as possible.

Traditionally, Gaussian process (GP) assumptions for the surrogate coupled with various acquisition functions, such as expected improvement~\citep{Jones1998EfficientGO}, entropy search ~\citep{JMLR:v13:hennig12a}~\citep{Hennig_Osborne_Kersting_2022}, and Upper confidence bound (UCB) ~\citep{srinivas2010gaussian} have been effective in developing various efficient Bayesian optimization algorithms ~\citep{pmlr-v162-rudner22a}. However, because standard GPs require inverting a covariance matrix (which scales cubically in the number of observations), they become computationally prohibitive for large datasets. Although approximate GPs such as sparse inducing-point models have been proposed~\citep{ermonbo}, and more recent techniques based on GP approximation scale to high-dimensional BO problems~\citep{pmlr-v206-jimenez23a}, these methods still face trade-offs between the accuracy of finding optimal points, local fidelity, and uncertainty quantification.

Recently, several works have been focused on the replacing GPs with Bayesian neural networks (BNNs) in Bayesian optimization algorithms ~\citep{li2024a,NIPS2016_a96d3afe}. Various methods have been used in the literature for computing posteriors in the BNNs, these include variational inference with a Sparse Gaussian assumption ~\citep{NEURIPS2024_da30215e}, Gamma variational distribution ~\citep{cheng2025a}, Hamiltonian Monte Carlo (HMC) among others ~\citep{li2024a}. However, all of the methods rely on the specific form of distribution for posterior sampling, which limits their capability to search entire space, specially when the underlying function is complex and high dimensional. Another limitation is computational efficiency~\citep{cheng2025a} as training BNNs ~\citep{li2024a} is prohibitive.

\textcolor{black}{These limitations are not merely theoretical, but manifest in
specific, increasingly common situations. Hyperparameter tuning is one
of the classical applications of BO \citep{7352306}, yet
many modern machine learning models from deep networks to large
language models, expose dozens to hundreds of hyperparameters
governing architecture, optimization, and fine-tuning. Such
high-dimensional search spaces require substantially more observations
to adequately explore, a difficulty long recognized as a core
limitation of BO \citep{frazier2018tutorialbayesianoptimization}, which in turn compounds
the $\mathcal{O}(T^3)$ per-iteration cost of exact GP inference
discussed above. Large language model fine-tuning is a particularly
prominent recent instance of this challenge, where the scalability of
BO's inherently sequential search has been explicitly identified as a
bottleneck \citep{appllm}. Separately, in robotic control,
an agent must learn an optimal operating configuration through direct
interaction with its physical environment
\citep{7352306}, where the resulting reward landscape is
often highly non-stationary and shaped by complex, hard-to-specify
dynamics -- conditions under which GP surrogates, which impose a fixed
covariance structure over the entire input domain, are known to
underfit around the true optimum \citep{li2024a}. Such settings
need frameworks that commit to no specific parametric or
distributional form for either the objective or its noise, while
remaining tractable at scale.}

  To address these limitations, this work take a different approach and develops a framework that models the $f$ without assuming a specific parametric family. By leveraging a flexible variational representation, the proposed approach enables more expressive uncertainty modeling and improved exploration in complex, high-dimensional search spaces. Our method enables full gradient flow through the internal optimization process, resulting in faster training. Critically, while the framework utilizes gradient-based updates for its internal neural components, it maintains the standard Bayesian optimization requirement of working strictly with zeroth-order evaluations (input-output pairs) of the black-box function $f$. Specifically, we leverage recent advances in variational mutual information estimation~\citep{song2020understanding,pmlr-v80-belghazi18a} to design a novel acquisition function that is directly approximated from data samples. This acquisition function is jointly trained with a separate neural network, termed the action-net, which explores the input space and guides the optimization process. Our main contributions are as follows:%
  \begin{enumerate}
\item We propose a general Bayesian optimization framework that does not assume any specific variational form for the surrogate model, allowing greater flexibility in modeling complex objective functions.
\item We introduce a novel variational acquisition function (which is used as a loss function) derived from a variational mutual information formulation, which provides an explicit and principled trade-off between exploration and exploitation.
 \item \textcolor{black} { We provide theoretical study of our framework, examining the
   estimation properties of its training objective and surrogate
   network under adaptively-sampled (non-i.i.d.)\ data, and show
   that, under stated conditions, networks achieving near-optimal
   performance exist within our hypothesis class.}
\item We benchmark VBO-MI across a diverse suite of baselines, including BNN based Bayesian optimization methods, and test it on high-dimensional synthetic functions and complex real-world control tasks such as Rover trajectory optimization, Pest control, and the Lunar lander problem. Our results show that VBO-MI provides competetive optimization performance  over various set of baselines with reduced complexity overhead.
\end{enumerate}
\paragraph{Notation.}
Random vectors are denoted by {bold lowercase} symbols (e.g., $\mathbf{x}_t \in \mathbb{R}^d$). 
The sequence of random vectors observed up to step $t$ is written as
$\mathbf{X}_{t} = [\mathbf{x}_1, \dots, \mathbf{x}_t] \in \mathbb{R}^{t \times d}.
$ A realization (sample) of a random vector uses a superscript index $(i)$. 
Thus
$\mathbf{x}_t^{(i)} \in \mathbb{R}^d,\,
    \mathbf{X}_t^{(i)} = [\mathbf{x}_1^{(i)}, \dots, \mathbf{x}_t^{(i)}] \in \mathbb{R}^{t \times d}.
$
Here, the index $i$ distinguishes samples in a mini-batch or dataset.
Moreover, scalar components of a realization are written as
$\mathbf{x}^{(i)}_t = [\, x^{(i)}_{1,t}, \dots, x^{(i)}_{d,t} \,].
$
Calligraphic letters (e.g. $\mathcal{D} \subset \mathbb{R}^d$) denote domains or sets.
Expectations are written as $\mathbb{E}_{P_X}[\cdot]$ under distribution $P_X$. 
The joint and product-of-marginals distributions are denoted by $P_{X,Y}$ and $P_X \times P_Y$, respectively. 
Probability density functions use lowercase notation, e.g. $p(\mathbf{x})$.
Mutual information between random vectors $\mathbf{X}$ and random variable ${y}$ is denoted
$
    I(\mathbf{x};y),
$
and $D_{\mathrm{KL}}(P\|Q)$ denotes the Kullback--Leibler divergence between two probability measures $P$ and $Q$. 
We use $\rightarrow$ to denote a Markov chain.

\section{Related Work}
 The literature on BO has increasingly focused on overcoming the computational limitations of traditional Gaussian Processes for high dimensional and large datasets and application and the sampling complexity of Bayesian neural networks. We categorize recent advancements into three distinct research frontiers.
 \paragraph{Scalable Bayesian optimization}
 
Gaussian Processes (GPs) remain the canonical choice for BO due to their well-calibrated uncertainty ~\citep{srinivas2010gaussian}. However, their $\mathcal{O}(T^3)$ complexity has led to a surge in scalable alternatives. Recent advancements include Vecchia approximations ~\citep{pmlr-v206-jimenez23a}, which achieve linear scaling by leveraging local conditioning sets, and Trust-Region Bayesian Optimization (TuRBO) ~\citep{NEURIPS2019_6c990b7a}, which decomposes the search space into local regions to handle high-dimensional tasks. More recently, Focalized Sparse GPs ~\citep{NEURIPS2024_da30215e} have been proposed to focus inducing points on promising regions of the input space. \textcolor{black} {In another line of work, Quadrature Fourier Features (QFF) ~\citep{mutny2018efficient} construct a deterministic Fourier feature map via Gauss--Hermite quadrature whose approximation error to the squared-exponential kernel decays exponentially with the number of features, reducing the cost of kernel inversion for (generalized) additive GPs from $\mathcal{O}(T^3)$ to $\mathcal{O}(T(\log T)^2)$ while preserving no-regret guarantees for Thompson sampling and GP-UCB, though the required feature count still scales exponentially with the effective dimension of the additive components. Budgeted Kernelized Bandit (BKB) ~\citep{pmlr-v99-calandriello19a} instead selects an adaptive dictionary of inducing points via ridge leverage score sampling for arbitrary (not necessarily stationary) kernels, achieving near-optimal regret with near-constant per-iteration complexity -- governed by the GP's effective dimension $d_{\text{eff}}$ rather than the number of past observations -- and without suffering from variance starvation. Batch Budgeted Kernel Bandits (BBKB) ~\citep{pmlr-v119-calandriello20a} extends this dictionary-based approach to batched, parallel candidate selection by adaptively delaying and enlarging updates to the inducing set, reaching near-linear total runtime while matching the same regret guarantees. While effective, these methods often rely on stationary kernel assumptions that struggle to capture the complex, non-linear landscape of neural network landscapes.}
 \paragraph{Variational and Sampling-Based BNNs.}
 
 Bayesian Neural Networks (BNNs) offer superior scalability and representation power compared to GPs. Recent efforts integrated BNNs using Hamiltonian Monte Carlo (HMC)~\citep{NIPS2016_a96d3afe, li2024a} for surrogate modeling of BO. To reduce the prohibitive cost of sampling, Variational Inference (VI) has emerged as a dominant paradigm. This includes the use of Variational Last Layers (VBLL) ~\citep{harrison2024vbll}, which treats only the final layer stochastically, and Prior-Fitted Networks (PFNs) ~\citep{muller2023pfns}, which use Transformers to amortize the cost of Bayesian inference. Despite these gains, many VI-based BO methods still rely on specific parametric families (e.g., Gaussian or Gamma) for the posterior, which can lead to model misspecification in complex search spaces.

 \paragraph{Information-Theoretic acquisition function desgin}
 
  Information-theoretic acquisition functions, such as \textit{Predictive Entropy Search} (PES) ~\citep{pmlr-v37-hernandez-lobatob15} and \textit{Max-Value Entropy Search} (MES) ~\citep{pmlr-v70-wang17e}, are prized for their principled approach to non-myopic exploration. Recent works like \textit{Joint Entropy Search} (JES) ~\citep{hvarfner2022joint} attempt to enhance robustness by modeling the joint density over optimal inputs and values; however, due to the inherent complexity of this joint density, these methods typically revert to Gaussian Process (GP) assumptions for simplification.
  
  Furthermore, in a recent work, ~\citet{cheng2025a} proposes a variational entropy search framework, and shows its connection expected improvement acquistion functin. However, the introduced acquisition functions lacks interpretability regarding the relative contributions of exploration and exploitation due to its complexity, while being constrained by specific variational families such as the Gamma distribution~\citep{cheng2025a}.
 In another recent work, ~\citet{pmlr-v267-ishikura25a} propose a variational lower bound maximization for entropy search. However, their formulation remains rooted in Gaussian Process framework and relies on combining over- and under-truncation approximation to maximize the bound. As it is mentioned in~\citet{pmlr-v267-ishikura25a} finding a full gradient-based approach remains a open future work.
 
 In contrast, our work reformulates the search for optimal queries as a \textcolor{black}{two level} optimization problem, which we solve through a novel \textit{actor-critic} inspired framework. Unlike previous work, our approach makes no Gaussian Process assumptions on $f$, allowing for greater expressiveness. Essentially our framework, jointly addresses the problem of uncertainty modeling and data-driven acquisition function variational mutual information estimation.  We utilize two neural networks trained in an alternating, gradient-based fashion: an the action-net to identify optimal inputs, and a \textit{neural information gain estimator} (the helper) which provides the feedback to compute the acquisition function in the form of loss of the action-net. By optimizing the Donsker-Varadhan variational bound through direct gradient flow, our framework eliminates the need for both the rigid kernel dependencies of GPs and the expensive posterior sampling of traditional BNNs.

\section{Problem Setting and Background}
\label{sec:background}
\subsection{Problem Setting}
We consider the task of maximizing an unknown, scalar-valued objective function 
$f:\mathcal{D}\!\rightarrow\!\mathbb{R}$ defined over a compact domain 
$\mathcal{D}\!\subseteq\!\mathbb{R}^d$.  
The function is a black box: its analytical form and gradients are unavailable, and each evaluation is expensive or noisy.  
Formally, the optimization problem is expressed as
\begin{equation}
    \mathbf{x}^{*} = \arg\max_{\mathbf{x}\in\mathcal{D}} f(\mathbf{x}),
    \label{eq:bo_objective}
\end{equation}
where $\mathbf{x}^{*}$ denotes the global maximizer of~$f$.

At iteration~$t$, the optimizer selects the input~$\mathbf{x}_t$ and receives a noisy observation
\begin{equation}
    y_t = f(\mathbf{x}_t) + \varepsilon_t,
    \label{eq:noisy_eval}
\end{equation}
where $\varepsilon_t$ denotes independent, zero-mean observation noise.
The history up to step~$t$ is denoted by
$\mathbf{X}_t = [\mathbf{x}_1,\ldots,\mathbf{x}_t]$.
Given a fixed evaluation budget, the goal here is to design a sequential policy that
efficiently identifies $\mathbf{x}^{*}$ using ~$T$ steps. 
At each of the T main iterations, $f$ is evaluated at a batch of $B$ points all drawn from the same query distribution, $\mathbf{x}_{t}$ at step t; these $B$ samples serve as Monte Carlo estimates of expectations (as discussed in section 4) at that step. Hence, we denote by $N = T \cdot B$ the total number of real function evaluations in $T$ steps, the history $\mathbf{X}_t = [\mathbf{x}_1,...,\mathbf{x}_t]$ records one query distribution per iteration, and each $\mathbf{x}_t$ representing the query at step $t$. We discuss further about this query distribution in the Section 4 of the paper.\footnote{As discussed in Sec.\ref{secvbo}, we also perform $W$ warm-up iterations which query a batch 
of $B$ points.}. Querying the function in batches is practical in many real world tasks~\citep{pmlr-v162-jain22a}, and has been pursued in many recent works in high-dimensional Bayesian Optimization~\citep{yun2025posterior,li2024a}.

\subsection{Bayesian Optimization.}
Bayesian optimization (BO) traditionally addresses the problem by maintaining a
probabilistic surrogate model over~$f$ and using an
\emph{acquisition function}~$\alpha(\mathbf{x}\mid \mathbf{X}_t,\mathbf{Y}_t)$ to determine the next input point in $T$ steps. Specifically, at each step, next input is chosen as :
\begin{flalign}\label{BO2}
   \mathbf{x}_t= \arg\max_{\mathbf{x}}\alpha(\mathbf{x}\mid \mathbf{X}_{t-1},\mathbf{Y}_{t-1}), \quad\quad t=1,...,T
\end{flalign}
At each step, BO updates the probabilistic model and chooses the next input based on previous observations up to step $T$. Various acquisition functions have been proposed in the literature, including Expected improvement~\citep{Jones1998EfficientGO}, UCB~\citep{JIA2021728}, Entropy search~\citep{Hennig_Osborne_Kersting_2022}, and \textcolor{black}{VES-Gamma}~\citep{cheng2025a}, among others.

\subsubsection{Exploration-Exploitation trade-off}
Since our goal is to maximize \( f(\mathbf{x}) \) with as few queries as possible, BO seeks to simultaneously approximate \( f \) globally while identifying its maximum. This corresponds to an exploration-exploitation strategy, where the posterior mean of the surrogate function drives the exploitation, and the posterior variance or uncertainty of the surrogate guides the exploration. As discussed, this is done by optimizing an acquisition function, which guides the sampling of the best next query.
Consider a set of sampled input points $\mathbf{X}_{T}=[\mathbf{x}_{1},...,\mathbf{x}_{T}]$, and $\mathbf{Y}_{T}=[y_{1},...,y_{T}]$ their corresponding noisy observations, where $y_{t}=f(\mathbf{x}_{t})+\epsilon_{t}$, and $\epsilon_{t}\sim \mathcal{N}(0,\sigma_{t}^{2})$, and $\mathbf{f}_{\mathbf{X}_T}=[f(\mathbf{x_{1}}),...,f(\mathbf{x}_{T})]$ are unknown function values. We aim to choose a set of points that provides the maximal information to approximate $f$, and approximate it with the fewest possible queries. To do that, the concept of information gain~\citep{srinivas2010gaussian} has been used in the literature, which quantifies how much information is gained about $f$ through observations $\mathbf{Y}_{T}$ as follows:
\begin{equation}\label{eq:0}
    I(\mathbf{Y}_{T};f)=H(\mathbf{Y}_{{T}})-H(\mathbf{Y}_{{T}}\mid f),
\end{equation}
$H(\cdot)$ and $H(\cdot\mid \cdot)$ denote the entropy and conditional entropy.%
Considering the GP assumption, it can be seen that we have: \begin{equation}F(\mathbf{X}_{T})=I(\mathbf{Y}_{{T}};\mathbf{f}_{\mathbf{X}_T})=\frac{1}{2}\log (\mid \mathbf{I}+\sigma^{-2}\mathbf{K}_{\mathbf{X}_{T}} \mid),\end{equation} where $\mathbf{K}_{\mathbf{X}_T}=[k(\mathbf{x},\mathbf{x}^{'})]_{\mathbf{x},\mathbf{x}^{'}\in {\mathbf{X}_{T}}}$, where $k(\mathbf{x},\mathbf{x}^{'})$ represent covariance function between two input points $\mathbf{x}$, and $\mathbf{x}^{'}.$\footnote{See Appendix \ref{GBO}.} Here the goal is to select points that maximize information gain. Although finding these points among different $\mathbf{X}\subset \mathcal{D}$, $\mid \mathbf{X}\mid \leq T$ is NP hard, it can be done by sequentially maximizing $F$ as follows~\citep{NEURIPS2023_3feb8ed3,srinivas2010gaussian}:

Imagine that we have observed up to $t-1$ input points ( set $\mathbf{X}_{t-1}$). The optimal selection strategy for choosing the next input is such that \begin{equation}\mathbf{x}_{t}=\arg \max_{\mathbf{x}\in \mathcal{D}}F(\mathbf{X}_{t-1}\cup \{\mathbf{x}\}),\end{equation}  In case of Gaussian assumption on $f$ this reduces to~\citep{srinivas2010gaussian} \begin{equation} \label{eq:1}\mathbf{x}_{t}=\arg\max_{\mathbf{x} \in \mathcal{D}} \sigma_{t-1}(\mathbf{x}),\end{equation} where $\sigma_{t-1}$ is the posterior standard deviation at point $\mathbf{x}$ after observing previous points. In this case posterior variance has a closed-form (See Appendix \ref{GBO}). In next section we discuss how we will solve for exploration term to a general case other than Gaussian.
Although using the strategy of using only~\Eqref{eq:1} is a good way to
explore the function globally and approximate it, it is not particularly well suited to find the maximum of the function $f$. In this case, the objective is to~\citep{srinivas2010gaussian} quickly identify locations
$\mathbf{x}$ where
$f(\mathbf{x})$ attains high values, allowing the algorithm to focus sampling efforts there and effectively exploit the current knowledge. The idea is to select points that maximize posterior mean at sampled point. In case of Gaussian assumption~\citep{srinivas2010gaussian} we have:
\begin{equation}\label{eq:2}
    \mathbf{x}_{t}=\arg\max_{\mathbf{x}\in \textcolor{black}{\mathcal{D}}} \mu_{t-1}(\mathbf{x}),
\end{equation}
to maximize the posterior GP mean, $\mu_{t}(\mathbf{x})$, of the function $f$ so far, which has closed-form expression (Appendix \ref{GBO}). To take into account both exploring and exploiting strategies in~\Eqref{eq:1}, and~\Eqref{eq:2} GP-UCB rule has been applied \textcolor{black}{as~\citep{srinivas2010gaussian,iwazaki2025improved,whitehouse2023sublinearregretgpucb,pmlr-v70-chowdhury17a}:}%
\begin{equation}\label{eq:3}
    \mathbf{x}_{t}=\arg\max_{\mathbf{x}\in \textcolor{black}{\mathcal{D}}} \sqrt{\beta}\,\sigma_{t-1}(\mathbf{x})+ \mu_{t-1}(\mathbf{x}),
\end{equation}
where $\beta$ is a constant. This sampling rule sequentially selects points $\mathbf{x}$
such that the selected input points maximize an upper quantile of the marginal of the posterior on $f$ up to time $t-1$. In other words, with high probability:
$f(\mathbf{x})\leq \sqrt{\beta}\sigma_{t-1}(\mathbf{x})+ \mu_{t-1}(\mathbf{x})$
which accounts for choosing the name of GP-UCB bound.

Despite their success, GP-based BO methods scale poorly with the number of observations and rely on restrictive kernel assumptions, while BNN-based approaches often incur significant computational overhead due to posterior sampling and inner-loop acquisition optimization. These limitations motivate the need for BO frameworks that are both scalable and fully differentiable. In the next section, we introduce VBO-MI, a gradient-based BO framework that addresses these challenges via variational mutual information estimation.

\section{Proposed Black-Box Optimization Framework with Variational MI Estimation}\label{secvbo}
To develop our framework, \textcolor{black}{our} first observation is that the policy described in ~\Eqref{eq:3} implicitly depends on the MI between the unknown function \(f\) and the observed rewards \(\mathbf{y}_t\), conditioned on the selected actions \(\mathbf{x}_t\), as established in~\Eqref{eq:1} and~\Eqref{eq:3}. In the following sections, we leverage the results of ~\citet{pmlr-v80-belghazi18a} to derive a variational bound on the exploration term in~\Eqref{eq:3}. We then introduce a data-driven approach for estimating the exploitation term. Building on these two components, we design a loss function that enables the joint training of two neural networks aimed at solving our \textcolor{black}{black-box optimization} problem: one network responsible for identifying promising actions, and a second auxiliary (helper) network that estimates information gain and provides guidance for training both models that are trained in alternative fashion. 

\subsection{Variational Mutual Information} 
To incorporate Mutual Information (MI) into our optimization strategy without relying on restrictive assumptions (e.g., Gaussianity), we employ a variational lower bound based on the Donsker--Varadhan (DV) representation~\citep{pmlr-v80-belghazi18a,https://doi.org/10.1002/cpa.3160280102} of the Kullback--Leibler (KL) divergence.

Let $P$ and $Q$ be two probability measures over the same measurable space. The DV representation expresses the KL divergence as
\begin{flalign}\label{Fdiv}
D_{\mathrm{KL}}(P \,\|\, Q) = \sup_{D} \, \mathbb{E}_{P}[D] - \log \mathbb{E}_{Q}[e^{D}], 
\end{flalign}
where the supremum is taken over all measurable functions \(D\) such that the expectations are finite~\citep{pmlr-v80-belghazi18a,https://doi.org/10.1002/cpa.3160280102}. Now, consider random vector $\mathbf{x}$ and random variable $y$ with joint distribution $P_{\mathbf{x}, y}$, and let $Q = P_{\mathbf{x}}\times P_{{y}}$ denote the product of marginals. Then, the MI between $\mathbf{x}$ and ${y}$ can be written as
\begin{flalign}
&I(\mathbf{x}; {y}) = D_{\mathrm{KL}}\!\big(P_{\mathbf{x},{y}} \,\|\, P_{\mathbf{x} \times {y}}\big) 
=\\& \sup_{D} \, \mathbb{E}_{P_{\mathbf{x},{y}}}[D(\mathbf{x},{y})] - \log \mathbb{E}_{P_{\mathbf{x}}\times P_{{y}}}[e^{D(\mathbf{x},{y})}]. \label{MIbound}
\end{flalign}
In practice, we parameterize $D(\mathbf{x}, {y})$ using a Neural Network $D_\theta(\mathbf{x}, {y})$ with parameters $\theta$, and estimate the expectations via sampling. This yields the following variational lower bound:
\begin{equation}\label{IGbound}
I(\mathbf{x}; {y}) \geq \max_{\theta \in \Theta} \, \mathbb{E}_{P_{\mathbf{x},{y}}}[D_\theta(\mathbf{x},{y})] - \log \mathbb{E}_{P_{\mathbf{x}} \times P_{{y}}}[e^{D_\theta(\mathbf{x},{y})}].
\end{equation}
The goal here would be to maximize this variational lower (by finding the optimal weights of a neural network through gradient descent) to approximate the mutual information.

Given samples of distributions $\{(\mathbf{x}^{(i)}, {y}^{(i)})\}_{n=1}^N$, the Neural Network $D_{\theta}$ is trained by minimizing the following loss function:
\begin{flalign}\label{miestimate}
L_{D_{\theta}} = - \frac{1}{B} \sum_{i=1}^B D_\theta(\mathbf{x}^{(i)}, y^{(i)}) + \log\left( \frac{1}{B^2} \sum_{i=1}^B \sum_{j=1}^B e^{D_\theta(\mathbf{x}^{(i)}, y^{(j)})} \right),
\end{flalign}
\normalsize
where \(B\) denotes the minibatch size. This formulation provides a scalable, sample-based estimator of MI, and integrate into our \textcolor{black}{black-box} optimization framework to guide exploration.%

\subsection{Exploration and Mutual Information Gain Estimation}
Using results of previous section, our goal here is to find a variational bound for exploring the function $f$.
Based on the defined setting, at time step $t$ we have access to $\mathbf{X}_{t}=[\mathbf{x}_{1},...,\mathbf{x}_{t}],$ of current chosen actions. We also, assume that these actions are chosen according to $P_{\mathbf{X}_{t}}$. Moreover, corresponding observations are $\mathbf{Y}_{t}=[{y}_{1},..,{y}_{t}]$, where ${y}_{t}=f(\mathbf{x}_{t})+{\epsilon_{t}}$, and
$\epsilon_{t}$ is zero mean, independent, identically distributed (i.i.d.) noise.  Our goal in this section is to find a lower bound on the information gain quantity which corresponds to the exploration term in~\Eqref{eq:3}, which is:
\begin{equation}\label{eq:MI-exp}
    I_{G}=I(f(\mathbf{X}_t);\mathbf{Y}_{t}),
\end{equation}
where we have\footnote{Depending on its argument, $f(\mathbf{x}_{t})$ represents a scalar value, or a vector, $f(\mathbf{X}_{t})$.} $\mathbf{Y}_{t}=f(\mathbf{X}_t)+{\mathcal{E}}_{t}$, $\mathcal{E}_{t}=[\epsilon_{i}]_{i=1}^{t}$, and $f(\mathbf{X}_t)=[f(\mathbf{x}_{i})]_{i=1}^{t}$.\footnote{ Note that here we don't have a Gaussian assumption (for $f$ or noise), and therefore we cannot use the closed form for Gaussian distribution entropy)}
To find a simple bound on this quantity using the bound of~\citet{pmlr-v80-belghazi18a}, and~\Eqref{MIbound}, we first note that we have the following Markov chain\textcolor{black}{~\citep{cover2012elements}}:
\begin{equation}\label{markov0}
    \mathbf{X}_{t}\longleftrightarrow f(\mathbf{X}_{t})\longleftrightarrow\mathbf{Y}_{t}\longleftrightarrow y_t.
\end{equation}
Based on the data-processing inequality~\citep{cover2012elements} we have 
\begin{equation}\label{IG-f-1}
    I(\mathbf{X}_{t};{y}_{t})\leq I_{G}=I(f(\mathbf{X}_t),\mathbf{Y}_{t}).
\end{equation}
By expanding the mutual information via chain rule we have:
\begin{equation}
    I(\mathbf{X}_{t};y_t)=I(\mathbf{x}_{t};y_t)+I(\mathbf{x}_{t-1};y_{t}\mid \mathbf{x}_{t})+I(\mathbf{x}_{t-2};y_t\mid \mathbf{x}_{t},\mathbf{x}_{t-1})+\dots+I(\mathbf{x}_{1};y_{t}\mid \mathbf{x_{t}},\dots \mathbf{x}_{2}),
\end{equation}
except for the first term, all the conditional terms are zero due to presence of condition on $\mathbf{x}_{t}$, and conditional mutual information\footnote{Given $\mathbf{x}_t$, the observation $y_t = f(\mathbf{x}_t) + \varepsilon_t$
is determined directly through the fixed objective $f$ and on the
i.i.d.\ noise $\varepsilon_t$. Assuming sequential selection of $\mathbf{x}_{t}$, neither of the previous $\mathbf{x}_{t-k}$
 $k \geq 1$ affect observation if $\mathbf{x}_{t}$ is known.
Therefore $\mathbf{x}_{t-k} \perp y_t \mid \mathbf{x}_t$ for all $k \geq 1$.} definition~\citep{cover2012elements}. On the other hand, by using the variational representation in~\Eqref{Fdiv}:
\begin{equation}\label{eq:IG-f}
   I_{\theta}(\mathbf{x}_{t};{y}_{t})= \sup_{\theta\in\Theta}\mathbb{E}_{P_{\mathbf{x}_{t},{y}_{t}}}[D_{\theta}(\mathbf{x}_{t},{y}_{t})]-\log (\mathbb{E}_{ P_{\mathbf{x}_{t}}\times P_{{y}_{t}}}[e^{D_{\theta}(\mathbf{x}_{t},{y}_{t})}]).
\end{equation}
By noting that $I_{\theta}(\mathbf{x}_{t},{y}_{t})\leq I(\mathbf{x}_{t},{y}_{t}),$ and combining~\Eqref{IG-f-1} and~\Eqref{eq:IG-f} we have the following bound on the exploration term, or \textcolor{black}{$I_{G}$}bound in~\Eqref{eq:3}:
\begin{equation}\label{eq:IG-f+1}
    \mathbb{E}_{P_{\mathbf{x}_{t},{y}_{t}}}[D_{\theta}(\mathbf{x}_{t},{y}_{t})]-\log (\mathbb{E}_{ P_{\mathbf{x}_{t}}\times P_{{y}_{t}}}[e^{D_{\theta}(\mathbf{x}_{t},{y}_{t})}]) \leq I_{G}.
\end{equation}
We choose $y_{t}$, the current observation, rather than the full history $\mathbf{Y}_{t}$: by the Markov chain in~\Eqref{markov0}, $y_{t}$ captures the \emph{marginal} information gained at step $t$ given the already-observed history, avoiding double-counting of information already incorporated in previous iterations. This choice allows the 
exploration signal to remain focused on the incremental uncertainty 
reduction provided by the newest query, rather than considering redundant historical observations which would also increase the complexity of evaluating the mutual information bound.
\subsection{A Data-Driven Exploitation Term}
To take into account the exploitation capability of our algorithm, we also estimate exploitation term using a data-driven approach. The goal is use these results in our acquisition function (loss of a neural network) to choose the best actions to approximate and maximize $f$. In other word, our loss function should simultaneously consider the balance of exploration and exploitation in searching the input space. In the previous subsection, we have seen how the information gain quantity can be leveraged to consider the exploration of the input space. To account for the exploitation capability of our algorithm, we need to consider finding the maximum of $f$ by utilizing our current knowledge. A straightforward approach to find the maximum is to consider the expectation of the function $f$ conditioned on previous points (posterior of $f$), e.g:
\begin{equation}
    \mathbb{E}[f(\mathbf{x}_{t})\mid y_{t-1},..,y_{1},\mathbf{x}_{t-1},..,\mathbf{x}_{1}].
\end{equation}
Furthermore, we note that by assuming that $f$ and noise are independent, and noise has a zero mean, we have:
\begin{flalign}
   &\mathbb{E}[f(\mathbf{x}_{t})\mid y_{t-1},..,y_{1},\mathbf{x}_{t-1},..,\mathbf{x}_{1}]=\nonumber \\&\mathbb{E}[f(\mathbf{x_{t}})+\epsilon_{t}\mid y_{t-1},..,y_{1},\mathbf{x}_{t-1},..,\mathbf{x}_{1}]= \\&\mathbb{E}[y_{t}\mid y_{t-1},..,y_{1},\mathbf{x}_{t-1},..,\mathbf{x}_{1}].\label{explt1}
\end{flalign}
In practice, computing $\mathbb{E}[y_t \mid y_{t-1},\dots,y_1,
\mathbf{x}_{t-1},\dots,\mathbf{x}_1]$ analytically requires a 
parametric surrogate such as a Gaussian Process, which reintroduces 
the assumptions we seek to avoid.  Instead, we approximate this 
conditional expectation directly from data using a neural network 
surrogate $\hat{f}_\psi : \mathcal{D} \to \mathbb{R}$, trained by 
minimizing the mean squared error on all past observations in replay buffer 
$\mathcal{H} = \{(\mathbf{x}^{(i)}_{t'}, y^{(i)}_{t'})\},\, t^{'}\leq t$:
\begin{equation}
    \mathcal{L}_{\mathrm{MSE}}(\psi) 
    = \frac{1}{B}\sum_{i=1}^{B}
      \bigl(\hat{f}_\psi(\mathbf{x}^{(i)}) - y^{(i)}\bigr)^2,
      \quad 
      \{(\mathbf{x}^{(i)}, y^{(i)})\}_{i=1}^{B} 
      \overset{\mathrm{i.i.d.}}{\sim} \mathrm{Uniform}(\mathcal{H}).
\end{equation}
 The trained surrogate $\hat{f}_\psi(\mathbf{x})$ then serves as a 
point estimate of $\mathbb{E}[y_t \mid \mathcal{H}]$, which we call it $\hat{y}_{t}$ at any proposed 
action $\mathbf{x}$, and its gradient $\nabla_\mathbf{x}\hat{f}_\psi$ 
provides a differentiable exploitation signal for updating $E_\phi$ 
without any parametric assumption on $f$ and without querying the 
black-box function.

\begin{figure*}[t]
\centering
\resizebox{1.0\textwidth}{!}{%
\tikzset{every picture/.style={line width=0.75pt}}
\begin{tikzpicture}[x=0.75pt,y=0.75pt,yscale=-1,xscale=1]

\draw [color={rgb,255:red,52;green,152;blue,219},draw opacity=0.30,
       fill={rgb,255:red,52;green,152;blue,219},fill opacity=0.13]
  (0,90) rectangle (310,455);

\draw [color={rgb,255:red,26;green,188;blue,156},draw opacity=0.28,
       fill={rgb,255:red,26;green,188;blue,156},fill opacity=0.10]
  (308,90) rectangle (608,455);

\draw [color={rgb,255:red,192;green,57;blue,43},draw opacity=0.22,
       fill={rgb,255:red,192;green,57;blue,43},fill opacity=0.07]
  (606,90) rectangle (1100,455);

\draw (150,108) node {\Large\textbf{Action-Net} $E_\phi$};
\draw (458,108) node {\Large\textbf{Surrogate} $\hat{f}_\psi$};
\draw (878,108) node {\Large\textbf{Helper} $D_\theta$};

\draw [fill={rgb,255:red,189;green,195;blue,199},fill opacity=0.65]
  (8,196) rectangle (82,240);
\draw (45,210) node {\large Seed};
\draw (45,228) node {$\mathbf{s}_t \!\sim\! \mathcal{N}(\mathbf{0},\mathbf{I})$};

\draw [fill={rgb,255:red,52;green,152;blue,219},fill opacity=0.40]
  (94,158) -- (248,178) -- (248,262) -- (94,282) -- cycle;
\draw (165,206) node {\large Action-net};
\draw (165,225) node {$E_{\phi}(\mathbf{s}_t)$};

\draw [line width=1.5,->] (82,218) -- (93,218);

\draw [fill={rgb,255:red,184;green,233;blue,134},fill opacity=0.55]
  (258,200) -- (300,200) -- (300,236) -- (258,236) -- cycle;
\draw (279,210) node {\small Input};
\draw (279,225) node {$\mathbf{x}_t^{(i)}$};

\draw [line width=1.5,->] (248,218) -- (257,218);

\draw [fill={rgb,255:red,26;green,188;blue,156},fill opacity=0.35,
       rounded corners=8pt]
  (320,185) -- (498,185) -- (498,253) -- (320,253) -- cycle;
\draw (409,207) node {\large Surrogate network};
\draw (409,228) node {$\hat{f}_{\psi}(\mathbf{x}_t^{(i)})\!=\!\hat{y}_t^{(i)}$};

\draw [line width=1.5,->] (300,218) -- (319,218);

\draw [fill={rgb,255:red,184;green,233;blue,134},fill opacity=0.55]
  (508,200) -- (560,200) -- (560,236) -- (508,236) -- cycle;
\draw (534,210) node {\small Pred.};
\draw (534,225) node {$\hat{y}_t^{(i)}$};

\draw [line width=1.5,->] (498,218) -- (507,218);

\draw [fill={rgb,255:red,192;green,57;blue,43},fill opacity=0.28,
       rounded corners=8pt]
  (625,175) -- (910,175) -- (910,265) -- (625,265) -- cycle;
\draw (737,207) node {\large LSTM + MLP};
\draw (737,230) node {$D_{\theta}\!\left(\mathbf{x}_t^{(i)},\,\hat{y}_t^{(i)}\right)$};

\draw [line width=1.5,->] (560,218) -- (624,218);

\draw [fill={rgb,255:red,192;green,57;blue,43},fill opacity=0.22,
       rounded corners=5pt]
  (925,192) -- (1070,192) -- (1070,245) -- (925,245) -- cycle;
\draw (997,210) node {$L_{D_\theta}$};
\draw (997,230) node {\small (DV variational bound)};

\draw [line width=1.5,->] (910,218) -- (924,218);

\draw [line width=1.3,->,dashed,
       color={rgb,255:red,192;green,57;blue,43},draw opacity=0.90]
  (947,192) -- (947,143) -- (717,143) -- (717,175);
\draw (840,133) node [color={rgb,255:red,192;green,57;blue,43}]
  {$\nabla_\theta L_{D_\theta}$};

\draw [fill={rgb,255:red,26;green,188;blue,156},fill opacity=0.25]
  (458,128) -- (518,148) -- (458,168) -- (398,148) -- cycle;
\draw (458,148) node {\large $L_{E_\phi}$};

\draw [line width=1.2,->,dashed,draw opacity=0.70]
  (459,185) -- (459,168);%
\draw (385,138) node {\small exploit};

\draw [line width=1.2,->,dashed,draw opacity=0.70]
  (695,175) -- (695,148) -- (520,148);
\draw (613,138) node {\small explore ($I_\theta$)};

\draw [line width=1.4,->,dashed,
       color={rgb,255:red,41;green,128;blue,185},draw opacity=0.92]
  (398,148) -- (165,148) -- (165,158);
\draw (278,136) node [color={rgb,255:red,41;green,128;blue,185}]
  {$\nabla_\phi L_{E_\phi}$};

\draw [line width=1.0,dashed,
       color={rgb,255:red,127;green,140;blue,141},rounded corners=4pt]
  (252,285) -- (1090,285) -- (1090,440) -- (252,440) -- cycle;
\draw (511,300) node [color={rgb,255:red,127;green,140;blue,141}]
  {\small\textit{Single real oracle query per BO iteration}};

\draw [line width=1.5,->] (279,236) -- (279,325);
\draw [line width=1.5,->] (279,325) -- (318,325);

\draw [fill={rgb,255:red,230;green,126;blue,34},fill opacity=0.52,
       rounded corners=7pt]
  (320,310) -- (498,310) -- (498,360) -- (320,360) -- cycle;
\draw (409,326) node {\large\textbf{black-box oracle} $f$};
\draw (409,344) node {$y_t^{(i)} \!=\! f(\mathbf{x}_t^{(i)}) \!+\! \epsilon^{(i)}$};

\draw [line width=1.5,->] (498,334) -- (548,334);
\draw (560,325) node {\small $y_t^{(i)}$};

\draw [line width=1.5,->] (576,334) -- (618,334);

\draw [fill={rgb,255:red,149;green,165;blue,166},fill opacity=0.55,
       rounded corners=7pt]
  (620,310) -- (870,310) -- (870,360) -- (620,360) -- cycle;
\draw (745,326) node {\large Replay buffer};
\draw (745,345) node {$\mathcal{H} \leftarrow \mathcal{H} \cup \{(\mathbf{x}_t^{(i)}, y_t^{(i)})\}$};

\draw [line width=1.4,->,
       color={rgb,255:red,22;green,160;blue,133}]
  (745,310) -- (745,278) -- (409,278) -- (409,253);
 \draw (588,270) node [align=center,
        color={rgb,255:red,22;green,160;blue,133}]
   {\small train $\hat{f}_\psi$ on $\mathcal{L}_{\mathrm{MSE}}$ ($K_s$ gradient steps)};

\end{tikzpicture}%
}
\caption{%
VBO-MI framework. 
An \emph{action-net} $E_\phi$ maps random seeds $\mathbf{s}_t$ to candidate inputs $\mathbf{x}_t^{(i)}$.
A \emph{surrogate} $\hat{f}_\psi$ predicts rewards $\hat{y}_t^{(i)}$, enabling
oracle-free updates of the helper $D_\theta$ ($K_a$ steps) and action-net ($K_b$ steps)
via the acquisition loss $L_{E_\phi}$, which balances exploitation (surrogate mean)
and exploration (variational MI estimate $I_\theta$).
One real oracle query per iteration yields $y_t^{(i)}$, which replenishes the replay
buffer $\mathcal{H}$ used to retrain $\hat{f}_\psi$ via MSE ($K_s$ steps)\textcolor{black}{(real queries happen after internal model updates)}.
Dashed arrows indicate gradient flows.%
}
\label{fig:vertical_rotated}
\end{figure*}
In the next part, we define a loss function and state our main framework, and algorithm for the proposed variational \textcolor{black}{black-box optimization}.

\subsection{Proposed Variational \textcolor{black}{Black-Box Optimization} with MI and Algorithm}

In this section, we develop our proposed variational acquisition
function and algorithm to find the best possible inputs of $f$ at
each step.  More specifically, we aim to solve the \textcolor{black}{black-box}
optimization problem described in~\Eqref{eq:bo_objective} in $T$
steps with batch queries of function $f$. To do so, we find the best
possible actions for $f$ using a neural network (called action-net,
$E_{\phi}$), based on maximizing an acquisition function (minimizing
as a loss function as described later), which we call $g$, through
the following optimization problem:
\textcolor{black}{
\begin{equation}\label{mnet}
\phi^* = \arg\max_\phi\, g\big(\phi, \theta^*(\phi)\big), \qquad
\theta^*(\phi) = \arg\max_\theta\, g(\phi,\theta).
\end{equation}}
where $\phi$, $\theta$ represent parameters of two neural networks,
$E_{\phi}$ and $D_{\theta}$, which we call the action-net and Helper
network respectively.  These networks are trained in an alternating
manner (similar to an actor-critic architecture) using gradient
descent. \textcolor{black}{Both levels of~\Eqref{mnet} maximize the same objective the inner level tightens the Donsker–Varadhan lower bound on mutual information for the current action-net (Phase I, Section 4.4.1), and the outer level then maximizes exploitation together with this tightened exploration term (Phase II, Section 4.4.2).} Here $g(\phi,\theta)$ denotes the variational acquisition objective combining exploitation and exploration, defined explicitly in $\Eqref{loss-f}$ after the two components are developed in Sections \ref{sec4.4.1} and \ref{sec4.4.2}.  To the best of our knowledge, this is the first work in
this area that applies variational formulations to design a fully
gradient-based \textcolor{black}{black-box} \textcolor{black}{optimization} framework using deep learning,
in contrast to earlier works that either rely on Hamiltonian Monte
Carlo sampling with BNNs~\citep{pmlr-v267-ishikura25a} or on specific
variational distribution families~\citep{cheng2025a}.

\subsubsection{Phase I: Surrogate Training and Helper Network Update}\label{sec4.4.1}
At each BO iteration $t$, the algorithm maintains a replay buffer
$\mathcal{H} = \{(\mathbf{x}^{(i)}_{t'}, y^{(i)}_{t'})\}$ of all
past input and output observations collected from $f$.  Before updating
the helper network $D_{\theta}$, we first use $\mathcal{H}$ to train
a lightweight surrogate network $\hat{f}_{\psi} : \mathcal{D} \to \mathbb{R}$, a
feed-forward neural network that approximates the conditional
expectation $\mathbb{E}[y_t \mid \mathbf{x}, \mathcal{H}]$. The optimal point estimator is:
\begin{equation}\label{eq:surr-pop}
    \psi^* = \operatorname*{arg\,min}_{\psi}\;
    \mathbb{E}_{(\mathbf{x},y)\sim\mathcal{H}}
    \bigl[\bigl(\hat{f}_{\psi}(\mathbf{x}) - y\bigr)^{2}\bigr],
\end{equation}
since the minimizer of the expected squared error is the conditional
mean $\hat{f}_{\psi^*}(\mathbf{x}) = \mathbb{E}[y \mid \mathbf{x},
\mathcal{H}]$.
Because the joint distribution under $\mathcal{H}$
is unknown, we approximate the objective
in~\Eqref{eq:surr-pop} by its sample average over a mini-batch
$\{(\mathbf{x}^{(i)}, y^{(i)})\}_{i=1}^{B}
\overset{\mathrm{i.i.d.}}{\sim} \mathrm{Uniform}(\mathcal{H})$:

\begin{equation}\label{eq:surr-loss}
    \mathcal{L}_{\mathrm{MSE}}(\psi)
    = \frac{1}{B}\sum_{i=1}^{B}
      \bigl(\hat{f}_{\psi}(\mathbf{x}^{(i)}) - y^{(i)}\bigr)^{2},
    \quad
    \{(\mathbf{x}^{(i)},y^{(i)})\}_{i=1}^{B}
    \overset{\mathrm{i.i.d.}}{\sim}
    \mathrm{Uniform}(\mathcal{H}).
\end{equation}
Because $\hat{f}_\psi$ is trained exclusively on observed
input--output pairs and its gradient $\nabla_{\mathbf{x}}\hat{f}_\psi$
is computed through the surrogate network rather than through $f$,
the black-box condition on $f$ is never violated.

Once the surrogate is updated, its predictions serve as a
query-free substitute for real function evaluations inside the helper
update.  Specifically, the action-net $E_{\phi}$ proposes a batch of
actions $\{\mathbf{x}^{(i)}_t\}_{i=1}^B$, and the surrogate
produces corresponding predictions:
\begin{equation}
    \hat{y}^{(i)}_t = \hat{f}_{\psi}(\mathbf{x}^{(i)}_t), \quad i = 1, \dots, B.
\end{equation}
These surrogate predictions $\hat{y}_t$ replace real observations in
the helper's loss, allowing $D_\theta$ to be updated without
additional queries to $f$.  The helper network $D_{\theta}$ is then
trained to estimate the \textcolor{black}{surrogate information signal}, between the proposed actions
and these \textcolor{black}{predicted observations via surrogate} using the Donsker-Varadhan
variational bound~\Eqref{MIbound}:
\begin{flalign}\label{eq:-mi}
    L_{D_{\theta}}(\mathbf{x}_{t}, \hat{y}_{t})
    =\;-I_{\theta}(\mathbf{x}_{t}, \hat{y}_{t})
    =\;-\mathbb{E}_{P_{\mathbf{x}_{t}, \hat{y}_{t}}}
             [D_{\theta}(\mathbf{x}_{t}, \hat{y}_{t})]
     +\log\mathbb{E}_{P_{\mathbf{x}_{t}}\times P_{\hat{y}_{t}}}
             [e^{D_{\theta}(\mathbf{x}_{t}, \hat{y}_{t})}],
             \end{flalign}
where the weights $\phi$ of the action-net are frozen during this
update, and $\hat{f}_{\psi}$ is kept fixed.  This phase runs for
$K_a$ gradient steps and produces an updated estimate
$I_\theta(\mathbf{x}_t, \hat{y}_t)$ of the information gain, which
is then passed to Phase~II.

\begin{remark}
   \textcolor{black}{ An approximation for the information gain using \Eqref{eq:IG-f+1} for the helper network is :\begin{equation*}
          L_{D_{\theta}}(\mathbf{x}_{t}, {y}_{t})
    =\;-I_{\theta}(\mathbf{x}_{t}, {y}_{t})
    =\;-\mathbb{E}_{P_{\mathbf{x}_{t}, {y}_{t}}}
             [D_{\theta}(\mathbf{x}_{t}, {y}_{t})]
     +\log\mathbb{E}_{P_{\mathbf{x}_{t}}\times P_{{y}_{t}}}
             [e^{D_{\theta}(\mathbf{x}_{t}, {y}_{t})}],
   \end{equation*} which is we need to use the function outputs. However, to reduce the number of required queries from the main function, in training the helper network, we instead replace $y_t$ by the surrogate output $\hat{y}_{t}$.
   As discussed later (also see Alg.\ref{alg:vbo}) see allows us to avoid the additional oracle queries that direct evaluation inside the inner loops would require, and effectively only require one batch query per iteration. We call $I_{\theta}(\mathbf{x}_{t};\hat{y}_{t})$, as a surrogate information signal in our framework. We perform ablation studies to in numerical section to consider the effect of this surrogate instead of only using direct queries in our framework. }
\end{remark}
\subsubsection{Phase II: Action-Net Update via Proposed Loss Function}\label{sec4.4.2}

With the helper network $D_\theta$ updated and frozen, the action-net
$E_{\phi}$ is trained to propose inputs that jointly maximise
exploitation of the surrogate's predicted reward and exploration
measured by the \textcolor{black}{surrogate information signal} from Phase~I.  The action-net maps a
random seed $\mathbf{s}_t \sim \mathcal{N}(0, I)$ to a candidate
action:
\begin{equation}
    E_{\phi}(\mathbf{s}_{t}) = \mathbf{x}_{t},
\end{equation}
and its loss combines the two objectives.  The \emph{exploitation}
term is the surrogate's mean predicted reward at the proposed
actions, providing a differentiable signal that pushes $E_\phi$
toward high-reward regions without querying $f$.  The
\emph{exploration} term is the negative variational MI estimate
$-I_\theta(\mathbf{x}_t, \hat{y}_t)$ from Phase~I, which encourages
$E_\phi$ to propose inputs that are informative relative to the
surrogate's current predictions.  Together, by combining the
exploitation term in~\Eqref{explt1} with the exploration bound
in~\Eqref{eq:IG-f+1}, the loss (or acquisition function) takes the form:
\begin{equation}\label{loss-f}
    L_{E_{\phi}}(\mathbf{x}_{t},\hat{y}_{t})=-g({\phi,\theta})= -\mathbb{E}[y_t\mid y_{t-1},\dots y_1,\mathbf{x}_{t-1},\dots \mathbf{x}_1]
    -\sqrt{\beta}\,\mathbb{E}_{P_{\mathbf{x}_{t},\hat{y}_{t}}}
        [D_{\theta}(\mathbf{x}_{t},\hat{y}_{t})] 
    +\sqrt{\beta}\log\mathbb{E}_{P_{\mathbf{x}_{t}}\times P_{\hat{y}_{t}}}
        [e^{D_{\theta}(\mathbf{x}_{t},\hat{y}_{t})}]
\end{equation}
\normalsize
where $\beta$ controls the trade-off between exploration and
exploitation, and $L_{E_{\phi}}$ is a \emph{loss to be minimized} by $E_\phi$,
equivalent to maximizing $\hat{f}_\psi(\mathbf{x}_t) +
\sqrt{\beta}\,I_\theta(\mathbf{x}_t, \hat{y}_t)$.  

Here the weights of $\theta$ and $\psi$ are frozen throughout.
Since both losses contain only expectation terms, they are
implemented as sample averages over the batch.  Specifically we approximate it as:
\begin{equation}
\begin{aligned}
    L_{E_{\phi}}
    =\textcolor{black}{-\frac{1}{B}\sum_{i=1}^{B}\hat{y}^{(i)}_{t}}
    -\sqrt{\beta}\bigg[\frac{1}{B}\sum_{i=1}^{B}
        D_{\theta}(\mathbf{x}^{(i)}_{t},\hat{y}^{(i)}_{t})
    -\log\frac{1}{B^{2}}
        \sum_{i=1}^{B}\sum_{j=1}^{B}
        e^{D_{\theta}(\mathbf{x}^{(i)}_{t},\hat{y}^{(j)}_{t})}\bigg]
\end{aligned}
\label{vbomi-loss}
\end{equation}
where $\hat{y}^{(i)}_{t}=\hat{f}_{\psi}(\mathbf{x}^{(i)}_{t})$.
This update runs for $K_b$ gradient steps with $D_\theta$ and
$\hat{f}_\psi$ fixed.
\textcolor{black}{\begin{remark}
    The log-mean-exp term in~\Eqref{vbomi-loss} is computed via the standard log-sum-exp stabilization: the batch maximum of $D_{\theta}$ is subtracted before exponentiating and added back after taking the log, avoiding overflow when $D_{\theta}$ takes large values.
\end{remark}}

Our framework has three neural network components.  $E_{\phi}$
searches the input space $\mathcal{D}$ and selects the next action
$\mathbf{x}_{t}$; \textcolor{black}{the helper $D_{\theta}$ estimates the surrogate information
signal and send the signal for $E_{\phi}$}; and the
surrogate $\hat{f}_{\psi}$ approximates the conditional expectation
of $f$ from past observations and supplies the exploitation gradient
to $E_{\phi}$.  All three networks are trained in an alternating
fashion toward convergence (see Fig.~\ref{fig:vertical_rotated}). In order to give the algorithm smooth, and efficient performance we have included warm-up step where only helper network, and surrogate are updated. Our detailed algorithm is summarized in Algorithm~\ref{alg:vbo}.

\begin{algorithm}[t]
    \caption{VBO-MI}
    \label{alg:vbo}
    \textbf{Input:} Initialized weights of networks $E_\phi$ (action-network),
    $D_\theta$ (MI helper network),
    and $\hat{f}_\psi$ (surrogate network);
    number of warm-up steps $W$, number of BO iterations $T$,
    iteration counts $K_a$, $K_b$, $K_p$ and $K_s$, batch size $B$. \\
    \textbf{Output:} Optimized weights of $E_\phi$, $D_\theta$, and $\hat{f}_\psi$;
    estimated optimum $\mathbf{x}^*$.
    \begin{algorithmic}[1]

    \State \textbf{Warm-up Phase:}
    \State Initialize replay buffer $\mathcal{H} \leftarrow \emptyset$.
    \For{$k = 1, \dots, W$}
        \State \textcolor{black}{Sample warm-up seeds $\mathbf{s}_{k}^{(i)} \sim \mathcal{N}(0, I)$}.
        \State  Using the seeds $\mathbf{s}_{\textcolor{black}{k}}^{(i)}$
     compute (a batch of) actions $\mathbf{x}_{\textcolor{black}{k}}^{(i)} = E_{\phi}(\mathbf{s}_{\textcolor{black}{k}}^{(i)})$,\; 
    $i = 1, \dots, B$.
        \State Evaluate observations $y_{\textcolor{black}{k}}^{(i)} = f(\mathbf{x}_{\textcolor{black}{k}}^{(i)}) + \epsilon^{(i)}$.%
        \State Store $\mathcal{H} \leftarrow \mathcal{H} \cup \{(\mathbf{x}_{\textcolor{black}{k}}^{(i)}, y_{\textcolor{black}{k}}^{(i)})\}_{i=1}^{B}$.
        \State Compute $-\mathrm{MI}$ using $\{\mathbf{x}_{\textcolor{black}{k}}^{(i)}, y_{\textcolor{black}{k}}^{(i)}\}_{i=1}^B$ and helper network.
        \State Update $D_{\theta}$ via gradient descent on $L_{D_\theta}$, keeping $E_{\phi}$ fixed and vice versa.
    \EndFor
    \State \textbf{Pre-train} surrogate: for $k=1,\dots,K_p$, sample a mini-batch
           $\{(\mathbf{x}_{\textcolor{black}{k}}^{(i)},y_{\textcolor{black}{k}}^{(i)})\} \subset \mathcal{H}$ and update $\hat{f}_\psi$ by
           gradient descent on $\mathcal{L}_{\mathrm{MSE}} =
           \frac{1}{B}\sum_{i}(\hat{f}_\psi(\mathbf{x}_{\textcolor{black}{k}}^{(i)}) - y_{\textcolor{black}{k}}^{(i)})^2$.

    \State \textbf{Main VBO-MI Phase:}
    \For{$t = 1, \dots, T$}
\State  Compute (a batch of) actions $\mathbf{x}^{(i)}_{t} = E_{\phi}(\mathbf{s}^{(i)}_{t})$,\; 
    $i = 1, \dots, B$.
        \State \textbf{Update $D_{\theta}$ (Helper network):}
        \For{$k = 1, \dots, K_a$}
            \State Compute surrogate predictions
                   $\hat{y}^{(i)}_{t} = \hat{f}_{\psi}(\mathbf{x}^{(i)}_{t})$,\;
                   $i=1,\dots,B$.  \hfill
            \State Update $D_{\theta}$ by gradient descent on $L_{D_{\theta}}$
                   with $E_{\phi}$ fixed, using $\{\mathbf{x}^{(i)}_{t},\,\hat{y}^{(i)}_{t}\}_{i=1}^{B}$.
        \EndFor

        \For{$k = 1, \dots, K_s$}
            \State Sample a mini-batch
                   $\{(\mathbf{x}^{(i)},y^{(i)})\}_{i=1}^{B} \subset \mathcal{H}$.
            \State Update $\hat{f}_\psi$ by gradient descent on
                   $\mathcal{L}_{\mathrm{MSE}} =
                   \frac{1}{B}\sum_{i}(\hat{f}_\psi(\mathbf{x}^{(i)}) - y^{(i)})^2$.
        \EndFor

        \State \textbf{Update $E_{\phi}$ (Action-Net):}
        \For{$k = 1, \dots, K_b$}
            \State  Compute (a batch of) actions $\mathbf{x}^{(i)}_{t} = E_{\phi}(\mathbf{s}^{(i)}_{t})$,\; 
    $i = 1, \dots, B$. \hfill
            \State Compute surrogate predictions
                   $\hat{y}^{(i)} = \hat{f}_{\psi}(\mathbf{x}^{(i)})$.
                   \hfill
            \State Compute MI estimate $I_\theta(\mathbf{x}^{(i)}, \hat{y}^{(i)})$
                   via helper $D_\theta$ (weights frozen).
            \State Update $E_{\phi}$ by SGD on
                     $L_{E_{\phi}}$ in~\Eqref{vbomi-loss}
                   with $D_\theta$ and $\hat{f}_\psi$ fixed.
                   \EndFor

        \State Compute actions $E_{\phi}(\mathbf{s}^{(i)}_{t})=\mathbf{x}^{(i)}_{t} $,\; $i=1,\dots,B$.
        \State Evaluate $y^{(i)}_{t} = f(\mathbf{x}^{(i)}_{t}) + \epsilon^{(i)}$.
               \hfill
        \State Store $\mathcal{H} \leftarrow \mathcal{H} \cup
               \{(\mathbf{x}^{(i)}_{t}, y^{(i)}_{t})\}_{i=1}^{B}$.

        \State Save estimated optimum
               $\mathbf{x}_{t}^{*} =
               \operatorname*{arg\,max}_{(\mathbf{x},y)\,\in\,\mathcal{H}}\; y$.

    \EndFor
    \State \textbf{Return:} final network weights $(\phi^{*}, \theta^{*}, \psi^{*})$
           and estimated optimum $\mathbf{x}_{t}^{*}$.
    \end{algorithmic}
\end{algorithm}

\begin{figure*}[t]
\centering
\begin{subfigure}{\textwidth}
    \centering
    \includegraphics[width=0.7\linewidth]{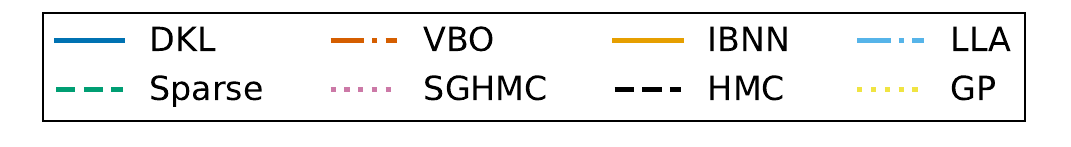}
\end{subfigure}
\vspace{0.15cm}

\begin{subfigure}{0.32\textwidth}
    \centering
    \includegraphics[width=\linewidth]{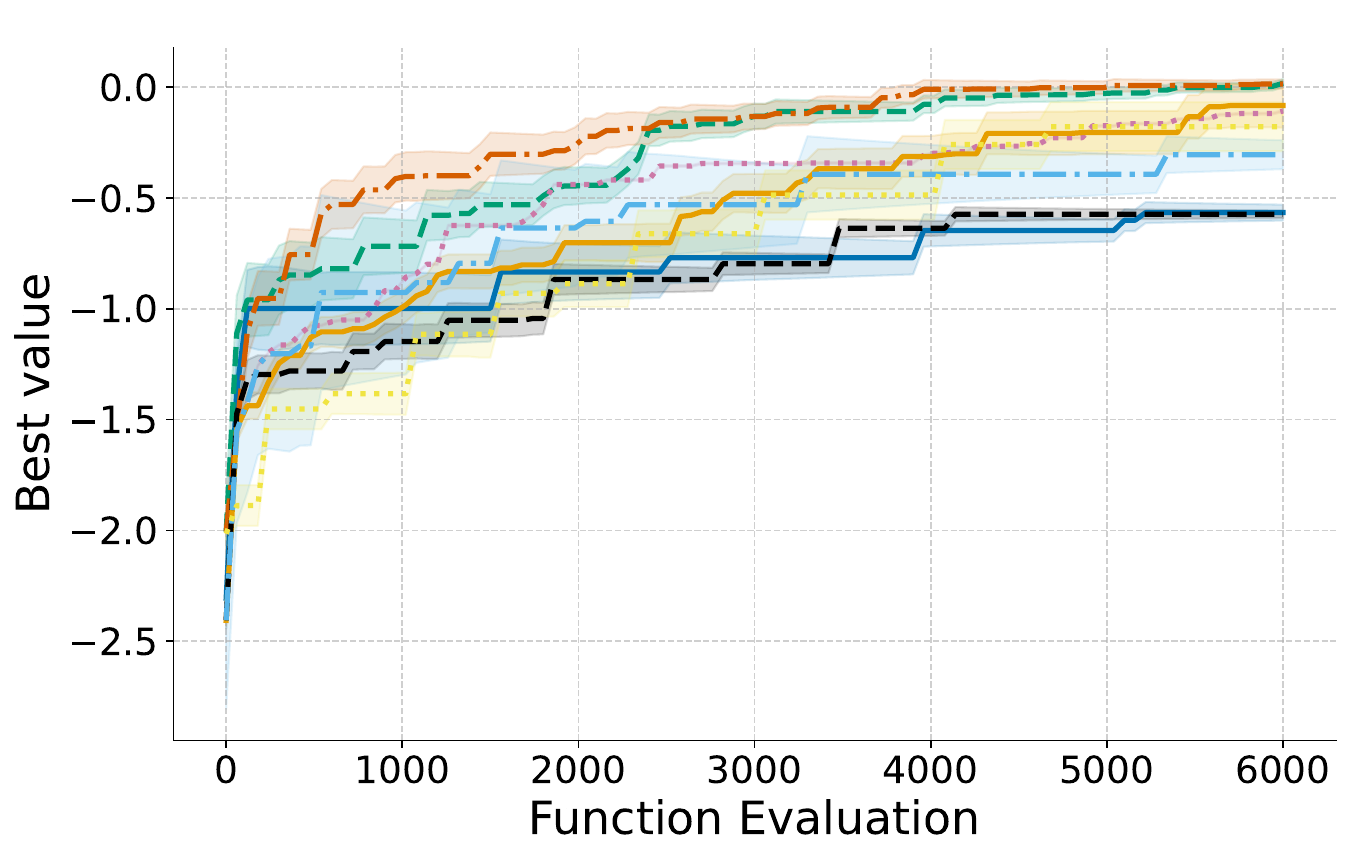}
    \caption{Ackley ($d=10$)}
    \label{fig:ackley}
\end{subfigure}
\hfill
\begin{subfigure}{0.32\textwidth}
    \centering
    \includegraphics[width=\linewidth]{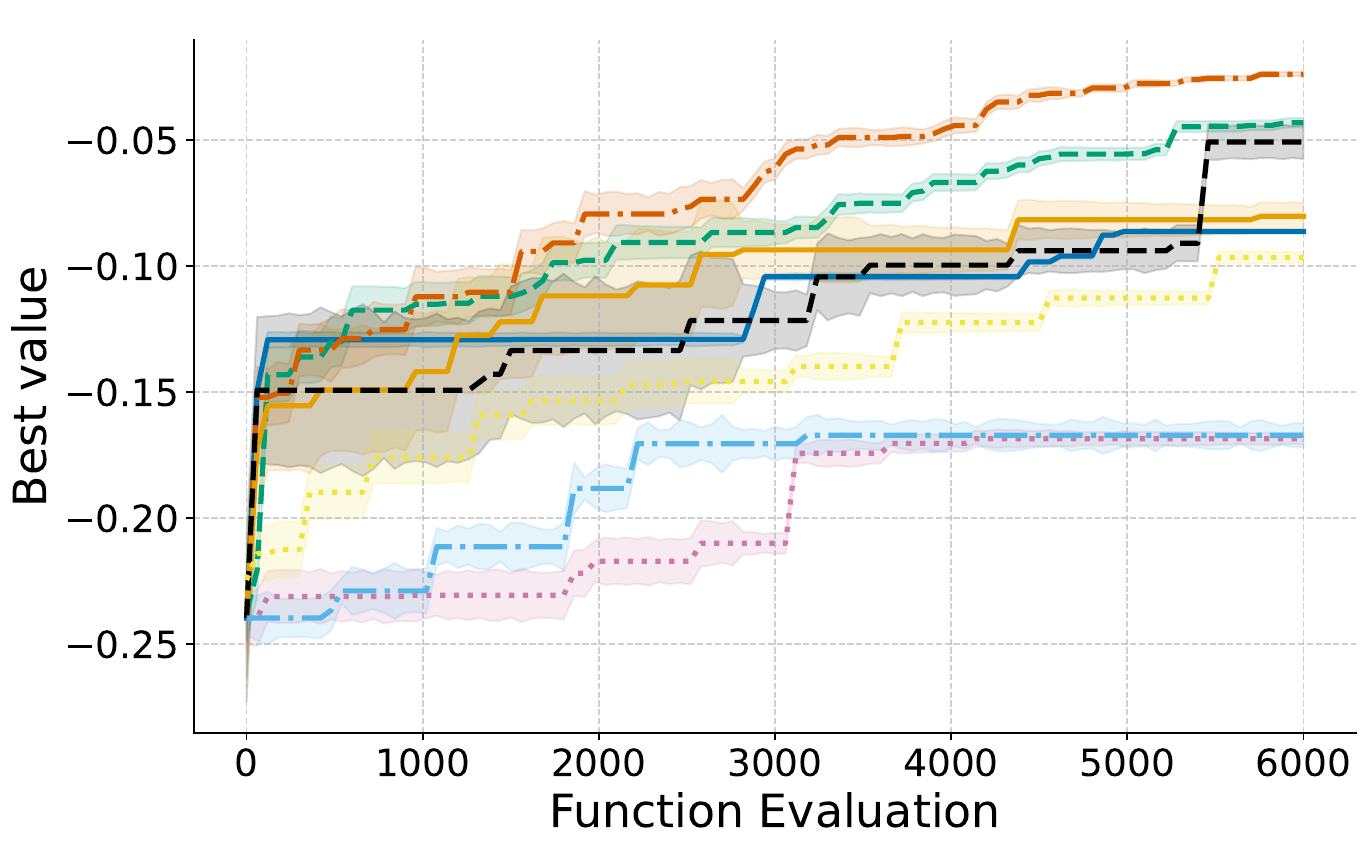}
    \caption{Levy ($d=20$)}
    \label{fig:levy}
\end{subfigure}
\hfill
\begin{subfigure}{0.32\textwidth}
    \centering
    \includegraphics[width=\linewidth]{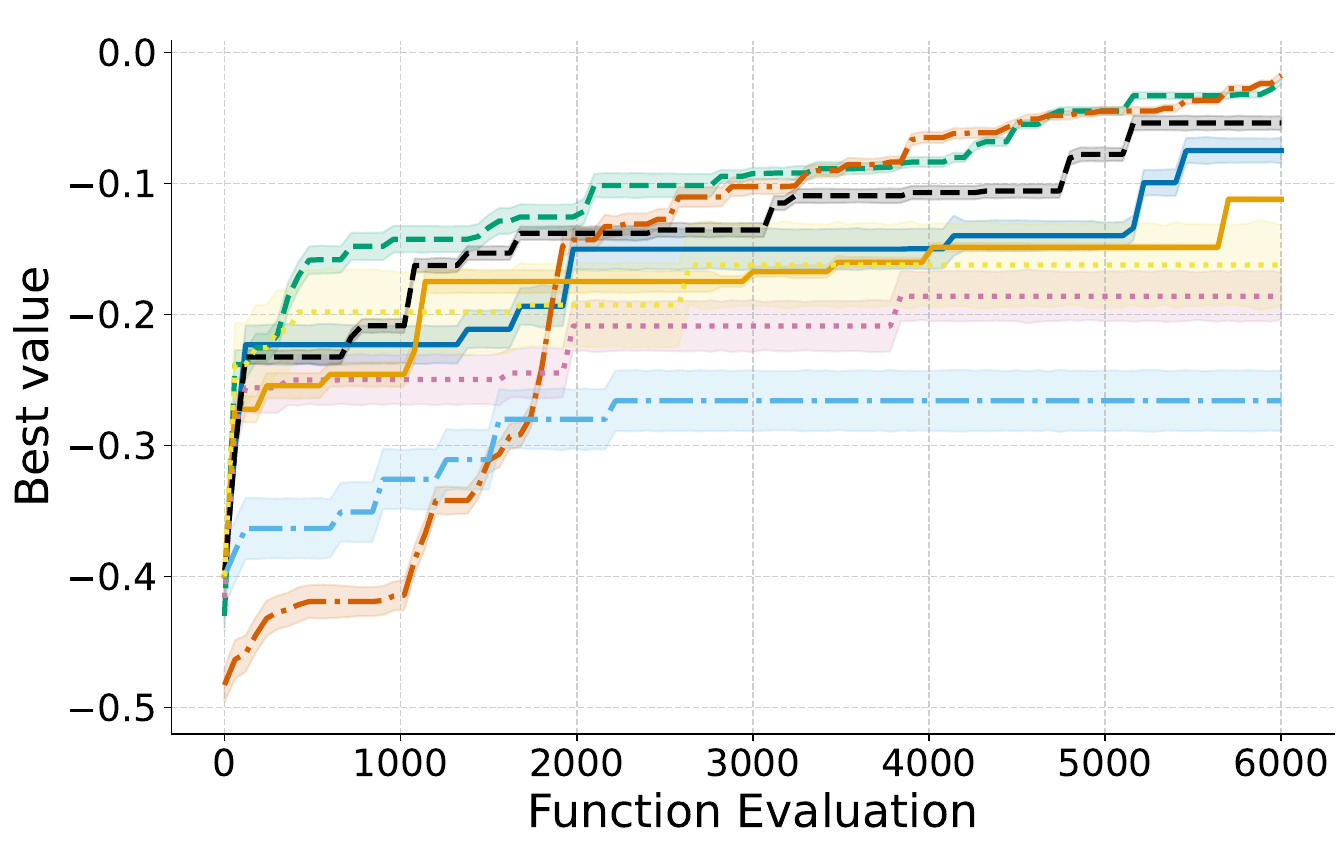}
    \caption{Griewank ($d=20$)}
    \label{fig:griewank}
\end{subfigure}

\caption{Performance comparison on synthetic benchmark tasks (a--c).
         Global optimum is 0 for all functions.
         All curves show mean $\pm$ one standard deviation over 5 trials.}
\label{fig:synthetic}
\end{figure*}

\begin{figure*}[t]
\centering
\begin{subfigure}{\textwidth}
    \centering
    \includegraphics[width=0.7\linewidth]{figs/labelscropped.pdf}
\end{subfigure}
\vspace{0.15cm}

\begin{subfigure}{0.32\textwidth}
    \centering
    \includegraphics[width=\linewidth]{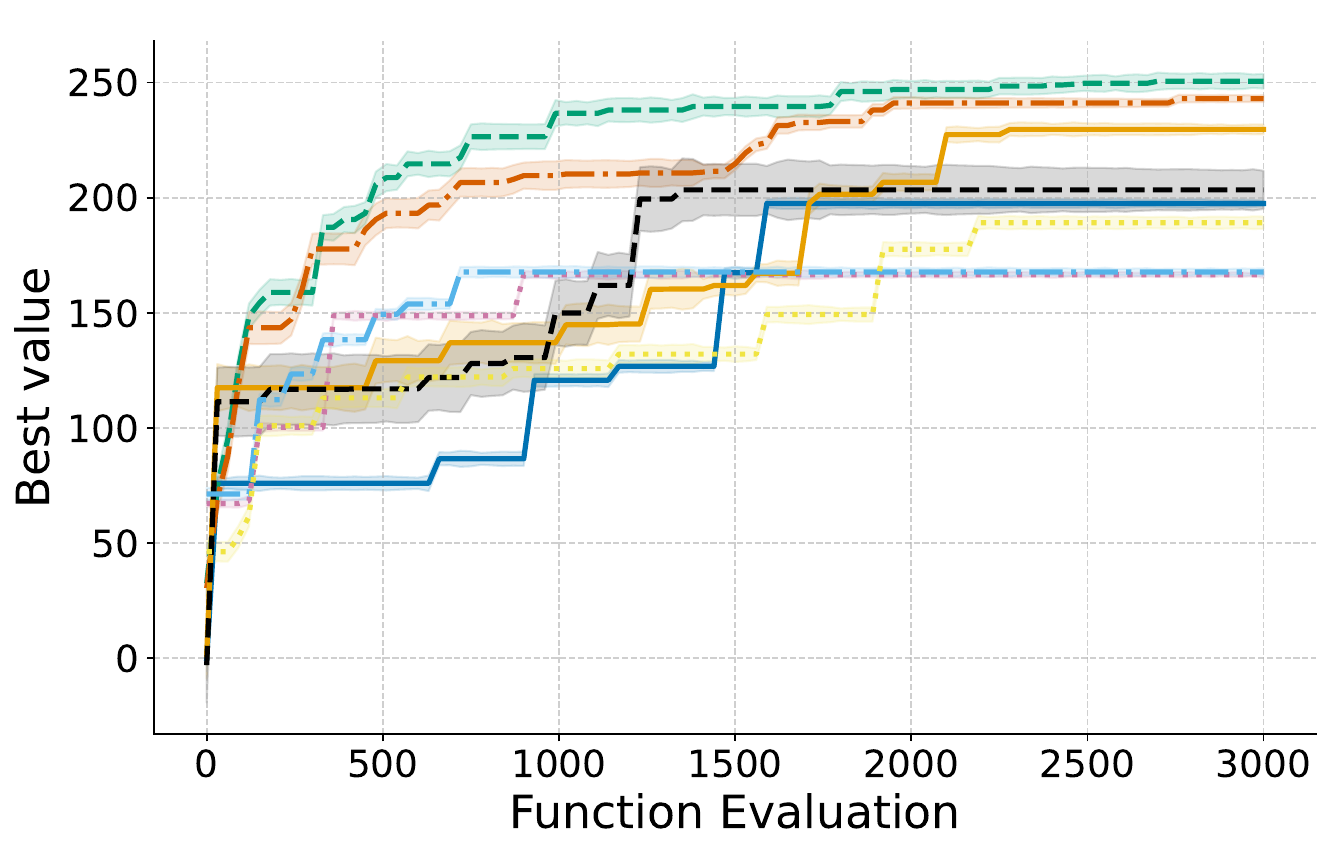}
    \caption{Lunar Lander ($d=12$)}
    \label{fig:interferometer}
\end{subfigure}
\hfill
\begin{subfigure}{0.32\textwidth}
    \centering
    \includegraphics[width=\linewidth]{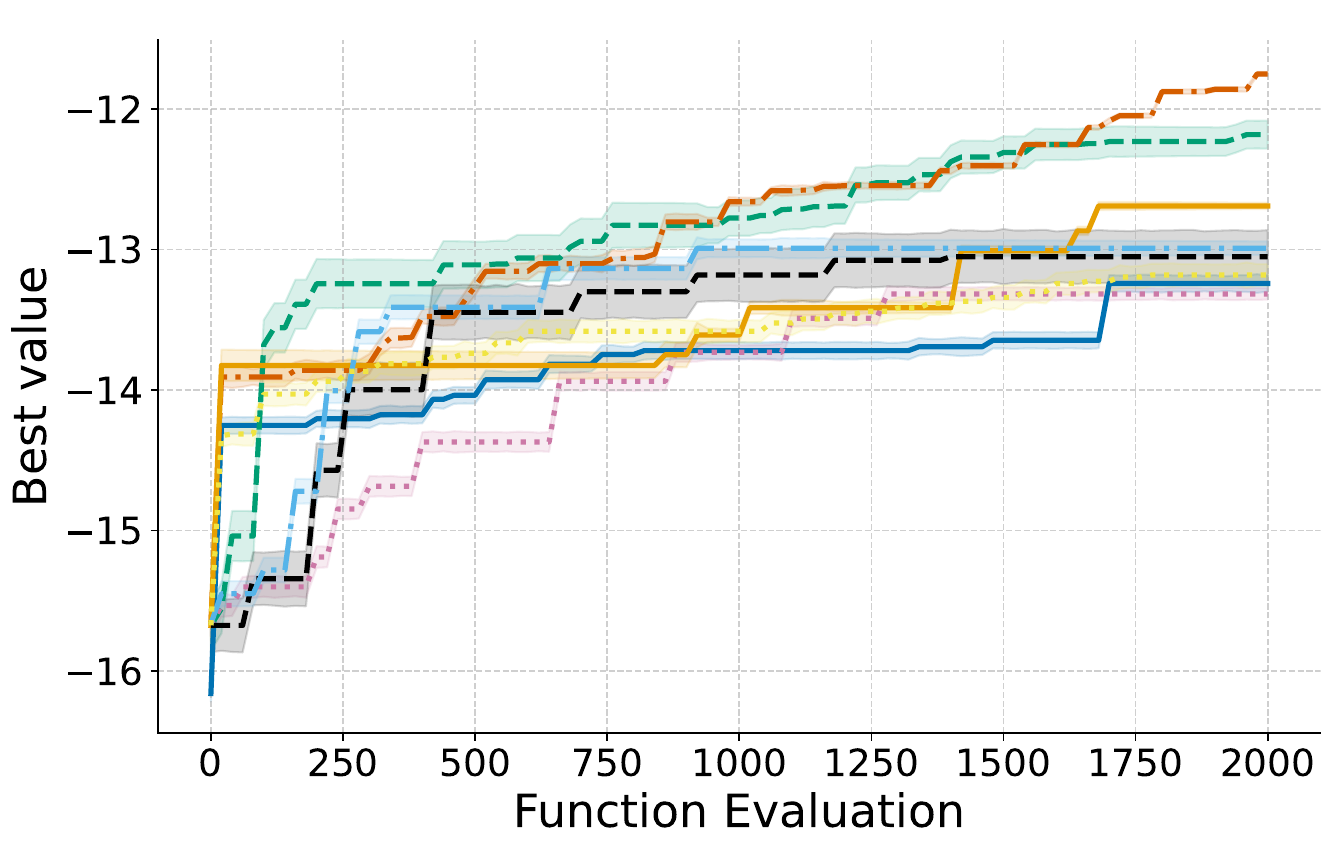}
    \caption{Pest Control ($d=25$)}
    \label{fig:pde}
\end{subfigure}
\hfill
\begin{subfigure}{0.32\textwidth}
    \centering
    \includegraphics[width=\linewidth]{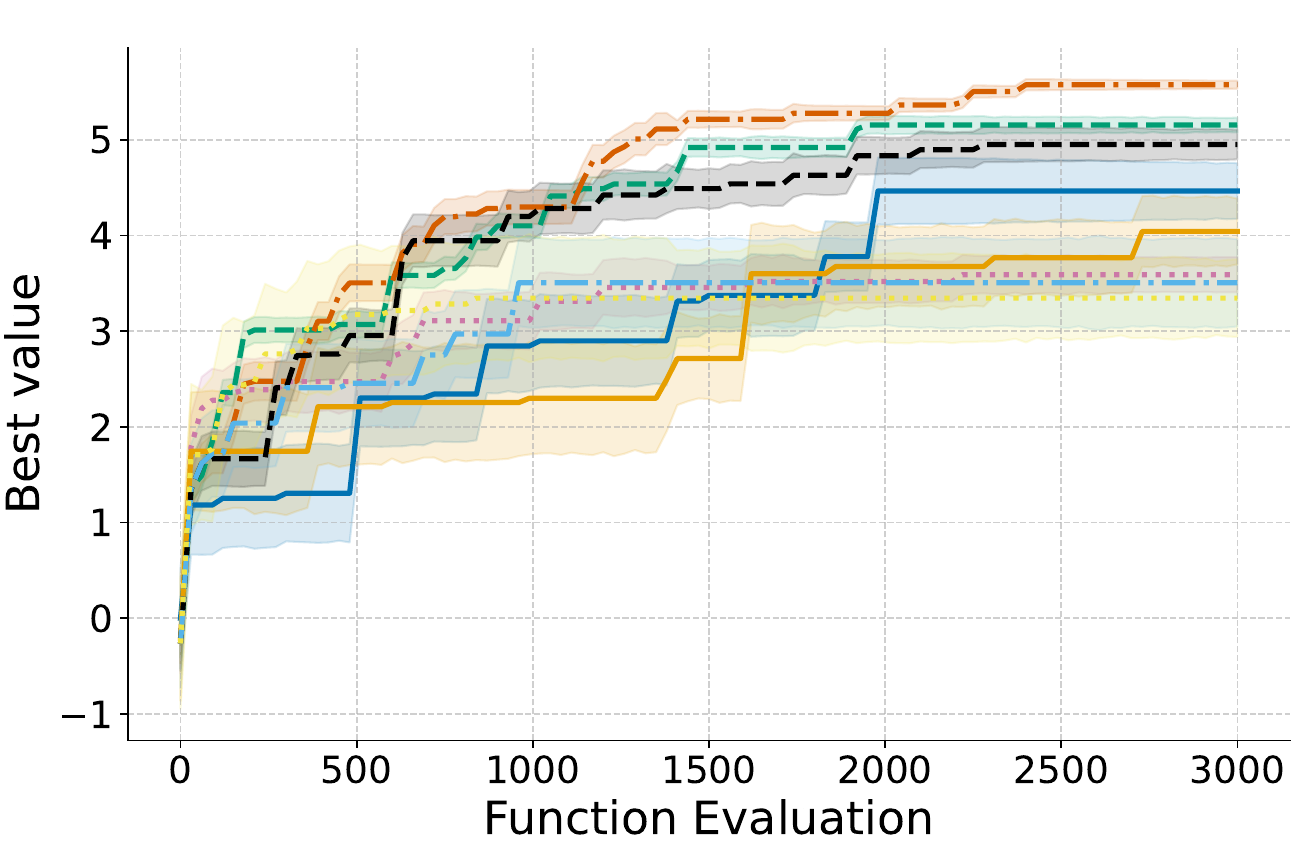}
    \caption{Rover Trajectory ($d=60$)}
    \label{fig:lunar}
\end{subfigure}

\caption{Performance comparison on real-world benchmark tasks (a--c).
         All curves show mean $\pm$ one standard deviation over 5 trials.}
\label{fig:realworld}
\end{figure*}

 \begin{remark}
It is important to distinguish between the optimization of our neural networks and the optimization of $f$. VBO-MI uses gradients to update network parameters $\phi$ and $\theta$ with respect to the variational loss. However, these gradients are computed through the neural architecture; the objective function $f$ itself remains a black box, and no gradients of $f$ are required or used.
\end{remark}
 Now we resort to the loss functions $L_{E_{\phi}}$ and $L_{D_{\theta}}$, and the outputs of $E_{\phi}$ and $D_{\theta}$, to propose our algorithm for the BO defined in Section \ref{sec:background}. 
 In the proposed method the action-net $E_{\phi}$, 
 searches the input space $\mathcal{D}$ and selects the next query $\mathbf{x}_{t}$; 
the helper $D_{\theta}$, \textcolor{black}{which approximates surrogate information signal and helps to guide exploration 
signal} for $E_{\phi}$; and the surrogate $\hat{f}_{\psi}$, which approximates the 
conditional expectation of $f$ from accumulated observations and supplies the exploitation 
gradient. The core optimization follows an alternating actor-critic scheme between 
$E_{\phi}$ and $D_{\theta}$: at each iteration, $D_{\theta}$ is updated first to sharpen 
the MI estimate, then $E_{\phi}$ is updated using the frozen $D_{\theta}$. The surrogate 
$\hat{f}_{\psi}$ is updated separately via MSE minimization on the replay buffer 
$\mathcal{H}$, serving as an oracle-free substitute for $f$ throughout both updates. 
The complete procedure is summarized in Algorithm~\ref{alg:vbo} and illustrated in 
Fig.~\ref{fig:vertical_rotated}.

\subsection{Theoretical Results}
\label{theoreticalresults}

{\color{black}In this section, we establish two types of theoretical guarantees for
VBO-MI. First, we show  the sample-average
training objective converges to its population counterpart as the
batch size $B$ grows (Theorem~\ref{thm:consistency}a), and the surrogate
network converges to $f$ as the replay buffer $\mathcal{H}$ grows
(Theorem~\ref{thm:consistency}b). Second, under idealized conditions
detailed in Proposition~\ref{prop:existence}, we show that networks
realizing an $\epsilon$-optimal solution \emph{exist} within our
hypothesis class. We emphasize that this second result concerns
what the hypothesis class contains, not what the alternating training
procedure of Algorithm~\ref{alg:vbo} is achieved in finite $T$ iteration. Establishing such guarantee requires a regret bound for VBO-MI as a function of the number of
BO iterations $T$, which would require showing that the acquisition
objective in~\Eqref{vbomi-loss} forms a valid confidence bound on
$f$ and that its exploration term decays at a suitable rate as $T$
grows, remains an open
problem, and we discuss it as future work in Section~\ref{sec:conclusion}.
\begin{theo}
\label{thm:consistency}
Let $f: \mathcal{D} \to \mathbb{R}$ be a continuous black-box function,
and let $\widehat{g}(\phi, \theta)$ be the sample-average
approximation of the variational loss $g(\phi, \theta)$ as
defined in~\Eqref{mnet} and~\Eqref{vbomi-loss}.
Assume the following conditions hold:
\begin{enumerate}[label=(\roman*)]

   \item The observation noise $\epsilon_t$
satisfies $\mathbb{E}[\epsilon_t \mid \mathcal{F}_{t-1}] = 0$ and
$\mathbb{E}[\epsilon_t^2 \mid \mathcal{F}_{t-1}] < \infty$, where
$\mathcal{F}_t = \sigma(\mathbf{x}_1,y_1,\ldots,\mathbf{x}_t,y_t)$ 
denotes the $\sigma$-algebra generated by the history observed up to
step $t$; that
is, $\{y_t - f(\mathbf{x}_t)\}$ forms a martingale difference sequence
(MDS) with respect to $\{\mathcal{F}_t\}$.

     \item
    The networks $E_\phi$, $D_\theta$, and $\hat{f}_\psi$ belong to 
    universal approximation classes --- feed-forward networks for 
    $E_\phi$ and $\hat{f}_\psi$, and LSTMs for 
    $D_\theta$~\citep{Hornik1989MultilayerFN,rnn} with 
    sufficient capacity. The empirical occupation measure of the replay buffer's inputs, $\mu_n := \frac{1}{n}\sum_{i=1}^n \delta_{x_i}$, converges weakly, $\mu_n \to \mu_\infty$, $P$-almost surely, to some limiting measure $\mu_\infty$ on $\mathcal{D}$. \textsuperscript{\ref{rem:lmmeasure}}

    \item%
         The parameter spaces $\Phi$,
$\Theta$, $\Psi$ are compact, and $\hat{f}_\psi$ is realized by the
feedforward architecture of Section~C.1 with layerwise Lipschitz
activations (e.g, Relu).%

\end{enumerate}
Then we will prove the following conclusions in Appendix~\ref{app:consistency}:
\begin{enumerate}[label=(\alph*)]
    \item \textbf{(Within-batch convergence)} For any fixed
    $(\phi,\theta)\in\Phi\times\Theta$, the sample average approximation
    $\widehat{g}(\phi,\theta)$ converges to $g(\phi,\theta)$
    $\mathbb{P}$-almost surely as the batch size $B\to\infty$.

    \item \textbf{(Surrogate convergence)} The MDS structure of
    $\{y_t - f(\mathbf{x}_t)\}$, and
    the martingale-adapted uniform convergence result of
    \citep{lauer2023uniform} applied to the surrogate's induced
    function class, gives:
    \[
        \sup_{x\in \mathrm{supp}(\mu_{\infty})}\,|\hat{f}_{\psi^*}(x) - f(x)|
        \;\xrightarrow{|\mathcal{H}|\to\infty}\; 0
        \quad \mathbb{P}\text{-almost surely}.
    \]

 \end{enumerate}
 \end{theo}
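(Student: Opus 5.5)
For part (a) I will fix $(\phi,\theta)\in\Phi\times\Theta$ and split $\widehat{g}(\phi,\theta)$ from \Eqref{vbomi-loss} into three pieces. The first is the exploitation average $\frac1B\sum_i \hat f_\psi(E_\phi(\mathbf{s}^{(i)}))$. The second is the joint term $\frac1B\sum_i D_\theta(\mathbf{x}^{(i)},\hat y^{(i)})$. The third is the log of the double sum $\frac{1}{B^2}\sum_{i,j}e^{D_\theta(\mathbf{x}^{(i)},\hat y^{(j)})}$.

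The seeds $\mathbf{s}^{(i)}\sim\mathcal{N}(0,I)$ are i.i.d. For fixed $(\phi,\theta,\psi)$, the pairs $(\mathbf{x}^{(i)},\hat y^{(i)})=(E_\phi(\mathbf{s}^{(i)}),\hat f_\psi(E_\phi(\mathbf{s}^{(i)})))$ are therefore i.i.d. draws from $P_{\mathbf{x}_t,\hat y_t}$. By compactness of $\Phi,\Theta,\Psi$ (assumption (iii)) and continuity of the architectures, $E_\phi$ maps into the compact set $\mathcal{D}$. Hence $\hat y$ and $D_\theta$ are bounded on the relevant range, so all integrands are bounded and integrable.

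The first two pieces then converge almost surely by Kolmogorov's strong law of large numbers. The double sum is a V-statistic with bounded kernel $h(z_i,z_j)=e^{D_\theta(\mathbf{x}_i,\hat y_j)}$. I will separate the diagonal $i=j$ terms, which contribute $O(1/B)$ because $h$ is bounded. The off-diagonal part is a U-statistic, which converges a.s. to $\mathbb{E}_{P_{\mathbf{x}}\times P_{\hat y}}[e^{D_\theta}]$ by Hoeffding's SLLN for U-statistics. This limit is at least $e^{\min D_\theta}>0$, so $\log$ is continuous there and the continuous mapping theorem passes the a.s. convergence through the log. Combining the three pieces with the weights $1$ and $\sqrt\beta$ gives $\widehat g\to g$ a.s.

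For part (b), let $\mathcal{G}=\{\hat f_\psi:\psi\in\Psi\}$. Assumption (iii) gives layerwise Lipschitz activations and compact $\Psi$, so $\mathcal{G}$ is uniformly bounded and Lipschitz in both $x$ and $\psi$ on compact $\mathcal{D}$. It therefore has finite covering numbers in sup norm, and so does the loss class $\ell_\psi(x,y)=(\hat f_\psi(x)-y)^2$ after a truncation or moment argument for $y$.

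I will write the empirical risk as
\[
\frac1n\sum_{i}(\hat f_\psi(x_i)-y_i)^2=\frac1n\sum_i(\hat f_\psi(x_i)-f(x_i))^2-\frac2n\sum_i(\hat f_\psi(x_i)-f(x_i))\epsilon_i+\frac1n\sum_i\epsilon_i^2 .
\]
The last term does not depend on $\psi$. For the middle term, $\hat f_\psi(x_i)-f(x_i)$ is $\mathcal{F}_{i-1}$-predictable, since $x_i$ is chosen before $y_i$ is observed. Assumption (i) then makes the middle term a martingale average. The uniform martingale LLN of \citet{lauer2023uniform}, applied over the finite-cover class, gives that it vanishes uniformly in $\psi$ almost surely. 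For the first term, $(\hat f_\psi-f)^2$ is continuous and bounded, so the weak convergence $\mu_n\to\mu_\infty$ from (ii), combined with equicontinuity of the class, upgrades pointwise convergence to uniform convergence over $\psi$ of $\int(\hat f_\psi-f)^2d\mu_n\to\int(\hat f_\psi-f)^2d\mu_\infty$. This uses the standard result that weak convergence is uniform over uniformly bounded equicontinuous families.

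Hence the empirical minimizer $\psi^*_n$ asymptotically minimizes $R(\psi)=\|\hat f_\psi-f\|^2_{L^2(\mu_\infty)}$. By universal approximation (ii) with sufficient capacity, $\inf_\psi R(\psi)=0$ up to the capacity error, which I will interpret as zero. So $\|\hat f_{\psi^*_n}-f\|_{L^2(\mu_\infty)}\to0$ a.s.

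Finally I will upgrade from $L^2(\mu_\infty)$ to sup norm on $\mathrm{supp}(\mu_\infty)$. The functions $\hat f_{\psi}-f$ are uniformly equicontinuous, since they are uniformly Lipschitz plus the fixed continuous $f$ on a compact domain. Suppose $|\hat f_{\psi_n}-f|(x_0)>\eta$ at some $x_0\in\mathrm{supp}(\mu_\infty)$. Then the difference exceeds $\eta/2$ on a ball of fixed radius $r$ around $x_0$, and this ball has positive $\mu_\infty$-mass. A compactness argument makes that mass bounded below uniformly in $x_0$, which contradicts $L^2$ convergence.

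I expect the main obstacle to be the uniform martingale step. Predictability and the finite covering number must be set up exactly as \citet{lauer2023uniform} requires, with only a conditional second moment on the noise and a realizability caveat from approximation.
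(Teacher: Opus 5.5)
Your part (a) is the paper's within-batch lemma (Lemma~\ref{lem:within-batch}): the SLLN for the two single averages, the U-statistic LLN for the double sum, and continuity of $\log$ away from $0$. Splitting off the $O(1/B)$ diagonal is actually cleaner than the paper, which calls the full double average a U-statistic. One small correction: with Gaussian seeds, $E_\phi$ lands in $\mathcal{D}$ because of the bounded Tanh output layer, not because $\Phi$ is compact.

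Part (b) takes a genuinely different route. The paper compares $\hat L_n(\psi)$ with its compensator $L_n(\psi)=\frac1n\sum_i\mathbb{E}[\ell_\psi(\mathbf{x}_i,y_i)\mid\mathcal{F}_{i-1}]$ through Lauer's uniform deviation bound. It controls the Rademacher term via the spectrally normalized covering bound of Bartlett et al.\ plus chaining, passes to almost-sure statements by Borel--Cantelli with $\delta_n=1/n^2$, and only afterwards writes $L_n(\psi)=R_n(\psi)+\sigma^2$.

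You expand the square instead. Then $\frac1n\sum_i\epsilon_i^2$ drops out of the argmin identically, and the only stochastic object is a martingale transform with bounded predictable weights. That term does not actually need Lauer's theorem. A finite sup-norm cover of $\Psi$ at each scale $1/k$, the martingale SLLN at each of the countably many centers, and $\limsup_n\frac1n\sum_i|\epsilon_i|<\infty$ a.s.\ already give uniformity. This assumes $\sup_i\mathbb{E}[\epsilon_i^2\mid\mathcal{F}_{i-1}]\le\bar\sigma^2$, which is weaker than the paper's bounded (truncated) noise but, as you anticipate, stronger than the bare finiteness in (i). The paper's route buys an explicit $\tilde O(\sqrt{A/n})$ rate in terms of spectral complexity; yours is purely asymptotic but more elementary.

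Your sup-norm upgrade also differs. The paper uses an Arzel\`a--Ascoli subsequence argument with uniqueness of $L^2(\mu_\infty)$ limits. You instead combine the uniform-equicontinuity radius $r(\eta)$ with $m(r):=\inf_{\mathbf{x}_0\in\mathrm{supp}(\mu_\infty)}\mu_\infty(\mathcal{B}_r(\mathbf{x}_0))>0$, where $\mathcal{B}_r$ is the open $r$-ball; positivity follows by covering the compact support with finitely many $r/2$-balls centered in it. This gives directly $\|\hat f_{\psi_n}-f\|^2_{L^2(\mu_\infty)}\ge(\eta/2)^2m(r)$.

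Two caveats are shared with the paper rather than specific to you:
\begin{itemize}
\item $\hat f_\psi(\mathbf{x}_i)-f(\mathbf{x}_i)$ is predictable only for the enlarged filtration $\sigma(\mathcal{F}_{i-1},\mathbf{x}_i)$, since fresh seeds make $\mathbf{x}_i$ not $\mathcal{F}_{i-1}$-measurable. The MDS condition should therefore be imposed relative to that filtration.
\item Treating the capacity error as zero is an extra realizability hypothesis, $\inf_{\psi\in\Psi}R_\infty(\psi)=0$ for the fixed compact $\Psi$. The paper likewise adds this inside its proof.
\end{itemize}
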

 \begin{remark}
 \label{rem:lmmeasure}
     Since $\mathcal{D}$ is compact, $(\mu_n)$ is automatically tight, so every subsequence admits a further weakly convergent sub-subsequence (Prokhorov's theorem~\cite{prokhorov1956convergence}); what Assumption (ii) additionally requires is that the whole sequence converges to a single limit, i.e., that the query distribution induced by the alternating training dynamics of Algorithm 1 stabilizes asymptotically. We do not establish this stabilization for the general nonconvex alternating stochastic optimization underlying Algorithm 1, and state it explicitly as a working assumption. 
     
     Intuitively, it is supported by the following: if the action-net's parameters $\phi$ approach a neighborhood of a stationary point as training proceeds, as suggested empirically by the loss curves in our experiments, then $E_\phi$ approaches a fixed map, and the induced distribution of proposed actions $\mathbf{x}_t = E_\phi(\mathbf{s}_t)$ correspondingly stabilizes; the buffer's empirical occupation measure, accumulating draws from this progressively stabilizing distribution, would then inherit weak convergence to the limit.

 \end{remark}
 \begin{proof}
     See Appendix \ref{app:consistency}.
 \end{proof}
 \begin{proposition}[Existence of Near-Optimal Networks within the Hypothesis Class]
\label{prop:existence}
Under Assumptions~(i)--(iii) of Theorem~\ref{thm:consistency}, for any
$\epsilon>0$ there exist networks $E_{\hat\phi}$, $D_{\hat\theta}$,
$\hat f_{\hat\psi}$ — realizable within the hypothesis classes
$\Phi,\Theta,\Psi$ — such that output of $E_{\hat\phi}$ is $\epsilon$-optimal for $f$.
\footnote{Note that this is a capacity-type result: it concerns only what the
hypothesis class contains, not whether the alternating stochastic
optimization of Algorithm~\ref{alg:vbo} is guaranteed to find such a
network within finite $T$ steps, and what regret bound holds on the
resulting solution, which remain open problems.}
\end{proposition}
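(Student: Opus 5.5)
The plan is a direct construction. Since $f$ is continuous on the compact domain $\mathcal{D}$, it attains its maximum at some $\mathbf{x}^*\in\mathcal{D}$. By continuity there is a $\delta>0$ such that $f(\mathbf{x})\ge f(\mathbf{x}^*)-\epsilon$ whenever $\|\mathbf{x}-\mathbf{x}^*\|<\delta$.

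It therefore suffices to build an action-net whose output stays in this $\delta$-ball.

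\textbf{Step 1: the action-net.} Consider the constant map $\mathbf{s}\mapsto\mathbf{x}^*$. It is continuous, so by the universal approximation theorem for feed-forward networks \citep{Hornik1989MultilayerFN} there is $E_{\hat\phi}$ with $\sup_{\mathbf{s}\in K}\|E_{\hat\phi}(\mathbf{s})-\mathbf{x}^*\|<\delta$ on any compact set $K$.

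The seed $\mathbf{s}\sim\mathcal{N}(0,I)$ has unbounded support, so I handle it in one of two ways:
\begin{itemize}
\item choose $K$ with $\mathbb{P}(\mathbf{s}\in K)\ge 1-\eta$ and read $\epsilon$-optimality with high probability or in expectation, using boundedness of $f$ on $\mathcal{D}$; or
\item simpler, set the first-layer weights to zero. The network is then constant and equal to a bias vector, which can be chosen exactly in the $\delta$-ball.
\end{itemize}
I will use the second option if $\Phi$ contains zero weights. If the output is further projected or clipped onto $\mathcal{D}$, optimality is preserved.

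\textbf{Step 2: the helper and the surrogate.} The statement asks for all three networks, but $\epsilon$-optimality depends only on $E_{\hat\phi}$. For $\hat f_{\hat\psi}$, I take any $\psi$; for a meaningful choice I invoke Theorem~\ref{thm:consistency}(b) to get a surrogate uniformly close to $f$ on $\mathrm{supp}(\mu_\infty)$. For $D_{\hat\theta}$, I take any parameter in $\Theta$, for example one approximately maximizing the DV bound, which exists by compactness of $\Theta$ and continuity of the objective in $\theta$.

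Optionally, I would also show that $(\hat\phi,\hat\theta)$ nearly maximizes $g$ in \eqref{mnet}. With a concentrated action distribution the exploration term $I_\theta$ is near zero. The exploitation term $\hat f_{\hat\psi}(\mathbf{x})$ is within $\epsilon$ of $f(\mathbf{x}^*)$ provided $\mathbf{x}^*\in\mathrm{supp}(\mu_\infty)$.

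\textbf{Main obstacle.} The delicate point is realizability within the compact parameter set $\Phi$. Universal approximation yields some network, but not necessarily one whose weights lie in a prescribed compact set. I would resolve this by reading Assumption (ii) ("sufficient capacity") as saying that $\Phi$ is large enough to contain the approximating weights. The zero-weight constant construction makes this mild, since the required bias vector is bounded by $\mathrm{diam}(\mathcal{D})$.

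A second subtlety is linking the surrogate to optimality. If $\mathbf{x}^*\notin\mathrm{supp}(\mu_\infty)$, Theorem~\ref{thm:consistency}(b) says nothing there. I therefore keep the proof's core claim purely about $E_{\hat\phi}$ and $f$, and treat near-optimality of $g$ as a conditional remark.
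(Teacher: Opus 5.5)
Your proof is correct. Your first way of handling the unbounded seed is exactly the paper's argument. Lemma~\ref{lem:concentrated-policy} and Corollary~\ref{cor:eps-optimal} fix a compact $\mathcal{S}$ with $\mathbb{P}_{\mathcal{N}(0,I)}(\mathcal{S})\ge 1-\varepsilon_{\mathcal{S}}$ and approximate the constant map $\mathbf{s}\mapsto\mathbf{x}^*$ uniformly on $\mathcal{S}$ by universal approximation. They then pass to $f(E_\phi(\mathbf{s}))\ge f(\mathbf{x}^*)-\omega_f(\delta)$ via the modulus of continuity and leave $\hat f_\psi$, $D_\theta$ arbitrary.

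Your preferred second option is a genuinely more elementary route, with three advantages:
\begin{itemize}
\item Zeroing weights makes the network \emph{exactly} constant, so no approximation theorem is needed.
\item The guarantee holds for every seed in $\mathbb{R}^{d_s}$, not only on $\mathcal{S}$. The paper's guarantee excludes seeds of mass up to $\varepsilon_{\mathcal{S}}$ and defines the induced policy through the Gaussian restricted to $\mathcal{S}$.
\item The required parameters are explicit and bounded. Realizability inside the compact $\Phi$ thus becomes a transparent capacity condition, whereas the paper asserts it without argument.
\end{itemize}
In exchange, the paper's template relies only on the universal approximation property rather than on $\Phi$ containing degenerate weights. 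It would also carry over verbatim to non-constant target policies.

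Two minor fixes. With a linear output layer the bias is $\mathbf{x}^*$ itself, so it is bounded by $\max_{\mathbf{x}\in\mathcal{D}}\|\mathbf{x}\|$, not by $\mathrm{diam}(\mathcal{D})$. With the Tanh output layer of Appendix~\ref{app:vbomi-details}, the bias is a finite $\operatorname{arctanh}$-preimage. It hits an interior $\mathbf{x}^*$ exactly and a boundary $\mathbf{x}^*$ only up to $\delta$, which still suffices.

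You should drop your optional claim that $(\hat\phi,\hat\theta)$ nearly maximizes $g$, rather than making it conditional on $\mathbf{x}^*\in\operatorname{supp}(\mu_\infty)$; that condition does not rescue it. For an exactly constant policy, the joint law of $(\mathbf{x}_t,\hat y_t)$ equals the product of its marginals, so the Donsker--Varadhan term is $0$ for every $\theta$. A dispersed policy on which $\hat f_\psi$ is non-constant can instead earn a strictly positive $\sqrt{\beta}\,I_\theta$. For large $\beta$ this can outweigh its loss in exploitation, so it attains a larger value of $g$. Near-optimality for $f$ and for the training objective are therefore decoupled. This is why the paper, and your core argument, leave the surrogate and helper arbitrary.
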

\begin{proof}
See Appendix \ref{lem:concentrated-policy}.

\end{proof}}

\section{Numerical Evaluation}\label{sec:eval}

In this section, we present numerical evaluations of our proposed method,
Variational Black-Box optimization with Mutual Information (VBO-MI), and compare
its performance with several baselines~\citep{li2024a, NEURIPS2024_da30215e}.
These baselines include the following methods for Bayesian neural networks:
Laplace Linear Approximation (LLA)~\citep{pmlr-v130-immer21a, li2024a},
Hamiltonian Monte Carlo (HMC)~\citep{pmlr-v139-izmailov21a, li2024a},
Stochastic Gradient Hamiltonian Monte Carlo
(SGHMC)~\citep{pmlr-v32-cheni14, li2024a},
Deep Kernel Learning (DKL)~\citep{pmlr-v51-wilson16, li2024a},
Infinite-width Bayesian Neural Networks
(IBNN)~\citep{lee2018deep, li2024a},
and Gaussian Processes (GPs)~\citep{1snoek, li2024a}.
In addition, we include the Focal Sparse Gaussian Process
method~\citep{NEURIPS2024_da30215e} as a recent baseline.
We present experiments on two settings: first, three synthetic benchmark
functions; then, three real-world tasks of increasing dimensionality.
\paragraph{Implementation.}
For VBO-MI, we use a batch size of $B=30$ and $B=60$ for Real world task and synthetic functions respectively,
a learning rate of $2\times10^{-3}$ with Adam optimizer for all three networks,
$K_a = 5$ helper update steps, $K_b = 10$ action-net update steps,
$K_s = 10$ surrogate update steps per BO iteration, and $W = 30$
warm-up iterations. The total number of evaluations considers both evaluations needed during the warm-up, and main phase of algorithm (see  Algorithm \ref{alg:vbo}).
Baselines use the hyperparameter configurations reported
in~\citet{li2024a}.
All results are averaged over 5 random seeds; shaded regions denote
$\pm 1$ standard deviation.
Full architecture details and per-task configurations are provided
in Appendix~\ref{app:setup}.
\subsection{Synthetic Functions}

We evaluate our algorithm on three widely used benchmark functions: Ackley,
Levy, and Griewank~\citep{li2024a, NEURIPS2019_6c990b7a}.
All three functions are highly non-convex, contain many local optima, and have
their global minimum at zero---making them a diverse and challenging suite for
evaluating BO methods.
Because these functions are conventionally used as minimization problems, we
optimize $-f$ instead of $f$; maximizing $-f$ is equivalent to minimizing $f$,
and the global maximum of $-f$ is therefore 0 in all cases.

In Fig.~\ref{fig:ackley}, we compare all methods on the 10-dimensional Ackley
function.
Ackley is particularly challenging due to its near-flat outer region and sharp
global basin, which makes exploration critical.
We observe that VBO-MI achieves competitive performance, consistently reaching
higher final average rewards than the BNN baselines.
The Focal Sparse GP baseline performs strongly in the early iterations but VBO-MI
closes the gap and surpasses it as the number of iterations increases.

Fig.~\ref{fig:levy} presents results on the 20-dimensional Levy function.
Levy is a scalable multimodal function whose difficulty grows with dimension,
making the 20-dimensional setting a meaningful test of high-dimensional
exploration.
In this setting, VBO-MI demonstrates faster initial convergence and achieves
a higher final reward compared to all baselines, including GP and IBNN.

Finally, Fig.~\ref{fig:griewank} evaluates all methods on the 20-dimensional
Griewank function. Unlike Ackley and Levy, Griewank is non-separable due to its
product term, meaning that its dimensions interact, which can pose challenges
for surrogate-based methods relying on stationary kernels.
\textcolor{black}{
VBO-MI achieves a final reward comparable to that of Focal Sparse GP while
outperforming the other baseline methods. We hypothesize that its variational
mutual-information-based exploration signal, which does not impose a specific
covariance structure over the input space, may be beneficial in this setting.
Additional experiments on two synthetic functions, Hartmann (6D) and
Rastrigin (20D), are provided in Appendix~E. These results further evaluate
the performance of VBO-MI across synthetic functions with different
dimensionalities and characteristics.
}

\subsection{Real-World Tasks}

We further evaluate VBO-MI on three real-world benchmark tasks of increasing
dimensionality, following the experimental protocol of prior
work~\citep{li2024a, NEURIPS2019_6c990b7a}.
These tasks test whether the method scales to higher-dimensional,
non-stationary objectives where standard GP surrogates are known to
struggle~\citep{li2024a}.
For all methods, we report the best value so far across different evaluations and
report results averaged over multiple trials with standard deviation.
Further experimental details are provided in the Appendix.

\paragraph{Lunar Lander ($d=12$).}
In Fig.~\ref{fig:interferometer}, we evaluate all methods on the
OpenAI Gym LunarLander-v2 control task~\citep{NEURIPS2019_6c990b7a}.
The goal is to optimise the 12 parameters of a controller such that the average
episode reward over 50 randomly generated environments is maximised.
\textcolor{black}{VBO-MI is competitive with the strongest baseline on this task: the Focal Sparse GP baseline attains a marginally higher final mean reward, while VBO-MI retains a clear margin over the BNN-based baselines (DKL, SGHMC, IBNN, HMC, LLA).}
The non-stationarity of the reward landscape as a function of controller
parameters makes this task well-suited to our method, which does not rely on
stationary kernel assumptions.

\paragraph{Pest Control ($d=25$, categorical).}
Fig.~\ref{fig:pde} shows results on the Pest Control
problem~\citep{oh}, a combinatorial optimization task with 25
categorical input variables each taking 5 values.
We apply one-hot encoding following~\citet{pmlr-v37-hernandez-lobatob15}, 
and use their wrapper transformation $T(\cdot)$ to project the continuous 
output of $E_{\phi}$ back to a categorical {input} at each oracle 
query.
This is the highest-cardinality benchmark in our evaluation
($5^{25} \approx 3\times10^{17}$ configurations), making exhaustive search
infeasible.
VBO-MI achieves superior performance compared to all baselines, and we observe
that the advantage over the Focal Sparse GP grows as more iterations are
completed, consistent with the observation that our method benefits from
accumulated replay data.

\paragraph{Rover Trajectory ($d=60$).}
In Fig.~\ref{fig:lunar}, we evaluate on the Rover trajectory planning
benchmark~\citep{NEURIPS2019_6c990b7a}, which optimises the positions of 30
waypoints in a 2D environment (60 total dimensions).
This is the highest-dimensional task in our evaluation and directly tests
scalability.
We observe that VBO-MI maintains competitive performance relative to the
strongest baselines at this dimensionality.
The Focal Sparse GP and IBNN also perform well at this scale, consistent with
findings in the high-dimensional BO literature~\citep{li2024a}, but VBO-MI
achieves comparable or superior final reward values while requiring
significantly fewer real function evaluations per iteration due to its
surrogate-assisted inner loops.
\subsection{Ablation}
\label{sec:ablation}
We present ablation studies to isolate the contribution of
the surrogate network $\hat{f}_\psi$ and each term of
the variational acquisition function $L_{E_\phi}$.
All ablations are evaluated on Ackley ($d=10$) and Rover trajectory
planning ($d=60$), and all other hyperparameters are held fixed.
\paragraph {Surrogate network vs Direct queries}
To validate the role of the surrogate network $\hat{f}_\psi$ in
VBO-MI, we compare the complete implementation of the surrogate-assisted implementation against
an ablated baseline in which the surrogate is removed and the helper
and action-net updates return to querying the true objective $f$
directly inside the loops $K_a$ and $K_b$.
All other components network architectures, learning rates, batch
size, and number of iterations are held fixed, so the surrogate
is the sole variable.

Figs.~\ref{fig:ab1} and~\ref{fig:ab2} report best-so-far reward
averaged over multiple random seeds (shaded regions: $\pm 1$ standard
deviation) on Ackley ($d=10$) and Rover trajectory planning
($d=60$), respectively.

On the lower-dimensional Ackley task (Fig.~\ref{fig:ab1}), both
variants converge toward the global optimum, and the surrogate-assisted
method achieves a higher final reward.
The improvement is already visible in the early iterations, suggesting
that the surrogate provides a useful exploitation gradient from the
outset once it has been pre-trained on the warm-up observations.

The advantage of the surrogate becomes substantially more pronounced
on the higher-dimensional Rover task (Fig.~\ref{fig:ab2}).
Here, the original variant without surrogate converges more slowly
and plateaus at a noticeably lower reward value, whereas the
surrogate-assisted method continues to improve throughout the budget.

This behaviour is consistent with the role of the surrogate as a
query-efficient approximation to $\mathbb{E}[y_t \mid \mathcal{H}]$ as
in higher-dimensional spaces the action-net requires more gradient
steps to identify promising regions, and the surrogate makes those
steps possible without incurring additional real function evaluations.

Our ablation provides us with the conclusion that the surrogate provides a
meaningful benefit as dimensionality increases, justifying its
inclusion as a core component of VBO-MI rather than an optional
efficiency add-on.

From an oracle-efficiency perspective, the surrogate-assisted
variant reduces real $f$ queries per iteration from
$(K_a + K_b + 1) \times B$ to $B$ which is a $16\times$ reduction with
our default hyperparameters while achieving superior
optimization performance across both tasks when compared on the same budget.
\begin{figure*}[t]
\centering
\begin{subfigure}{0.48\textwidth}
    \centering
    \includegraphics[width=\linewidth]{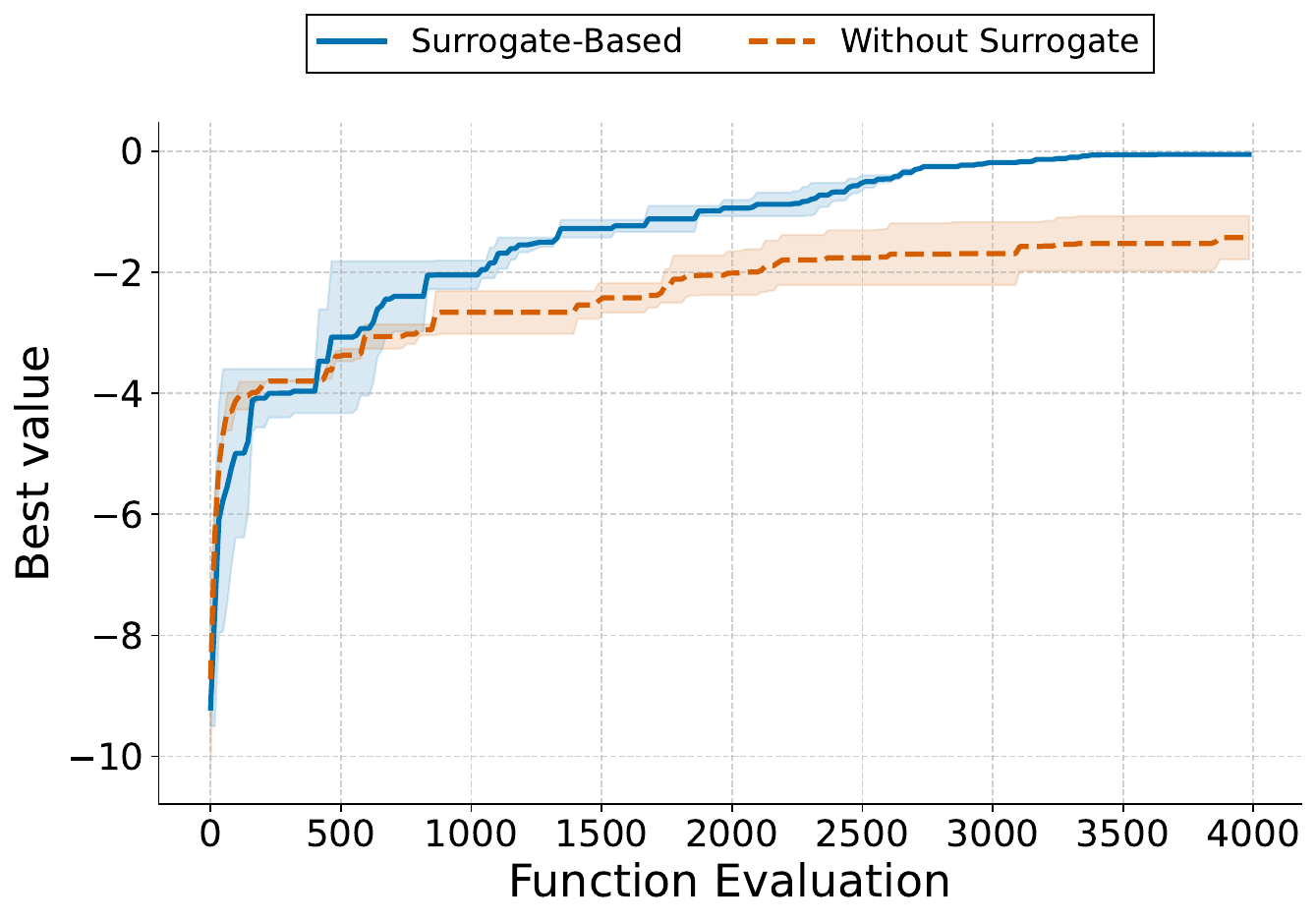}
    \caption{Ackley ($d=10$)}
    \label{fig:ab1}
\end{subfigure}
\hfill
\begin{subfigure}{0.48\textwidth}
    \centering
    \includegraphics[width=\linewidth]{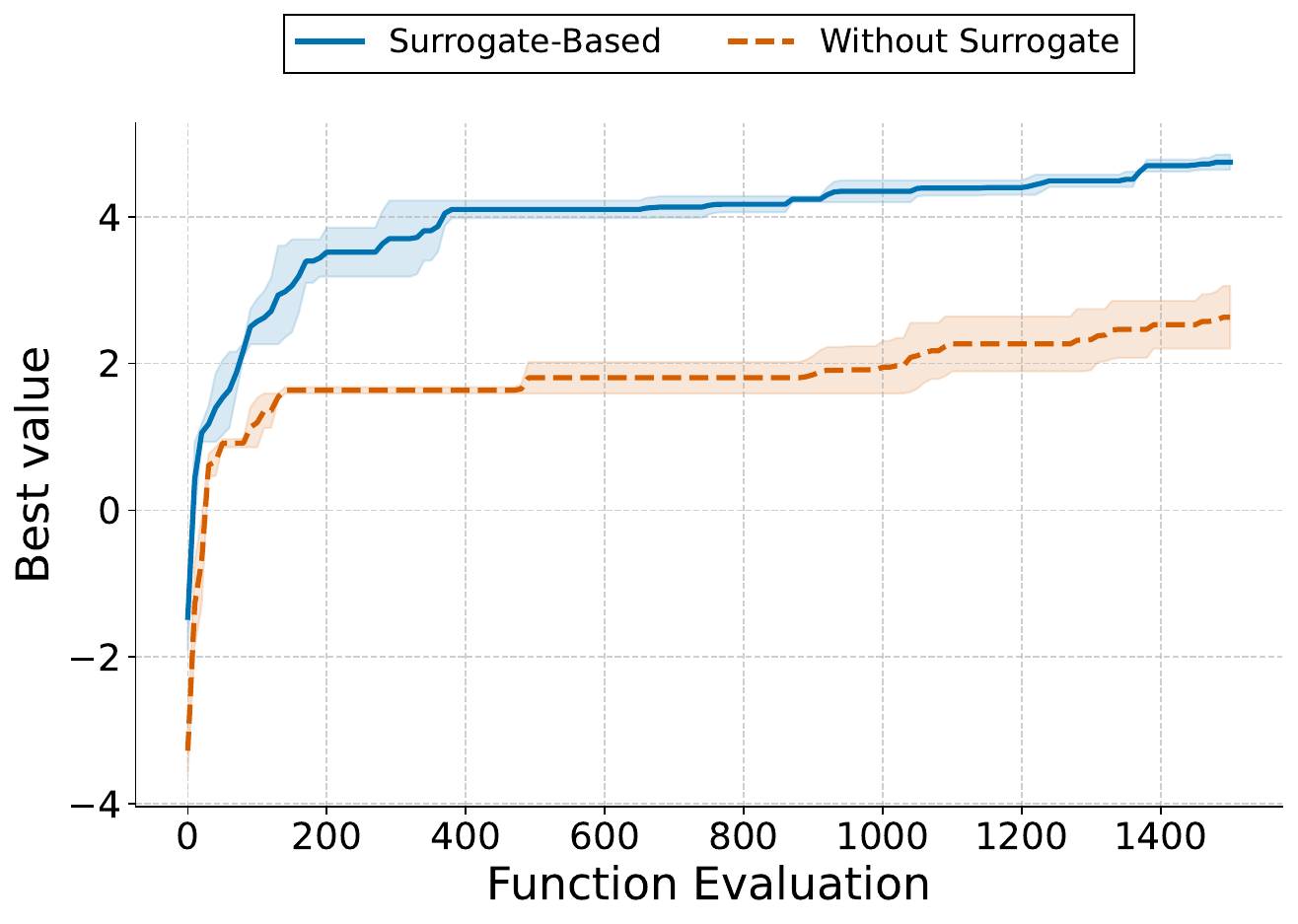}
    \caption{Rover trajectory ($d=60$)}
    \label{fig:ab2}
\end{subfigure}
\caption{Ablation I: surrogate network contribution.
         Surrogate-assisted VBO-MI vs.\ the original variant that
         queries $f$ directly inside the $K_a$ and $K_b$ loops.
         Mean $\pm$ 1 std over three trials.}
\label{fig:ablation-surrogate}
\end{figure*}
 
\begin{figure*}[t]
\centering
\begin{subfigure}{0.48\textwidth}
    \centering
    \includegraphics[width=\linewidth]{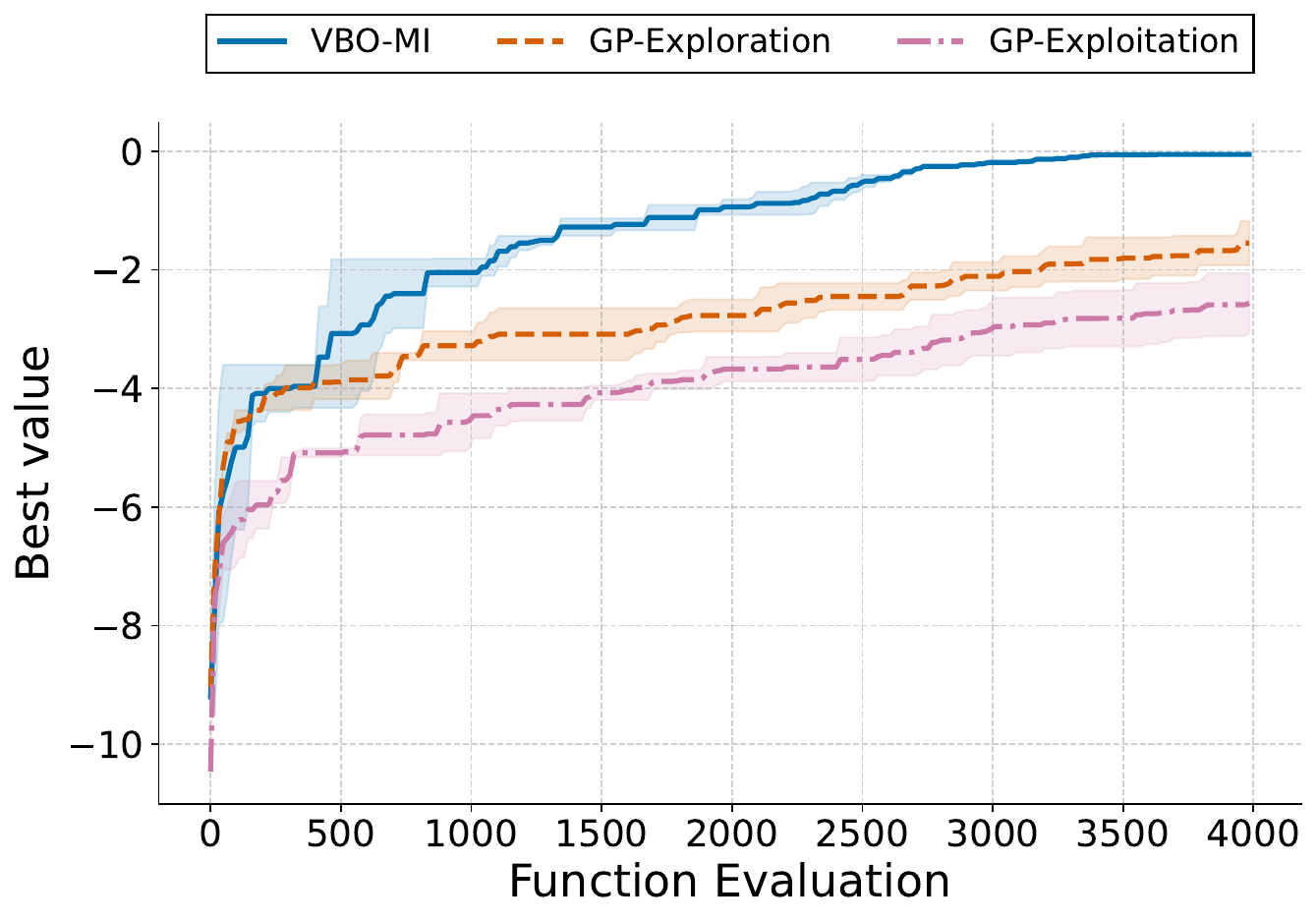}
    \caption{Ackley ($d=10$)}
    \label{fig:ab3}
\end{subfigure}
\hfill
\begin{subfigure}{0.48\textwidth}
    \centering
    \includegraphics[width=\linewidth]{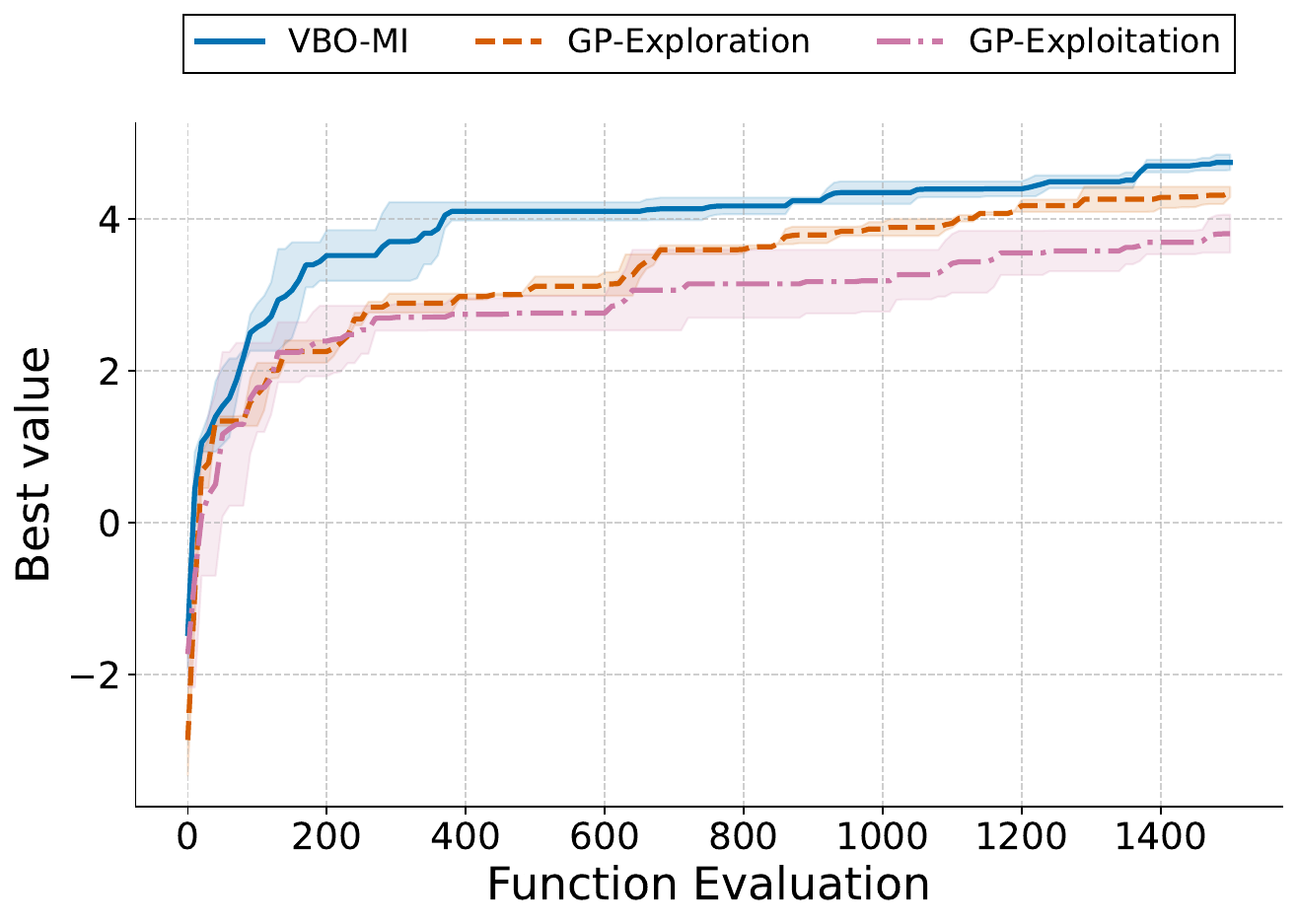}
    \caption{Rover trajectory ($d=60$)}
    \label{fig:ab4}
\end{subfigure}
\caption{Ablation II: acquisition function term contribution.
         Full VBO-MI vs.\ two hybrid variants: one replaces the
         exploration term with the GP posterior variance
         ($\sigma^2_{\mathrm{GP}}$), and one replaces the
         exploitation term with the GP posterior mean
         ($\mu_{\mathrm{GP}}$).
         Mean $\pm$ 1 std over three trials.}
\label{fig:ablation-acquisition}
\end{figure*}
\paragraph{Acquisition function terms.}
We assess the individual contributions of the exploration and
exploitation terms in $L_{E_\phi}$~\Eqref{vbomi-loss} by
constructing two hybrid variants.
In the first, the exploration term
$\sqrt{\beta}\,I_\theta(\mathbf{x}, \hat{y})$ is replaced by the
posterior variance of a GP surrogate,
$\sqrt{\beta}\,\sigma^2_{\mathrm{GP}}(\mathbf{x})$, while the
exploitation term $\hat{f}_\psi(\mathbf{x})$ is kept intact.
In the second, the exploitation term is replaced by the GP posterior
mean $\mu_{\mathrm{GP}}(\mathbf{x})$, while the exploration term
$I_\theta$ is kept intact.
Results are shown in Fig.~\ref{fig:ablation-acquisition}.
 
On both tasks (Figs.~\ref{fig:ab3}--\ref{fig:ab4}), the full VBO-MI
outperforms both hybrid variants, confirming that both terms
contribute to the performance of the algorithm.

Replacing the exploitation term with the GP posterior mean produces a
larger degradation than replacing the exploration term with the GP
variance, particularly on the higher-dimensional Rover task.

This indicates that the exploitation gradient provided by
$\hat{f}_\psi$ plays a more critical role in steering the action-net
toward high-reward regions than the exploration signal alone.
Nonetheless, removing the variational MI exploration term also
degrades performance, demonstrating that the information-theoretic
exploration in VBO-MI is not redundant and cannot be replaced by
standard GP-based uncertainty without loss.

Taken together, both terms are necessary for the full performance of
VBO-MI, and neither can be substituted by its GP-based counterpart
without incurring a measurable cost.
\paragraph{Additional ablations.}
Appendix~\ref{app:additional} reports three further sensitivity
studies.
The acquisition term ablation of Fig.~\ref{fig:benchmarks-ex-comp}
is extended to four additional real-world tasks (PDE optimization,
Interferometer, Lunar Lander, Pest Control), confirming that the
finding --- exploitation removal hurts more than exploration removal
--- holds consistently across task types and dimensionalities.
Fig.~\ref{fig:benchmarks-batch} shows that VBO-MI's performance
improves with batch size up to a threshold, beyond which gains
diminish, a pattern shared by all baselines.
Fig.~\ref{fig:benchmarks-beta} shows that VBO-MI is less sensitive
to the exploration parameter $\beta$ than GP-based baselines,
maintaining competitive final reward across $\beta \in [10, 200]$.

\section{Conclusion}\label{sec:conclusion}
We presented VBO-MI, a fully gradient-based black-box optimization
framework that eliminates GP priors and specific variational family
assumptions.
By combining a variational MI estimator for exploration with a
data-driven surrogate for exploitation, all three networks in VBO-MI
are trained via backpropagation while treating $f$ as a strict black
box.
The surrogate reduces oracle queries to one batch per iteration
(a $16\times$ reduction), and the amortised action-net replaces the
inner-loop acquisition optimizer (a $10^2\times$ FLOP reduction).
\textcolor{black}{We established consistency guarantees and demonstrated competitive or
superior performance on diverse benchmarks spanning $d=10$ to $d=60$.
Future work includes extending VBO-MI to multi-objective settings,
investigating tighter MI estimators for high-dimensional reward landscapes, and
establishing finite-time regret bounds for VBO-MI in place of the
asymptotic and existence results presented here.}

\section*{Impact Statement}

This work proposes a general-purpose framework for \textcolor{black}{black-box }optimization based on variational mutual information estimation\nocite{fmnf0,fmnf1}, and a neural actor-critic structure. As a methodological contribution, the proposed approach is intended to improve optimization performance across a wide range of tasks and does not target any specific application domain. \textcolor{black}{Black-box }optimization techniques are commonly used in areas such as engineering design, scientific experimentation, and machine learning model tuning. Such methods may be applied in diverse fields including biology, chemistry, or finance, and as a result there are many potential societal consequences of our work, none
which we feel must be specifically highlighted here.

\bibliography{main2026}
\bibliographystyle{tmlr}
\appendix
\section{Proofs of Theorem~\ref{thm:consistency} and Proposition~\ref{prop:existence}}

\label{app:consistency}

{\color{black} We prove the results of Theorem~\ref{thm:consistency} and proposition 
in \textcolor{black}{Section} \ref{theoreticalresults}. The proof proceeds via following lemmas and setup.

\paragraph{Setup.}
At each BO iteration $t$, the action-net $E_\phi$ maps $B$ i.i.d.\ 
seeds $\{s^{(i)}_t\}_{i=1}^B \sim \mathcal{N}(0,I)$ to candidate 
actions $\{\mathbf{x}^{(i)}_t\}_{i=1}^B$. Since the seeds are i.i.d.\ and 
$E_\phi$ is a deterministic map, the actions $\{\mathbf{x}^{(i)}_t\}_{i=1}^B$ 
are i.i.d.\ within the batch. The surrogate predictions 
$\hat{y}^{(i)}_t = \hat{f}_\psi(\mathbf{x}^{(i)}_t)$ are deterministic 
functions of these i.i.d.\ inputs and therefore also i.i.d.\ within 
the batch.
\begin{remark}
   Under Assumption (iii), $\Theta$ and $\Psi$ are compact, and $\hat f_\psi, D_\theta$ are jointly continuous in their parameters and inputs. Hence $\mathcal{Y} := \{\hat f_\psi(x):\psi\in\Psi, x\in\mathcal{D}\}$ is compact, and $D_\theta$ is bounded on the compact set $\mathcal{D}\times\mathcal{Y}$: there exists $C<\infty$ with $|D_\theta(x,\hat y)|\le C$ for all $\theta\in\Theta$, $x\in\mathcal{D}$, $\hat y\in\mathcal{Y}$. Consequently $\mathbb{E}[D_\theta(x_t,\hat y_t)]$ and $\mathbb{E}[e^{D_\theta(x_t,\hat y_t)}]$ are (uniformly) finite and the latter is bounded away from zero, for every $\theta\in\Theta$. This is what we invoke in the proof of Lemma A.1 below; it is a consequence of Assumption (iii).
\end{remark}
\begin{lemma}[Within-batch convergence — Conclusion (a)]
\label{lem:within-batch}
By the boundedness remark above (as a consequence of Assumption (iii)) and the i.i.d.\ 
within-batch structure above, the sample-average approximation 
$\widehat{g}(\phi,\theta)$ converges to $g(\phi,\theta)$ 
$\mathbb{P}$-almost surely as $B\to\infty$.
\end{lemma}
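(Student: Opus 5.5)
The plan is to decompose $\widehat{g}(\phi,\theta)$ into its three sample-average pieces and handle each with the strong law of large numbers (SLLN), then combine them by continuity. Fix $(\phi,\theta)$ and write $\hat g = \hat A_B + \sqrt{\beta}\,(\hat M_B - \log \hat V_B)$. Here $\hat A_B = \frac1B\sum_i \hat y_t^{(i)}$ is the exploitation average, and $\hat M_B = \frac1B\sum_i D_\theta(\mathbf{x}^{(i)}_t,\hat y^{(i)}_t)$ is the joint-term average. Finally, $\hat V_B = \frac1{B^2}\sum_{i,j} e^{D_\theta(\mathbf{x}^{(i)}_t,\hat y^{(j)}_t)}$ is the product-of-marginals average. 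The population counterparts are $A=\mathbb{E}[\hat f_\psi(\mathbf{x}_t)]$, $M=\mathbb{E}_{P_{\mathbf{x}_t,\hat y_t}}[D_\theta]$ and $V=\mathbb{E}_{P_{\mathbf{x}_t}\times P_{\hat y_t}}[e^{D_\theta}]$. These match~\Eqref{loss-f} once the exploitation term is identified with the surrogate mean, as in~\Eqref{vbomi-loss}.

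By the Setup, the pairs $(\mathbf{x}^{(i)}_t,\hat y^{(i)}_t)$ are i.i.d.\ within the batch, since they are deterministic images of i.i.d.\ Gaussian seeds. By the boundedness remark, $|D_\theta|\le C$ and $\hat f_\psi$ is bounded on compact $\mathcal{D}$, so all integrands are integrable. The classical SLLN therefore gives $\hat A_B\to A$ and $\hat M_B\to M$ almost surely.

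The main obstacle is $\hat V_B$, which is a double sum and not an i.i.d.\ average. I would treat it as a V-statistic with bounded kernel $h(z_i,z_j)=e^{D_\theta(\mathbf{x}^{(i)},\hat y^{(j)})}$, where $z_i=(\mathbf{x}^{(i)},\hat y^{(i)})$. Split it as
\[
\hat V_B = \frac{1}{B^2}\sum_{i} h(z_i,z_i) + \frac{B-1}{B}\cdot\frac{1}{B(B-1)}\sum_{i\neq j} h(z_i,z_j).
\]
The diagonal part is at most $e^{C}/B\to 0$. The off-diagonal part is a U-statistic with bounded (hence integrable) kernel, whose mean is exactly $\mathbb{E}_{P_{\mathbf{x}}\times P_{\hat y}}[e^{D_\theta}]=V$ because $z_i$ and $z_j$ are independent for $i\neq j$. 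Hoeffding's SLLN for U-statistics then gives almost sure convergence to $V$.

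If I want to avoid quoting that result, a fallback applies to the bounded kernel. A Hoeffding/McDiarmid bounded-differences inequality shows that changing one $z_i$ moves $\hat V_B$ by at most $2e^{C}/B$. This yields $\mathbb{P}(|\hat V_B-\mathbb{E}\hat V_B|>\eta)\le 2e^{-B\eta^2/(2e^{2C})}$, which is summable in $B$. Borel--Cantelli then gives almost sure convergence, and $\mathbb{E}\hat V_B\to V$ since the bias is $O(e^C/B)$.

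Finally, the remark guarantees $V\ge e^{-C}>0$, so $\log$ is continuous at $V$, and the continuous mapping theorem gives $\log\hat V_B\to\log V$ almost surely. Intersecting the three probability-one events and using the linearity of $g$ in $A$, $M$ and $\log V$ yields $\widehat g(\phi,\theta)\to g(\phi,\theta)$ $\mathbb{P}$-almost surely as $B\to\infty$.
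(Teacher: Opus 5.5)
Your proposal is correct and follows the paper's proof. Both use the same three-term decomposition, the SLLN for the exploitation and joint Donsker--Varadhan averages, a U-statistic law of large numbers for the double sum (the paper cites Serfling, Thm.~5.4), and continuity of $\log$ away from zero. Your explicit removal of the $O(e^{C}/B)$ diagonal terms, which the paper silently folds into its ``U-statistic'', and your McDiarmid--Borel--Cantelli fallback are small but genuine tightenings; you also correctly attribute the boundedness to the remark under Assumption~(iii), not to Assumption~(i) as the paper's text does.
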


\begin{proof}
The loss $L_{E_\phi}$ in~\Eqref{vbomi-loss} contains 
three terms. We handle each separately.

\textbf{Term 1 (exploitation):}
Since $\{\hat{y}^{(i)}_t\}_{i=1}^B$ are i.i.d.\ with 
$\mathbb{E}[\hat{y}_t]<\infty$, the strong law of large numbers 
(SLLN) gives:
\begin{equation}
    \frac{1}{B}\sum_{i=1}^B \hat{y}^{(i)}_t 
    \;\xrightarrow{B\to\infty}\; 
    \mathbb{E}_{P_{\mathbf{x}_{t}}}[\hat{f}_\psi(\mathbf{x}_{t})]
    \quad \mathbb{P}\text{-a.s.}
\end{equation}

\textbf{Term 2 (DV joint expectation):}
Since $\{(\mathbf{x}^{(i)}_t, \hat{y}^{(i)}_t)\}_{i=1}^B$ are i.i.d.\ and 
$\mathbb{E}[D_\theta(\mathbf{x}_{t},\hat{y}_t)]<\infty$ by Assumption~(i), 
SLLN gives:
\begin{equation}
    \frac{1}{B}\sum_{i=1}^B D_\theta(\mathbf{x}^{(i)}_t, \hat{y}^{(i)}_t) 
    \;\xrightarrow{B\to\infty}\; 
    \mathbb{E}_{P_{\mathbf{x}_{t},\hat{y}_t}}\!\bigl[D_\theta(\mathbf{x}_{t},\hat{y}_t)\bigr]
    \quad \mathbb{P}\text{-a.s.}
\end{equation}

\textbf{Term 3 (DV marginal log-expectation):}
For the cross-sum, note that for $i\neq j$ the pairs 
$(\mathbf{x}^{(i)}_t, \hat{y}^{(j)}_t)$ are independent draws from 
$P_{\mathbf{x}_{t}}\times P_{\hat{y}_t}$, since $\mathbf{x}^{(i)}$ and $\hat{y}^{(j)}$ 
are functions of distinct i.i.d.\ seeds. The double average is 
therefore a second-order U-statistic:
\begin{equation}
    \frac{1}{B^2}\sum_{i=1}^B\sum_{j=1}^B 
    e^{D_\theta(\mathbf{x}^{(i)}_t,\,\hat{y}^{(j)}_t)}
    \;\xrightarrow{B\to\infty}\;
    \mathbb{E}_{P_{\mathbf{x}_{t}}\times P_{\hat{y}_t}}\!\bigl[
    e^{D_\theta(\mathbf{x}_{t},\hat{y}_t)}\bigr]
    \quad \mathbb{P}\text{-a.s.},
\end{equation}
by the law of large numbers for U-statistics 
\citep[Theorem~5.4]{Serfling1980}. 
Since $\log(\cdot)$ is continuous and the argument is bounded away 
from zero by Assumption~(i), the continuous mapping theorem gives:
\begin{equation}
    \log\!\left(\frac{1}{B^2}\sum_{i,j} 
    e^{D_\theta(\mathbf{x}^{(i)}_t,\hat{y}^{(j)}_t)}\right)
    \;\xrightarrow{B\to\infty}\;
    \log\mathbb{E}_{P_{\mathbf{x}_{t}}\times P_{\hat{y}_t}}\!\bigl[
    e^{D_\theta(\mathbf{x}_{t},\hat{y}_t)}\bigr]
    \quad \mathbb{P}\text{-a.s.}
\end{equation}
Combining the three limits gives 
$\widehat{g}(\phi,\theta)\to g(\phi,\theta)$ $\mathbb{P}$-a.s.
The argument for $L_{D_\theta}$ is identical since it contains the 
same two DV terms.
\end{proof}
Next we state a theorem and a definition which will be used in our proof for conclusion (b) of the theorem.
\begin{theo}[Uniform risk bound for adapted sequences; \citealp{lauer2023uniform}]
\label{fact:lauer}
Let $\mathbf{Z}_n = (\mathbf{x}_i, y_i)_{i=1}^n$ be adapted to a filtration $(\mathcal{F}_i)$, with no further assumption on stationarity or mixing. Let $\mathcal{L}$ be a class of bounded losses, $\ell : \mathcal{V}^2 \to [0, C]$. Then for any $\delta \in (0,1)$, with probability at least $1 - \delta$,
\begin{equation}
\forall \psi \in \Psi, \quad |L_{n}(\psi) - \hat{L}_{n}(\psi)| \leq 2\overline{R}_n(\mathcal{L}) + C\sqrt{\frac{\log(2/\delta)}{2n}},
\end{equation}
where
\begin{align}
L_n(\psi) &:= \frac{1}{n}\sum_{i=1}^n \mathbb{E}[\ell_\psi(\mathbf{x}_i, y_i) \mid \mathcal{F}_{i-1}], \\
\hat{L}_n(\psi) &:= \frac{1}{n}\sum_{i=1}^{n} \ell_\psi(\mathbf{x}_i, y_i),
\end{align}
and $\overline{R}_n(\mathcal{L})$ is any bound on the sequential Rademacher complexity of $\mathcal{L}$ expressed in terms of the marginal distributions of $(\mathbf{x}_i, y_i)$.
\end{theo}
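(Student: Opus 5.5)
The plan is the standard two-step route for uniform deviation bounds: first show that the supremum $S_n := \sup_{\psi\in\Psi}|L_n(\psi)-\hat L_n(\psi)|$ concentrates around its mean, then bound that mean by a symmetrization argument adapted to the filtration $(\mathcal{F}_i)$.

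\textbf{Step 1 (martingale structure and concentration).} For fixed $\psi$, define
\[
M_n(\psi):=\sum_{i=1}^n\bigl(\ell_\psi(\mathbf{x}_i,y_i)-\mathbb{E}[\ell_\psi(\mathbf{x}_i,y_i)\mid\mathcal{F}_{i-1}]\bigr).
\]
This is a martingale with increments in $[-C,C]$, and $n\bigl(\hat L_n(\psi)-L_n(\psi)\bigr)=M_n(\psi)$. I would apply a bounded-differences inequality of Azuma--McDiarmid type to $S_n$, viewed as a function of the sequence $\mathbf{Z}_n$. Changing one coordinate $(\mathbf{x}_i,y_i)$ alters the $i$-th summand by at most $C/n$. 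It can also alter the conditional expectations at later indices, and this is where the filtration setting bites: I would handle it by the Doob-martingale decomposition of $S_n$ along $(\mathcal{F}_i)$, invoking the ``marginal'' formulation of the complexity bound to control the later terms. Applying this to both tails of $S_n$ should give, with probability at least $1-\delta$,
\[
S_n\le\mathbb{E}S_n+C\sqrt{\log(2/\delta)/(2n)}.
\]

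\textbf{Step 2 (sequential symmetrization).} To bound $\mathbb{E}S_n$, I would introduce a decoupled tangent sequence $(\mathbf{x}_i',y_i')$: conditionally on $\mathcal{F}_{i-1}$, each pair has the same law as $(\mathbf{x}_i,y_i)$ and is independent of it. Then $\mathbb{E}[\ell_\psi(\mathbf{x}_i,y_i)\mid\mathcal{F}_{i-1}]=\mathbb{E}[\ell_\psi(\mathbf{x}_i',y_i')\mid\mathcal{F}_{i-1}]$. By Jensen's inequality,
\[
\mathbb{E}S_n\le\frac1n\,\mathbb{E}\sup_\psi\Bigl|\sum_i\bigl(\ell_\psi(\mathbf{x}_i',y_i')-\ell_\psi(\mathbf{x}_i,y_i)\bigr)\Bigr|.
\]
Since each pair is exchangeable given the past, inserting Rademacher signs $\varepsilon_i$ in the manner of Rakhlin--Sridharan--Tewari yields a tree-indexed (sequential) Rademacher average. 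The triangle inequality then splits this into two copies, which produces the factor $2\overline{R}_n(\mathcal{L})$.

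\textbf{Main obstacle.} The last requirement of the statement is the delicate part. Sequential Rademacher complexity is naturally a supremum over all trees, and the statement asks for a bound $\overline{R}_n$ expressed through the marginal laws of $(\mathbf{x}_i,y_i)$. I would try to restrict the trees to those whose paths are realizable under the conditional laws and integrate out, so that the bound depends only on the marginals. Failing that, I would accept any worst-case upper bound on the sequential Rademacher complexity, which the phrase ``any bound'' in the statement permits. Combining Steps 1 and 2 then gives the displayed inequality simultaneously for all $\psi\in\Psi$.
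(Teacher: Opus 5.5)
The paper gives no argument of its own for this statement; it only cites Lauer (2023). So your proposal has to stand alone, and Step~1 does not. McDiarmid's bounded-differences inequality needs independent coordinates. For an adapted sequence, the Doob martingale $\mathbb{E}[S_n\mid\mathcal{F}_i]$ need not have increments of order $C/n$, because a single $Z_i=(\mathbf{x}_i,y_i)$ can change the conditional law of the entire future. The martingale structure of each $M_n(\psi)$ does not survive the supremum.

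In fact the intermediate claim $S_n\le\mathbb{E}S_n+C\sqrt{\log(2/\delta)/(2n)}$ is false in general. Take $C=1$ and $y_i\equiv 0$. Let $\mathbf{x}_1$ be a fair coin on two points $a,b$. Given $\mathbf{x}_1=a$, put $\mathbf{x}_2=\dots=\mathbf{x}_n=z_0$. Given $\mathbf{x}_1=b$, draw $\mathbf{x}_2,\dots,\mathbf{x}_n$ i.i.d.\ uniform on $\{1,\dots,M\}$ with $M\gg n^2$. Let $\mathcal{L}=\{\mathbf{1}_A : A\subseteq\{1,\dots,M\}\}$.
\begin{itemize}
\item On the first branch, $S_n=0$.
\item On the second branch, $A=\{\mathbf{x}_2,\dots,\mathbf{x}_n\}$ gives $S_n\ge (n-1)/n-(n-1)^2/(nM)\approx 1$.
\end{itemize}
So $\mathbb{E}S_n\approx 1/2$, the first Doob increment $\mathbb{E}[S_n\mid\mathcal{F}_1]-\mathbb{E}S_n$ is already $\pm 1/2$, and $S_n-\mathbb{E}S_n\approx 1/2$ with probability $1/2$. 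No appeal to a complexity term can shrink that increment.

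The same example settles your ``main obstacle.'' The Rademacher average computed from the marginals, or averaged over the joint law, is about $1/4$. So $2\overline{R}_n\approx 1/2$, while $S_n\approx 1$ with probability $1/2$. Integrating the trees against the conditional laws therefore cannot work. $\overline{R}_n$ must upper-bound the worst-case (supremum over trees) sequential complexity, so your fallback is the only viable option.

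The repair is to concentrate $S_n$ around the worst-case value rather than around $\mathbb{E}S_n$. Write $A_t(\psi)=\frac1n\sum_{i\le t}\bigl(\ell_\psi(Z_i)-\mathbb{E}[\ell_\psi(Z_i)\mid\mathcal{F}_{i-1}]\bigr)$. Let $V_t$ be the supremum, over all adapted continuations of the process on steps $t+1,\dots,n$, of the conditional expectation of $\sup_\psi|A_n(\psi)|$ given the prefix.
\begin{itemize}
\item $V_n=S_n$.
\item $V_0=\sup_{\text{laws}}\mathbb{E}S_n\le 2\overline{R}_n$. This holds because your Step~2, the Rakhlin--Sridharan--Tewari sequential symmetrization, is fine and applies to every adapted law.
\item Concatenating the actual conditional law of $Z_t$ with near-optimal continuations gives $\mathbb{E}[V_t\mid\mathcal{F}_{t-1}]\le V_{t-1}$, modulo measurable-selection details.
\item Given $\mathcal{F}_{t-1}$, $V_t$ depends on $Z_t$ only through the $t$-th summand of $A_t$, and it is $1$-Lipschitz in that summand for the sup norm. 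Hence $D_t:=V_t-\mathbb{E}[V_t\mid\mathcal{F}_{t-1}]$ has conditional range at most $C/n$.
\end{itemize}
Since $V_t-V_{t-1}\le D_t$, we get $S_n-V_0\le\sum_t D_t$. Azuma--Hoeffding then gives, with probability at least $1-\delta$, $S_n\le 2\overline{R}_n+C\sqrt{\log(1/\delta)/(2n)}$. This implies the stated bound.
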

\begin{proof}
    See. \citealp{lauer2023uniform}.
\end{proof}
\begin{definition}[Empirical pseudo-metric and uniform covering number]
For a sequence $\mathbf{x}_n = (\mathbf{x}_i)_{1\le i\le n} \in \mathcal{D}^n$, define
the empirical $L^2$ pseudo-metric on functions $\mathcal{D}\to\mathbb{R}$ by
\begin{equation}
    d_{2,\mathbf{x}_n}(g,g') \;=\;
    \left(\frac{1}{n}\sum_{i=1}^n |g(\mathbf{x}_i)-g'(\mathbf{x}_i)|^2\right)^{1/2}.
\end{equation}
For a function class $\mathcal{F}$ and scale $\epsilon > 0$, the covering
number $\mathcal{N}(\epsilon,\mathcal{F},d_{2,\mathbf{x}_n})$ is the
smallest cardinality of a set $\mathcal{H}\subseteq\mathcal{F}$ such that
every $g\in\mathcal{F}$ satisfies $d_{2,\mathbf{x}_n}(g,h)<\epsilon$ for
some $h\in\mathcal{H}$. 
\end{definition}

\begin{lemma}[Surrogate convergence — Conclusion (b)]
\label{lem:surrogate-v2}
Under Assumptions of the Theorem 4.2,
as $n = |\mathcal{H}| \to \infty$:
\begin{equation}
    \sup_{\mathbf{x}\in\mathcal{D}}\,\bigl|\hat{f}_{\psi^*}(\mathbf{x}) - f(\mathbf{x})\bigr|
    \;\xrightarrow{\mathbb{P}\text{-a.s.}}\; 0.
\end{equation}
\end{lemma}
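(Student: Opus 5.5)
The plan has four steps: a martingale uniform deviation bound, identification of the limiting risk, an upgrade from $L^2$ to uniform convergence via equicontinuity, and an extension of the supremum from $\mathrm{supp}(\mu_\infty)$ to all of $\mathcal{D}$.

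\textbf{Step 1: uniform deviation bound.} I would apply Theorem~\ref{fact:lauer} to the squared loss $\ell_\psi(\mathbf{x},y)=(\hat f_\psi(\mathbf{x})-y)^2$, taken over the buffer sequence $\mathbf{Z}_n=(\mathbf{x}_i,y_i)_{i\le n}$ adapted to $\mathcal{F}_i$. Boundedness of the loss class needs two ingredients. First, $\sup_{\psi,\mathbf{x}}|\hat f_\psi(\mathbf{x})|<\infty$, which follows from compactness of $\Psi$ and $\mathcal{D}$ and joint continuity (Assumption~(iii)). Second, a truncation of $y$: I would truncate at a level $M_n\uparrow\infty$ and control the tail with the conditional second moments in Assumption~(i) and a martingale Borel--Cantelli argument.

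For the sequential Rademacher term $\overline{R}_n$, I would use the layerwise Lipschitz structure. This gives a covering bound $\log\mathcal{N}(\epsilon,\{\hat f_\psi\},d_{2,\mathbf{x}_n})\lesssim p\log(1/\epsilon)$, with $p$ the parameter count, and Dudley chaining then yields $\overline{R}_n=O(\sqrt{\log n/n})$. Choosing $\delta_n=n^{-2}$ and applying Borel--Cantelli gives $\sup_\psi|L_n(\psi)-\hat L_n(\psi)|\to0$ almost surely.

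\textbf{Step 2: identifying the limiting risk.} By the martingale-difference property in (i),
\[
L_n(\psi)=\frac1n\sum_{i=1}^n(\hat f_\psi(\mathbf{x}_i)-f(\mathbf{x}_i))^2+\frac1n\sum_{i=1}^n\mathbb{E}[\epsilon_i^2\mid\mathcal{F}_{i-1}],
\]
where the cross term vanishes in conditional expectation. The second sum does not depend on $\psi$. The first sum equals $\int(\hat f_\psi-f)^2\,d\mu_n$. Since the integrand is continuous and bounded, and the family is equicontinuous in $\psi$ by Assumption~(iii), weak convergence $\mu_n\to\mu_\infty$ gives
\[
\sup_\psi\Bigl|\int(\hat f_\psi-f)^2\,d\mu_n-\int(\hat f_\psi-f)^2\,d\mu_\infty\Bigr|\to0 .
\]
Combining this with Step~1, the empirical minimizer $\psi^*_n$ is asymptotically a minimizer of $R(\psi)=\|\hat f_\psi-f\|^2_{L^2(\mu_\infty)}$. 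By universal approximation (Assumption~(ii), sufficient capacity), $\inf_\psi R(\psi)=0$, so $R(\psi^*_n)\to0$ almost surely.

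\textbf{Step 3: from $L^2(\mu_\infty)$ to uniform convergence on the support.} The family $\{\hat f_\psi-f\}$ is uniformly equicontinuous on the compact set $\mathcal{D}$. Suppose $|\hat f_{\psi^*_n}(\mathbf{x}_0)-f(\mathbf{x}_0)|\ge\eta$ at some $\mathbf{x}_0\in\mathrm{supp}(\mu_\infty)$. Then the same error exceeds $\eta/2$ on a ball $B_r(\mathbf{x}_0)$, with $r$ independent of $n$, and that ball has positive $\mu_\infty$-mass. A compactness and finite-cover argument over the support makes this mass uniformly bounded below. This contradicts $R\to0$, so the convergence is uniform on $\mathrm{supp}(\mu_\infty)$.

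\textbf{Step 4: extending the supremum to all of $\mathcal{D}$ (main obstacle).} The statement requires the supremum over the whole of $\mathcal{D}$, so I would still need to show $\mathrm{supp}(\mu_\infty)=\mathcal{D}$. My first attempt would exploit the query mechanism. Each $\mathbf{x}_t=E_\phi(\mathbf{s}_t)$ is the pushforward of a Gaussian, which has full support on the seed space. I would then argue that the DV exploration term penalizes collapse: $I_\theta(\mathbf{x}_t;\hat y_t)$ is degenerate for concentrated $P_{\mathbf{x}_t}$. Together with the Gaussian warm-up batches, this should prevent the limiting occupation measure from missing an open subset of $\mathcal{D}$. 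If this argument fails for general dynamics, the fallback is to make the identification $\mathrm{supp}(\mu_\infty)=\mathcal{D}$ an explicit part of the stabilization hypothesis in Remark~\ref{rem:lmmeasure}. Under that hypothesis the conclusion of Step~3 is exactly the stated uniform convergence over $\mathcal{D}$. This full-support step is where I expect the real difficulty to lie.
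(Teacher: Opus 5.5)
Your Steps 1--3 follow the paper's proof closely. Both use Lauer's bound with $\delta_n=n^{-2}$ and Borel--Cantelli, the martingale-difference decomposition of the conditional risk, uniform-in-$\psi$ weak convergence via equicontinuity, and an equicontinuity upgrade to uniform convergence on $\mathrm{supp}(\mu_\infty)$. Your parametric cover with $\log\mathcal{N}=O(p\log(1/\epsilon))$ is a simpler substitute for the paper's spectral-norm bound of Bartlett et al., and your finite-cover lower bound on ball masses is a quantitative version of its Arzel\`a--Ascoli contradiction. The step that fails as written is $\inf_\psi R(\psi)=0$ \emph{by universal approximation}. Under Assumption~(iii), $\Psi$ is a fixed compact set and $\psi\mapsto R(\psi)$ is continuous, so the infimum is attained. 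Hence $\inf_\psi R(\psi)=0$ means a single network of the fixed architecture coincides with $f$ on $\mathrm{supp}(\mu_\infty)$. Universal approximation only gives, for each $\eta>0$, an $\eta$-dependent architecture with $\inf R\le\eta^2$. Letting capacity grow with $n$ would break the $n$-independent constants (covering bound, loss bound, equicontinuity modulus) that your Steps 1--3 rely on. Without more, you only get that $\limsup_n\sup_{\mathrm{supp}(\mu_\infty)}|\hat f_{\psi^*_n}-f|$ is bounded by a quantity controlled by the class's approximation error, not convergence to $0$. The paper closes this with an explicit realizability hypothesis at the start of its proof: $f$ is representable by the fixed surrogate class. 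You need it too, at least on $\mathrm{supp}(\mu_\infty)$.

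Your first attempt at Step~4 cannot work, and full support is a genuinely extra hypothesis. Nothing in (i)--(iii) excludes a degenerate $\mu_\infty$, such as a point mass. The warm-up contributes only $WB$ points, i.e.\ mass $WB/n\to0$ in $\mu_n$, so it leaves no trace in $\mu_\infty$. The law $(E_\phi)_\#\mathcal{N}(0,I)$ is supported on the closure of $E_\phi(\mathbb{R}^{d_s})$, which need not be $\mathcal{D}$. The DV term, with a bounded critic and $\hat y=\hat f_\psi(\mathbf{x})$ deterministic in $\mathbf{x}$, only rewards non-degeneracy of $\hat y$. A law on a small ball where $\hat f_\psi$ is non-constant already achieves that. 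Worse, if the queries concentrate near $\mathbf{x}^*$, which is the algorithm's goal and the picture in the paper's concentrated-policy lemma, then $\mathrm{supp}(\mu_\infty)$ lies in a small ball around $\mathbf{x}^*$. Your fallback is exactly what the paper does. Its proof reads Assumption~(ii) as including ``$\mu_\infty$ with full support on $\mathcal{D}$''. Meanwhile the main-text theorem states conclusion~(b) only over $\mathrm{supp}(\mu_\infty)$, which is all Steps 1--3 deliver without that strengthening.

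A smaller point on Step~1: the paper simply assumes bounded noise, whereas your truncation needs $\mathbb{E}[\epsilon_t^2\mid\mathcal{F}_{t-1}]$ bounded uniformly in $t$. The paper's Step~4 also uses a constant $\sigma^2$. Moreover, Lauer's deviation term scales like $M_n^2\sqrt{\log n/n}$, which forces $M_n=o(n^{1/4})$. At that level, second moments are too weak for a Borel--Cantelli argument that exceedances occur only finitely often. Either control the averaged $\psi$-dependent truncation error, which is at most $2\sup_{\psi,\mathbf{x}}|\hat f_\psi(\mathbf{x})|\,\frac1n\sum_i|y_i|\mathbf{1}\{|y_i|>M_n\}$, or assume bounded noise as the paper does.
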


\begin{proof}
Let $\mathcal{L}_\Psi = \{\ell_\psi(x,y) = (\hat f_\psi(x)-y)^2 : \psi \in \Psi\}$\textcolor{black}{, and, with} $\mathcal{D}$, $\Psi$ compact and $f$,
$\hat f_\psi$, $\epsilon_t$ bounded\footnote{This is mild condition, and satisfied by truncating any
sub-Gaussian noise model at a high-probability threshold;without affecting the results.}, $\mathcal{L}_\Psi$ is bounded by some
$C < \infty$. Also assume that the true objective \(f\) is representable by the fixed surrogate class. Moreover, let $\mu_n := \frac{1}{n}\sum_{i=1}^n \delta_{\mathbf{x}_i}$ denote the
empirical occupation measure of the inputs stored in $\mathcal{H}$ (where
$n = |\mathcal{H}|$); by Assumption~(ii), $\mu_n \to \mu_\infty$ weakly,
$\mathbb{P}$-almost surely, for some limiting measure $\mu_\infty$ with
full support on $\mathcal{D}$.

\textbf{Step 1 (Uniform concentration).}
Since $\Psi$ is compact and the spectral norm of each weight matrix is a
continuous function of its entries, the architecture of Assumption~(iii)
has spectral norms $\|W_i\|_\sigma$ bounded uniformly over $\Psi$ at every
layer $i = 1,\ldots,L$. \textcolor{black} {Since $\Psi$ is compact and, for each layer $i=1,\ldots,L$, the maps $\psi\mapsto\|A_i(\psi)\|_\sigma$ and $\psi\mapsto\|A_i(\psi)^\top\|_{2,1}$ are continuous (Assumption (iii)), both attain finite suprema over $\Psi$:
 $$s_i := \sup_{\psi\in\Psi}\|A_i(\psi)\|_\sigma < \infty, \qquad b_i := \sup_{\psi\in\Psi}\|A_i(\psi)^\top\|_{2,1} < \infty, \qquad i=1,\ldots,L.$$}

This is precisely the hypothesis required by the
spectrally-normalized covering number bound of
\citet{bartlett2017spectrally}: \textcolor{black}{for the induced function class $\mathcal F_\Psi=\{\hat f_\psi:\psi\in\Psi\}$ (ReLU activations, so $\rho_i=1$, since $\rho_{i}$ denotes the Lipschitz constant of the network's $i$-th layer activation; reference matrices $M_i=0$ following \citep{bartlett2017spectrally}), and $n$ points $\mathbf{x}_1,\ldots,\mathbf{x}_n$ with maximum layer width $W$,
\begin{equation}\label{eq39}\log\mathcal N(\tau,\mathcal F_\Psi,\|\cdot\|_2) \;\le\; \frac{\|X\|_2^2\ln(2W^2)}{\tau^2}\Big(\prod_{j=1}^L s_j^2\Big)\Big(\sum_{i=1}^L (b_i/s_i)^{2/3}\Big)^3, \qquad \|X\|_2^2 := \sum_{i=1}^n\|\mathbf{x}_i\|^2, \end{equation}}
\textcolor{black}{where $\|\cdot\|_2$ denotes the unnormalized Euclidean metric on network evaluations at the $n$ points, as in Bartlett et al. (2017). Since $d_{2,x^n}(g,g') = n^{-1/2}\|g-g'\|_2$ (Definition A.3), a $\|\cdot\|_2$-cover at resolution $\tau$ is a $d_{2,x^n}$-cover at resolution $\tau/\sqrt n$. Setting $\tau=\epsilon\sqrt n$ in \Eqref{eq39} gives
}
\begin{equation}\label{meq}
   \log \mathcal N (\epsilon, \mathcal F_\Psi, d_{2,x^n}) \le \frac{A}{n}\sum_{i=1}^n \|\mathbf{x}_i\|^2 \Big/ \epsilon^2, \qquad A := \ln(2W^2)\Big(\prod_{j=1}^L s_j^2\Big)\Big(\sum_{i=1}^L (b_i/s_i)^{2/3}\Big)^3.
\end{equation}
\textcolor{black}{Here $A$ depends only on $L$, the layer widths $W$, and the network's spectral complexity (Bartlett et al., 2017, Eq. 1.2) — combining both the spectral-norm bounds $s_i$ and the $(2,1)$-norm bounds $b_i$ — and is independent of $n$ and of the joint distribution of $(\mathbf{x}_i)_{i\le n}$. Moreover, since $\mathcal D$ is compact (Assumption (iii)), $n^{-1}\sum_{i=1}^n\|\mathbf{x}_i\|^2 \le \sup_{\mathbf{x}\in\mathcal D}\|\mathbf{x}\|^2 =: c < \infty$ uniformly in $n$, so the right-hand side of \eqref{meq} is $O(1/\epsilon^2)$, uniformly in $n$.} 

\textcolor{black}{By the contraction principle applied to the Lipschitz loss $u\mapsto u^2$ on a bounded domain, $\hat R(\mathcal L_\Psi) \le L_\ell \hat R(\mathcal F_\Psi)$ for a constant $L_\ell$. We bound $\hat R(\mathcal F_\Psi)$ via chaining \citep[Thm.7]{lauer2023uniform}: for any integer $N>0$, }
\textcolor{black}{\begin{equation}\label{chaining}
\hat R_{\mathbf{x}^n}(\mathcal F_\Psi) \le \frac{D_\mathcal F}{2^N} + 6D_\mathcal F\sum_{j=1}^N 2^{-j}\sqrt{\frac{\log\mathcal N(D_\mathcal F 2^{-j},\mathcal F_\Psi,d_{2,\mathbf{x}^n})}{n}},\end{equation}
where $D_\mathcal F := \sup_{\psi,\psi'\in\Psi,\,x\in\mathcal D}|\hat f_\psi(\mathbf{x})-\hat f_{\psi'}(\mathbf{x})|$ is the diameter of $\mathcal F_\Psi$ under $d_{2,\mathbf{x}^n}$, finite since $\Psi\times\mathcal D$ is compact and $\hat f_\psi(\mathbf{x})$ is jointly continuous.}

\textcolor{black}{Substituting the covering-number bound (\Eqref{meq}) at scale $\epsilon = D_\mathcal F 2^{-j}$ into the $j$-th term of the sum:}

 \textcolor{black}{\begin{equation}6D_\mathcal F\cdot 2^{-j}\sqrt{\frac1n\cdot\frac{\tilde A\cdot 4^j}{D_\mathcal F^2}} = 6\sqrt{\frac{\tilde A}{n}}, \qquad \tilde A := A\cdot\sup_{x\in\mathcal D}\|\mathbf{x}\|^2, \end{equation}}
 \textcolor{black}{a quantity independent of $j$: the $2^{-j}$ prefactor cancels exactly against the $4^j$ growth inside the square root arising from the $1/\epsilon^2$-scaling of \Eqref{meq}. The sum in  therefore reduces to $N$ identical terms:}
\begin{equation}\hat R_{\mathbf{x}^n}(\mathcal F_\Psi) \le \frac{D_\mathcal F}{2^N} + 6N\sqrt{\frac{\tilde A}{n}}. \end{equation}
\textcolor{black} {Choosing $N=\lceil\log_2 n\rceil$ makes the first term $D_\mathcal F/2^N \le D_\mathcal F/n = O(1/n)$, while the second becomes $6\lceil\log_2 n\rceil\sqrt{\tilde A/n} = \tilde O(\sqrt{\tilde A/n})$; since $1/n = o(\sqrt{\log n/n})$, the first term is asymptotically dominated by the second, giving}

\textcolor{black}{$$\hat R_{\mathbf{x}^n}(\mathcal F_\Psi) = \tilde O\!\left(\sqrt{\tilde A/n}\right).$$}

\textcolor{black}{Taking expectations, $R_n(\mathcal F_\Psi) = \tilde O(\sqrt{\tilde A/n})$, and by the contraction bound above,
 $$R_n(\mathcal L_\Psi) \le L_\ell R_n(\mathcal F_\Psi) = \tilde O(\sqrt{A/n})$$
 (up to log factors), independent of the joint law of $(\mathbf{x}_i)_{i\le n}$; the $n$-independent constants $L_\ell$ and $\sup_{x\in\mathcal D}\|x\|^2$ are absorbed into the implicit constant.}

Applying Theorem~\ref{fact:lauer} with $\delta = \delta_n := 1/n^2$, with
probability $\ge 1 - 1/n^2$,
\begin{equation}
    \forall \psi \in \Psi, \quad
    \big|L_n(\psi) - \hat L_n(\psi)\big| \le \eta_n
    := 2\,\overline{\mathcal{R}}_n(\mathcal{L}_\Psi) + \textcolor{black}{C}\sqrt{\log(2n^2)/2n}
    \;\xrightarrow{n\to\infty}\; 0.
\end{equation}

\textbf{Step 2 (Argmin transfer).}
Let $\hat\psi_T := \arg\min_{\psi\in\Psi} \hat L_n(\psi)$ (Assumption~ii)
and $\psi_n^\circ := \arg\min_{\psi\in\Psi} L_n(\psi)$. On the event of
Step~1,
\begin{equation}
    L_n(\hat\psi_T) \le \hat L_n(\hat\psi_T) + \eta_n
    \le \hat L_n(\psi_n^\circ) + \eta_n
    \le L_n(\psi_n^\circ) + 2\eta_n.
\end{equation}

\textbf{Step 3 (Borel–Cantelli).}
Since $\sum_n \delta_n = \sum_n 1/n^2 < \infty$, by Borel–Cantelli the event
of Step~1 fails only finitely often, $\mathbb{P}$-a.s. Hence
Hence $L_n(\hat{\psi}_n) - L_n(\psi_n^o) \to 0$, $\mathbb{P}$-a.s.

\textbf{Step 4 (Coverage and approximation adequacy).}
Under the MDS assumption (i) and the boundedness of $f$ on the compact domain $\mathcal{D}$, the conditional risk simplifies as
\begin{equation}
\mathbb{E}[\ell_\psi(\mathbf{x}_i, y_i) \mid \mathcal{F}_{i-1}]
= \mathbb{E}\big[(\hat{f}_\psi(\mathbf{x}_i) - y_i)^2 \mid \mathcal{F}_{i-1}\big]
= (\hat{f}_\psi(\mathbf{x}_i) - f(\mathbf{x}_i))^2 + \sigma^2,
\end{equation}
since $y_i = f(\mathbf{x}_i) + \epsilon_i$ with $\mathbb{E}[\epsilon_i \mid \mathcal{F}_{i-1}] = 0$ and $\mathbb{E}[\epsilon_i^2 \mid \mathcal{F}_{i-1}] = \sigma^2$. Hence, with $R_n(\psi) := \frac{1}{n}\sum_{i=1}^n (\hat{f}_\psi(\mathbf{x}_i) - f(\mathbf{x}_i))^2$, we have $L_n(\psi) = R_n(\psi) + \sigma^2$.

By Assumption (ii), the buffer's occupation measure $\mu_n = \frac{1}{n}\sum_{i=1}^n \delta_{\mathbf{x}_i}$ converges weakly to $\mu_\infty$. Since the class $\{(\hat{f}_\psi - f)^2 : \psi \in \Psi\}$ is uniformly bounded and equicontinuous on the compact set $\mathcal{D}$ (Assumption (iii) and continuity of $f$), the weak convergence upgrades to uniform convergence of the integrals:
\begin{equation}
\sup_{\psi \in \Psi} \Big| R_n(\psi) - R_\infty(\psi) \Big| \xrightarrow{n \to \infty} 0, \quad R_\infty(\psi) := \int_{\mathcal{D}} (\hat{f}_\psi(\mathbf{x}) - f(\mathbf{x}))^2 \, d\mu_\infty(\mathbf{x}),
\end{equation}
$\mathbb{P}$-almost surely. Consequently, the empirical minimizer $\psi_n^o := \arg\min_{\psi \in \Psi} L_n(\psi)$ satisfies
\begin{equation}
R_\infty(\psi_n^o) \leq \inf_{\psi \in \Psi} R_\infty(\psi) + o(1),
\end{equation}
so that
\begin{equation}
L_n(\psi_n^o) \xrightarrow{n \to \infty} \inf_{\psi \in \Psi} \mathbb{E}_{\mu_\infty}[(\hat{f}_\psi(\mathbf{x}) - f(\mathbf{x}))^2] + \sigma^2.
\end{equation}
By the Universal Approximation Theorem (Hornik et al., 1989), for any $\eta > 0$ there exists a network of sufficient capacity with $\inf_{\psi \in \Psi} \|\hat{f}_\psi - f\|_\infty \leq \eta$; since a sup-norm bound controls the $L^2(\mu)$-risk under any probability measure $\mu$ on $\mathcal{D}$, this limit is within $O(\eta^2)$ of the noise floor $\sigma^2 = \mathbb{E}[\epsilon^2]$, irrespective of the support of $\mu_\infty$.

\textbf{Step 5 (Risk to sup-norm, via Arzel\`a--Ascoli).}
Combining Steps 3-4, $\hat{f}_{\psi_n} \to f$ in $L^2(\mu_\infty)$ up to $O(\eta)$ along the whole sequence. Since $\eta > 0$ is arbitrary, we may take the network class to be large enough that the approximation error is below any prescribed threshold; hence $\hat{f}_{\psi_n} \to f$ in $L^2(\mu_\infty)$ along the whole sequence. We upgrade this to uniform convergence on $\mathrm{supp}(\mu_\infty)$ by contradiction. Suppose $\sup_{\mathbf{x} \in \mathrm{supp}(\mu_\infty)} |\hat{f}_{\psi_n}(\mathbf{x}) - f(\mathbf{x})| \not\to 0$; then some subsequence $(n_k)$ satisfies $\sup_{\mathbf{x} \in \mathrm{supp}(\mu_\infty)} |\hat{f}_{\psi_{n_k}}(\mathbf{x}) - f(\mathbf{x})| \geq \eta_0 > 0$ for all $k$. The family $\{\hat{f}_\psi\}_{\psi \in \Psi}$ is uniformly bounded and uniformly Lipschitz (same spectral-norm bound as Step 1), hence equicontinuous on $\mathcal{D}$. By Arzela-Ascoli, $\{\hat{f}_{\psi_{n_k}}\}_k$ has a subsequence converging uniformly on $\mathcal{D}$ to some $h \in C(\mathcal{D})$, hence in $L^2(\mu_\infty)$. Since Step 4 gives $\hat{f}_{\psi_n} \to f$ in $L^2(\mu_\infty)$ along the whole sequence, uniqueness of $L^2(\mu_\infty)$-limits gives $h = f$ $\mu_\infty$-a.e. Since $h - f$ is continuous, this directly gives $h = f$ on $\mathrm{supp}(\mu_\infty)$: if $h(\mathbf{x}_0) \neq f(\mathbf{x}_0)$ for some $\mathbf{x}_0 \in \mathrm{supp}(\mu_\infty)$, continuity gives a neighborhood $U \ni \mathbf{x}_0$ on which $h \neq f$, and $\mathbf{x}_0 \in \mathrm{supp}(\mu_\infty)$ forces $\mu_\infty(U) > 0$, contradicting $h = f$ $\mu_\infty$-a.e. This yields uniform convergence on $\mathrm{supp}(\mu_\infty)$ along the subsequence, contradicting $\sup_{\mathbf{x} \in \mathrm{supp}(\mu_\infty)} |\hat{f}_{\psi_{n_k}}(\mathbf{x}) - f(\mathbf{x})| \geq \eta_0$. Hence $\sup_{\mathbf{x} \in \mathrm{supp}(\mu_\infty)} |\hat{f}_{\psi_n}(\mathbf{x}) - f(\mathbf{x})| \to 0$, $\mathbb{P}$-a.s.

\end{proof}

Now we turn to the proposition. We should mention that conclusions (a) and (b) are established in two independent asymptotic regimes (batch size
$B$, training horizon $T$
). \textcolor{black}{Proposition \ref{prop:existence}} below is a separate existence result in a third regime (network capacity $\delta$), and depends only on (a).

\begin{lemma}[Existence of a concentrated policy]
\label{lem:concentrated-policy}
For $\mathbf{x} \in \mathbb{R}^d$ and $r > 0$, let
$\tilde{B}(\mathbf{x}, r) := \{\mathbf{x}' \in \mathbb{R}^d : \|\mathbf{x}' - \mathbf{x}\| \leq r\}$
denote the closed Euclidean ball. Let $\mathcal{S} \subseteq \mathbb{R}^{d_s}$ be a compact seed domain satisfying
$\mathbb{P}_{\mathcal{N}(0,I)}(\mathcal{S}) \geq 1 - \varepsilon_{\mathcal{S}}$
for some $\varepsilon_{\mathcal{S}} > 0$. For any $\delta > 0$, there exists a feedforward network $E_\phi$ such that
\begin{equation}
\sup_{\mathbf{s} \in \mathcal{S}} \|E_\phi(\mathbf{s}) - \mathbf{x}^*\| \leq \delta.
\label{eq:concentrated-policy}
\end{equation}
Let $P_\delta := (E_\phi)_\# P_{\mathcal{S}}$ denote the induced policy --- the law of
$\mathbf{x} = E_\phi(\mathbf{s})$ with $\mathbf{s} \sim P_{\mathcal{S}}$,
where $P_{\mathcal{S}}$ is the restriction of $\mathcal{N}(0,I)$ to $\mathcal{S}$ which is consistent with the actor-critic sampling scheme of Section~\ref{sec4.4.2}. Consequently:
\begin{enumerate}
\item[(a)] for every $\mathbf{s} \in \mathcal{S}$,
$f(E_\phi(\mathbf{s})) \geq f(\mathbf{x}^*) - \omega_f(\delta)$,
where $\omega_f$ is the modulus of continuity of $f$ on $\mathcal{D}$; and
\item[(b)] $\operatorname{supp}(P_\delta) \subseteq \tilde{B}(\mathbf{x}^*, \delta) \cap \mathcal{D}$.
\end{enumerate}
\end{lemma}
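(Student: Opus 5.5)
The plan is to build the network explicitly, which is simpler than invoking universal approximation. Since $f$ is continuous on the compact domain $\mathcal{D}$, a maximizer $\mathbf{x}^*\in\mathcal{D}$ exists (Weierstrass). The natural candidate for $E_\phi$ is the constant map $\mathbf{s}\mapsto\mathbf{x}^*$. It is realized exactly by a feedforward network: set all first-layer weights to zero, so every hidden unit is constant. For example, use ReLU with zero weights and bias $1$, so each hidden unit outputs $1$. Then choose the output layer so the network returns $\mathbf{x}^*$; with a single hidden unit of value $1$, take the output weight vector equal to $\mathbf{x}^*$ and output bias $0$. This gives $\sup_{\mathbf{s}\in\mathcal{S}}\|E_\phi(\mathbf{s})-\mathbf{x}^*\|=0\le\delta$, which is \eqref{eq:concentrated-policy}.

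If the architecture forbids exact constants, or the parameter set $\Phi$ excludes zero weights, I would fall back on universal approximation (Hornik et al., Assumption (ii)). Apply it to the continuous constant target on the compact set $\mathcal{S}$ to obtain a network that is uniformly $\delta$-close to $\mathbf{x}^*$ on $\mathcal{S}$.

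One subtlety concerns (b). The output $E_\phi(\mathbf{s})$ must lie in $\mathcal{D}$ for $f(E_\phi(\mathbf{s}))$ to be defined. The exact construction gives the value $\mathbf{x}^*\in\mathcal{D}$ directly. In the approximate fallback, I would either compose with the paper's projection/wrapper onto $\mathcal{D}$ or assume outputs are clipped to $\mathcal{D}$. Projection onto a convex compact set is nonexpansive and fixes $\mathbf{x}^*$, so the $\delta$ bound survives. For nonconvex $\mathcal{D}$ this point needs an explicit assumption, and I expect it to be the main obstacle, though it is minor.

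With \eqref{eq:concentrated-policy} in hand, both consequences are short.

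\emph{Part (a).} For $\mathbf{s}\in\mathcal{S}$, the point $E_\phi(\mathbf{s})\in\mathcal{D}$ satisfies $\|E_\phi(\mathbf{s})-\mathbf{x}^*\|\le\delta$. By the definition of the modulus of continuity, $\omega_f(\delta)=\sup\{|f(\mathbf{u})-f(\mathbf{v})|:\mathbf{u},\mathbf{v}\in\mathcal{D},\|\mathbf{u}-\mathbf{v}\|\le\delta\}$, so $f(\mathbf{x}^*)-f(E_\phi(\mathbf{s}))\le\omega_f(\delta)$. Uniform continuity on a compact set (Heine–Cantor) gives $\omega_f(\delta)\to0$ as $\delta\to0$, which is what the proposition later uses.

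\emph{Part (b).} $P_\delta$ is the pushforward of a measure concentrated on $\mathcal{S}$. Hence $P_\delta\big(E_\phi(\mathcal{S})\big)=1$ and $E_\phi(\mathcal{S})\subseteq\tilde B(\mathbf{x}^*,\delta)\cap\mathcal{D}$. That set is closed, being the intersection of a closed ball with a compact set, so it contains the support of $P_\delta$.

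Finally, the condition $\mathbb{P}_{\mathcal{N}(0,I)}(\mathcal{S})\ge1-\varepsilon_{\mathcal{S}}$ only guarantees that $P_{\mathcal{S}}$ is well defined. It does not enter the estimates, because the restriction to $\mathcal{S}$ is built into $P_\delta$.
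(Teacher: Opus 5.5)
Your proof is correct and has the same skeleton as the paper's. You target the constant map $\mathbf{s}\mapsto\mathbf{x}^*$, get (a) from the modulus of continuity, and get (b) because $\tilde{B}(\mathbf{x}^*,\delta)\cap\mathcal{D}$ is a closed set of full $P_\delta$-measure and so contains the support.

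The difference is the realization step. The paper applies the universal approximation theorem to obtain a network uniformly $\delta$-close to the constant target on $\mathcal{S}$. It then asserts that $E_\phi$ maps into $\mathcal{D}$ ``by construction''. That appeals to the bounded (Tanh) output architecture, not to the approximation theorem, which by itself only gives an unconstrained linear output.

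You instead realize the constant exactly with zero first-layer weights. This is more elementary and gives the bound with $0$ in place of $\delta$. It also makes $E_\phi(\mathcal{S})=\{\mathbf{x}^*\}\subseteq\mathcal{D}$ automatic, so the membership issue you flag as the main obstacle never arises.

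Your construction also shows the lemma is essentially trivial: the constant network is exactly optimal. $\delta$ and $\omega_f$ only matter when the architecture forbids exact realization, e.g.\ a Tanh-bounded output on a box domain with $\mathbf{x}^*$ on the boundary. That is the case where the paper's approximation route, i.e.\ your fallback, is actually needed.

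In that fallback you do not need convexity of $\mathcal{D}$ for the distance bound. Since $\mathbf{x}^*\in\mathcal{D}$, any nearest-point selection $\Pi$ onto the compact set $\mathcal{D}$ satisfies $\|\Pi(\mathbf{z})-\mathbf{x}^*\|\le\|\Pi(\mathbf{z})-\mathbf{z}\|+\|\mathbf{z}-\mathbf{x}^*\|\le 2\|\mathbf{z}-\mathbf{x}^*\|$, so halving $\delta$ suffices. The real cost of projecting is that the composed map is no longer a plain feedforward network.
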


\begin{proof}
Let $\mathcal{S} \subseteq \mathbb{R}^{d_s}$ be a compact seed domain chosen such that
$\mathbb{P}_{\mathcal{N}(0,I)}(\mathcal{S}) \geq 1 - \varepsilon_{\mathcal{S}}$;
such a set exists by tightness of the Gaussian measure on $\mathbb{R}^{d_s}$.
Define $\pi^* : \mathcal{S} \to \mathcal{D}$ as the constant map $\pi^*(\mathbf{s}) = \mathbf{x}^*$,
which is trivially continuous. Since $\mathcal{S}$ and $\mathcal{D}$ are compact and $\pi^*$ is continuous,
the universal approximation theorem \citep{Hornik1989MultilayerFN} gives, for any $\delta > 0$,
a feedforward network $E_\phi$ with
$\sup_{\mathbf{s} \in \mathcal{S}} \|E_\phi(\mathbf{s}) - \pi^*(\mathbf{s})\| \leq \delta$.
Claim~(a) follows from uniform continuity of $f$ on the compact domain $\mathcal{D}$.
Claim~(b) follows since $E_\phi(\mathcal{S}) \subseteq \tilde{B}(\mathbf{x}^*, \delta) \cap \mathcal{D}$
(the sup bound, together with $E_\phi$ mapping into $\mathcal{D}$ by construction),
and this set is closed, hence contains the support of the pushforward of any measure on $\mathcal{S}$ through $E_\phi$.
\end{proof}

\begin{corollary}[Existence of an $\epsilon$-optimal solution]
\label{cor:eps-optimal}
Under the assumptions of Lemma~\ref{lem:concentrated-policy},
for any $\epsilon > 0$ there exist networks $E_\phi$, $\hat{f}_\psi$, $D_\theta$
--- realizable within the hypothesis classes $\Phi$, $\Theta$, $\Psi$ ---
such that for every $\mathbf{s} \in \mathcal{S}$,
the action $\mathbf{x} = E_\phi(\mathbf{s})$ satisfies
\begin{equation}
f(\mathbf{x}) \geq f(\mathbf{x}^*) - \epsilon.
\label{eq:eps-optimal}
\end{equation}
\end{corollary}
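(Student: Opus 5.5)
The plan is to derive Corollary~\ref{cor:eps-optimal} directly from Lemma~\ref{lem:concentrated-policy}(a), using uniform continuity of $f$ to turn a spatial tolerance $\delta$ into a value tolerance $\epsilon$.

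Fix $\epsilon>0$. Since $f$ is continuous on the compact set $\mathcal{D}$, it is uniformly continuous there. Its modulus of continuity $\omega_f(\delta)=\sup\{|f(\mathbf{x})-f(\mathbf{x}')|:\mathbf{x},\mathbf{x}'\in\mathcal{D},\ \|\mathbf{x}-\mathbf{x}'\|\le\delta\}$ therefore satisfies $\omega_f(\delta)\to 0$ as $\delta\to 0$. Choose $\delta>0$ with $\omega_f(\delta)\le\epsilon$, and apply Lemma~\ref{lem:concentrated-policy} with this $\delta$. This gives a feedforward network $E_\phi$ with $\sup_{\mathbf{s}\in\mathcal{S}}\|E_\phi(\mathbf{s})-\mathbf{x}^*\|\le\delta$. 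By part (a) of the lemma, every $\mathbf{s}\in\mathcal{S}$ then satisfies $f(E_\phi(\mathbf{s}))\ge f(\mathbf{x}^*)-\omega_f(\delta)\ge f(\mathbf{x}^*)-\epsilon$, which is \eqref{eq:eps-optimal}.

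Two realizability issues remain.

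\textbf{Outputs must lie in $\mathcal{D}$.} For $f(E_\phi(\mathbf{s}))$ to be defined, $E_\phi(\mathcal{S})$ must be contained in $\mathcal{D}$. I would handle this by appending a fixed projection or squashing layer onto $\mathcal{D}$, as the paper implicitly does ("mapping into $\mathcal{D}$ by construction"). If $\mathcal{D}$ is convex, the Euclidean projection is $1$-Lipschitz and fixes $\mathbf{x}^*\in\mathcal{D}$. Hence it preserves $\|E_\phi(\mathbf{s})-\mathbf{x}^*\|\le\delta$, and the bound above still holds.

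\textbf{The other two networks.} For $D_\theta$ and $\hat f_\psi$ the conclusion imposes no constraint, so any parameters work. I would nevertheless choose $\hat f_\psi$ via the universal approximation argument of Step 4 in Lemma~\ref{lem:surrogate-v2}, so that $\|\hat f_\psi-f\|_\infty\le\epsilon$, and take $D_\theta$ arbitrary in $\Theta$, for instance the constant network, which the boundedness remark shows is well defined.

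The main obstacle is reconciling the capacity requirement with Assumption (iii). The lemma's network may need weights outside a prescribed compact $\Phi$, so I would interpret "realizable within the hypothesis classes" as "for a sufficiently large compact parameter set", in the spirit of the footnote to Proposition~\ref{prop:existence}. Here the target map is constant, which makes this easy. Setting all hidden weights to zero and the output bias to $\mathbf{x}^*$ gives $E_\phi\equiv\mathbf{x}^*$ exactly, with $\delta=0$. This parameter is bounded, so it lies in any $\Phi$ containing a ball of radius $\|\mathbf{x}^*\|$ around the origin. I would present this exact construction as the primary argument, with the universal approximation route as a fallback. The result is then an $\epsilon$-optimal policy, indeed $0$-optimal, for every seed in $\mathcal{S}$, and hence with probability at least $1-\varepsilon_{\mathcal{S}}$ under $\mathcal{N}(0,I)$.
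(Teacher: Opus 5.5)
Your first argument is exactly the paper's proof: choose $\delta$ with $\omega_f(\delta)\le\epsilon$, take the network of Lemma~\ref{lem:concentrated-policy}, read off part (a), and leave $\hat f_\psi$ and $D_\theta$ arbitrary. The paper stops there and simply asserts that $E_\phi$ maps into $\mathcal{D}$ ``by construction.''

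The route you promote to primary is genuinely different: the constant network $E_\phi\equiv\mathbf{x}^*$. Make sure the output-layer weight matrix is zero, not only the hidden ones; otherwise the output is $W_{\mathrm{out}}\sigma(b)+b_{\mathrm{out}}$ rather than $\mathbf{x}^*$. This construction buys several things:
\begin{itemize}
\item It needs neither the lemma, nor uniform continuity of $f$, nor the universal approximation theorem.
\item It gives $0$-optimality for every seed in $\mathbb{R}^{d_s}$, not only on $\mathcal{S}$.
\item Since $\mathbf{x}^*\in\mathcal{D}$, it makes your convex-projection fix unnecessary.
\item It exhibits an explicit parameter of norm $\|\mathbf{x}^*\|$. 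This addresses the tension between universal approximation and the compact $\Phi$ of Assumption~(iii), which the paper does not discuss.
\end{itemize}

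Its cost is dependence on the output layer. With the Tanh-bounded output the paper actually uses (Appendix~\ref{app:vbomi-details}), the constant $\mathbf{x}^*$ is exactly representable only when $\mathbf{x}^*$ lies in the interior of the box domain, because $\tanh$ never attains $\pm 1$. For a boundary optimizer you must fall back on the paper's approximate argument. That argument only needs some $\delta>0$, so it is the more architecture-robust of the two. Both routes are correct, and the triviality of the constant construction underlines the paper's own footnote that this is purely a capacity statement.
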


\begin{proof}
Choose $\delta > 0$ such that $\omega_f(\delta) \leq \epsilon$;
this is possible since $\omega_f(\delta) \to 0$ as $\delta \to 0$
by uniform continuity of $f$ on the compact domain $\mathcal{D}$.
Let $E_\phi$ be the network from Lemma~\ref{lem:concentrated-policy}.
Then for every $\mathbf{s} \in \mathcal{S}$,
\begin{equation}
f(E_\phi(\mathbf{s})) \geq f(\mathbf{x}^*) - \omega_f(\delta) \geq f(\mathbf{x}^*) - \epsilon,
\end{equation}
which is~\eqref{eq:eps-optimal}.
The surrogate $\hat{f}_\psi$ and helper $D_\theta$ can be chosen arbitrarily
from their respective hypothesis classes (e.g., with sufficient capacity
to approximate $f$ and the mutual information, respectively,
by the universal approximation theorem).
\end{proof}

\begin{remark}[Scope of the proof]
Last lemma is an existence result: it establishes 
that a network $E_{\hat\phi}$ achieving $\varepsilon$-optimality 
exists within the function class. It does not claim that the 
alternating gradient procedure in Algorithm~\ref{alg:vbo} 
necessarily finds this network within a finite number of steps. 
Establishing finite-step convergence rates for the joint 
bi-level optimization in~\Eqref{mnet} under adaptive sampling is a non-trivial open 
problem that we leave for future work. We want to emphasize that this open problem also exist in other information theoretic acquisition function literature such as \citep{pmlr-v267-ishikura25a}.
\end{remark}}

\begin{figure*}
\centering
    \includegraphics[width=.65\textwidth]{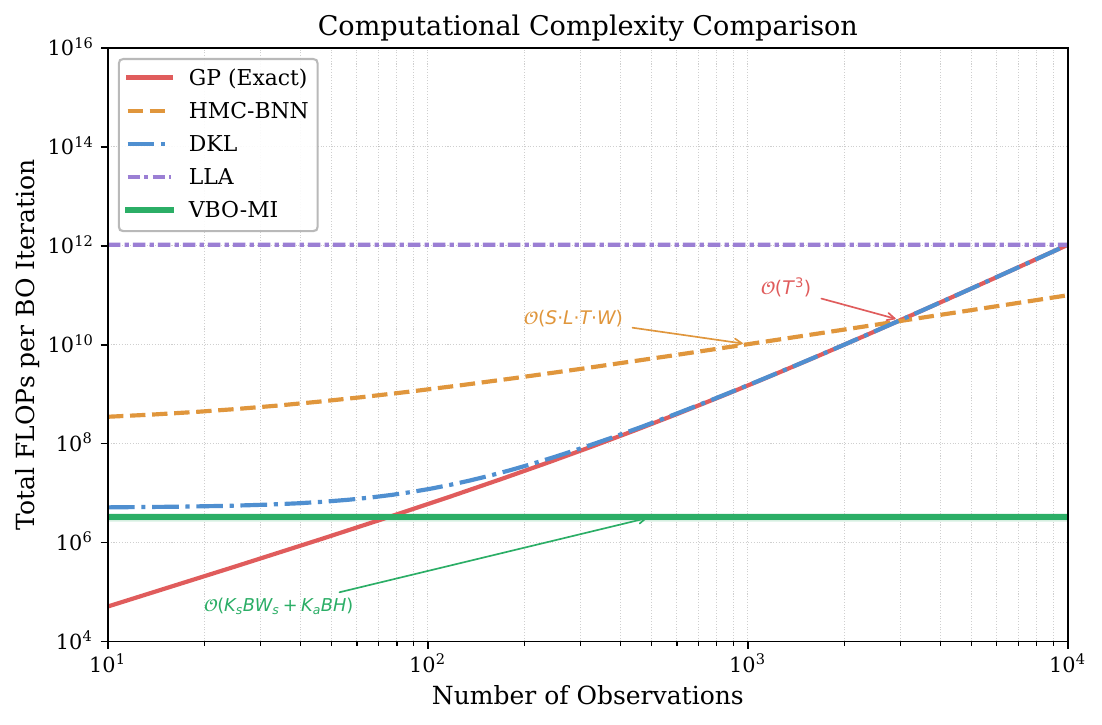}
    \caption{Theoretical computational complexity comparison measured in Floating-Point Operations (FLOPs) per iteration}
    \label{complexity-com}
\end{figure*}
\section{GP-Based Bayesian Optimization}
\label{GBO}
In this section we discuss some of the detailed results regarding the GP-based Bayesian optimization  discussed in Section 3. we summarize some of the main assumptions commonly used in the Gaussian literature~\citep{srinivas2010gaussian}~\citep{pmlr-v108-kirschner20a}~\citep{pmlr-v162-tay22a} as below:

\noindent\textbf{Assumption 1. \underline{Prior assumption on \( f \):}} The function \( f: \mathcal{D} \to \mathbb{R} \) is assumed to be a sample drawn from a Gaussian Process (GP), denoted as
\begin{equation}
    f \sim \mathrm{GP}(\mu(\mathbf{x}), k(\mathbf{x}, \mathbf{x}')),
\end{equation}
where the mean function is \(\mu(\mathbf{x}) = \mathbb{E}[f(\mathbf{x})]\) and the covariance function is defined by
\begin{equation}
    k(\mathbf{x}, \mathbf{x}') = \mathbb{E}\big[(f(\mathbf{x}) - \mathbb{E}[f(\mathbf{x})])(f(\mathbf{x}') - \mathbb{E}[f(\mathbf{x}')])\big].
\end{equation}
It is typically assumed that, without loss of generality, \(\mu(\mathbf{x}) = 0\) and the variance of \(f\) prior is bounded, i.e., \(k(\mathbf{x}, \mathbf{x}) \leq 1\) for all \(\mathbf{x} \in \mathcal{D}\). In other words, for any finite collection of inputs \(\{\mathbf{x}_t\}\), the corresponding (prior) function values \(\{f(\mathbf{x}_t)\}\) follow a joint Gaussian distribution determined by the covariance function.

\vspace{0.5em}
\noindent\textbf{Assumption 2. \underline{Posterior assumption on \( f \):}} After observing noisy samples, the posterior distribution of \( f \) conditioned on past inputs and observations remains a GP. Let $
A_T = [\mathbf{x}_1, \dots, \mathbf{x}_T]$ denote the set of previously sampled points, where each $\mathbf{x}_{t}\in\mathbb{R}^{d}$, and
$\mathbf{Y}_{T}=[y_{1},\dots, y_{T}]^{T}$ be the vector of noisy observations, where $y_{t} = f(\mathbf{x}_{t}) + \epsilon_{t},\,\, \epsilon_{t} \sim \mathcal{N}(0, \sigma^{2})$ with \( y_{t} \in \mathbb{R} \) scalar observations. It follows that the posterior distribution of \( f \) is given by~\citep{10.7551/mitpress/3206.001.0001, srinivas2010gaussian}:
\begin{equation}
    f \mid A_T, \mathbf{Y}_T \sim \mathrm{GP}(\mu_T(\mathbf{x}), k_T(\mathbf{x}, \mathbf{x}')),
\end{equation}
where
\begin{align}
    \mu_T(\mathbf{x}) &= \mathbf{k}_T(\mathbf{x})^T (\mathbf{K}_T + \sigma^2 \mathbf{I})^{-1} \mathbf{Y}_T, \\
    k_T(\mathbf{x}, \mathbf{x}') &= k(\mathbf{x}, \mathbf{x}') - \mathbf{k}_T(\mathbf{x})^T (\mathbf{K}_T + \sigma^2 \mathbf{I})^{-1} \mathbf{k}_T(\mathbf{x}').
\end{align}
Here, $\mathbf{k}_T(\mathbf{x}) = [k(\mathbf{x}_1, \mathbf{x}), \dots, k(\mathbf{x}_T, \mathbf{x})]^T$ is the covariance vector between the new point \(\mathbf{x}\) and the previously sampled points \(A_T\), and $
\mathbf{K}_T = [k(\mathbf{x}_i, \mathbf{x}_j)]_{i,j=1}^T$ is the kernel matrix. The posterior variance at \(\mathbf{x}\) is given by \(\sigma_T^2(\mathbf{x}) = k_T(\mathbf{x}, \mathbf{x})\).
\section{Experimental Setups}
\label{app:setup}
 
To ensure a fair comparison with previously reported results, we
adopt the same benchmark problems and hyperparameter configurations
as described in~\citet{li2024a}, applied to the experimental scenarios
described below.
Reported results are averaged over multiple random seeds to account
for stochastic variability.
 \subsection{Implementation Details}
\subsubsection{VBO-MI Implementation Details}
\label{app:vbomi-details}
 
\paragraph{Network architectures.}
The \emph{action-net} $E_\phi$ consists of three fully connected
layers of width $128$ with \texttt{ReLU} activations and a \texttt{Tanh}
output layer that bounds actions to corresponding input domain based on the problem.
The \emph{helper network} $D_\theta$ is a pairwise LSTM-MLP critic:
two single-layer LSTMs (each
with hidden size $64$) whose final hidden states are concatenated and
passed through a three-layer MLP with
\texttt{ReLU} activations.
The \emph{surrogate network} $\hat{f}_\psi$ is a two-layer MLP
 with \texttt{ReLU} activations,
mapping proposed actions to predicted reward values.
 
\paragraph{Optimisation hyperparameters.}
All three networks are optimised with Adam.
The learning rate is $2 \times 10^{-3}$ for $E_\phi$ and $D_\theta$,
and $1 \times 10^{-3}$ for $\hat{f}_\psi$.
The batch size is $B = 30$ or $B=60$ for real-world and synthetic tasks.
The warm-up phase runs for $W = 30$ iterations, followed by a
200-step surrogate pre-training phase on the warm-up observations.
In the main BO loop, the update counts are
$K_a = 5$ (helper), $K_s = 10$ (surrogate), and $K_b = 10$
(action-net) per iteration.
{\color{black}
\subsubsection{Baseline Implementation Details}
\label{app:baseline-details}

We compare against the following baseline surrogate models, all evaluated
within the same BO loop described in
Section~\ref{sec:eval}: Hamiltonian Monte Carlo (HMC)
\citep{pmlr-v139-izmailov21a,li2024a}, Stochastic Gradient Hamiltonian
Monte Carlo (SGHMC) \citep{10.5555/3044805.3045080}, the Linearized
Laplace Approximation (LLA) \citep{NEURIPS2022_12143893}, Deep
Kernel Learning (DKL) \citep{pmlr-v51-wilson16,li2024a}, Infinite-width
Bayesian Neural Networks (I-BNN) \citep{lee2018deep,li2024a}, standard
Gaussian Processes (GP) \citep{li2024a}, and the
Focal Sparse Gaussian Process (Sparse) \citep{NEURIPS2024_da30215e}.

For HMC, SGHMC, LLA, DKL, and I-BNN, we follow the surrogate
architectures and inference procedures reported in \citet{li2024a};
the Focal Sparse GP baseline follows the configuration reported in
\citet{NEURIPS2024_da30215e}, which is not covered by \citet{li2024a}.
Across all baselines\footnote{We use analytical gradients for optimizing the EI as in Focal Sparse Gaussian process baseline.}, we use Monte-Carlo Expected
Improvement (qEI) \citep{botorch} as the acquisition function,
and initialize each trial with a Sobol-sampled design, consistent with
the protocol of \citet{li2024a}. Moreover, we use 10 initial points for synthetic task, and 50 initial points for Lunar lander, and Rover trajectory, and 20 for pest control tasks. Table~\ref{tab:baseline-hyperparams}
summarizes the architecture, inference procedure, and hyperparameters
used for each baseline.
\begin{table}[H]
\centering
\small
\caption{\textcolor{black}{Baseline architecture and inference hyperparameters.}}
\label{tab:baseline-hyperparams}
\begin{tabular}{|l|p{9cm}|}
\hline
\textcolor{black}{Baseline} & \textcolor{black}{Configuration} \\
\hline
\textcolor{black}{GP }    & \textcolor{black}{Kernel: Matérn-5/2; length-scale prior: Gamma(3,6); output-scale prior: Gamma(2.0, 0.15).} \\
\hline
\textcolor{black} {DKL}    & \textcolor{black}{Feature extractor: 3-layer MLP, Tanh, $w{=}128$; base kernel: Matérn-5/2. }\\
\hline
\textcolor{black}{I-BNN}  & \textcolor{black}{Kernel:} \textcolor{black}{GP with NNGP kernel\cite{lee2018deep} (depth 3, ReLU); weight variance: 10.0; bias variance: 1.6.} \\
\hline
\textcolor{black}{HMC }   &\textcolor{black}{ Architecture: 3-layer MLP, Tanh, $w{=}128$;. }\\
\hline
\textcolor{black}{SGHMC } & \textcolor{black}{ Architecture: 3-layer MLP, Tanh, $w{=}128$; step size: adaptive; likelihood variance: heteroskedastic (network output). }\\
\hline
\textcolor{black}{LLA}    & \textcolor{black}{Architecture: 3-layer MLP, Tanh, $w{=}128$;. }\\
\hline
\textcolor{black}{Sparse} &\textcolor{black}{ Kernel: Matérn-5/2, inducing variables $m=50,200$ for synthetic and real-world tasks. }\\
\hline
\end{tabular}
\end{table}}

\begin{table}[t]
\centering
\begin{threeparttable}
\caption{\textcolor{black}{VBO-MI per-task experimental configuration. Warm-up queries and
total queries are both oracle evaluations of $f$; training mini-batch size
equals the oracle batch size $B$ throughout.}}
\label{tab:per-task-config}
\begin{tabular}{lccccccc}
\toprule
\textcolor{black}{Task} & \textcolor{black}{$d$} & \textcolor{black}{$B$} & \textcolor{black}{$W$} & \textcolor{black}{Warm-up queries} & \textcolor{black}{Main iters.\ $T$} & \textcolor{black}{Total queries} \\
\midrule
\textcolor{black}{Ackley} & \textcolor{black}{10} & \textcolor{black}{60} & \textcolor{black}{30} & \textcolor{black}{1{,}800} & \textcolor{black}{70} & \textcolor{black}{6{,}000} \\
\textcolor{black}{Levy} & \textcolor{black}{20} & \textcolor{black}{60} & \textcolor{black}{30} & \textcolor{black}{1{,}800} & \textcolor{black}{70} & \textcolor{black}{6{,}000} \\
\textcolor{black}{Griewank} & \textcolor{black}{20} & \textcolor{black}{60} & \textcolor{black}{30} & \textcolor{black}{1{,}800} & \textcolor{black}{70} & \textcolor{black}{6{,}000} \\
\textcolor{black}{Lunar Lander} & \textcolor{black}{12} & \textcolor{black}{30} & \textcolor{black}{30} & \textcolor{black}{900} & \textcolor{black}{70} & \textcolor{black}{3{,}000} \\
\textcolor{black}{Pest Control} & \textcolor{black}{25} & \textcolor{black}{30} & \textcolor{black}{30} & \textcolor{black}{900} & \textcolor{black}{36} & \textcolor{black}{2{,}000}\tnote{\dag} \\
\textcolor{black}{Rover Trajectory} & \textcolor{black}{60} & \textcolor{black}{30} & \textcolor{black}{30} & \textcolor{black}{900} & \textcolor{black}{70} & \textcolor{black}{3{,}000} \\
\bottomrule
\end{tabular}
\begin{tablenotes}
\small
\item[\dag] \textcolor{black}{The true total evaluation count for Pest Control is 1{,}980.
For visual consistency with the other benchmarks, the plotted curve repeats
the last observed value up to 2{,}000 evaluations.}
\end{tablenotes}
\end{threeparttable}
\end{table}
\textcolor{black} {\begin{remark}
     we do not track an explicit online diversity metric during training, instead, we avoid action-net collapse through a hyperparameter selection: by identifying a working range of hyperparameters, we are able to train the framework to convergence without degeneration. We report the corresponding sensitivity analysis over batch size and the batch-size and $\beta$  in Appendix E (Figs. 9–10), which characterizes performance within this working regime
 \end{remark}}
\subsection{Synthetic Benchmarks}
\label{app:synthetic}
 
We evaluate on three scalable multimodal functions.
All three have their global minimum at zero and are framed as
maximisation problems by optimising $-f$.
 
\noindent\textbf{Ackley} ($d = 10$, domain $[-32.768, 32.768]^{10}$)
\textemdash\  A deceptive function with a near-flat outer region and
a sharp global basin at the origin.
Standard parameters: $a = 20$, $b = 0.2$, $c = 2\pi$.
 
\noindent\textbf{Levy} ($d = 20$, domain $[-10, 10]^{20}$)
\textemdash\  A scalable multimodal function whose difficulty grows
with dimension due to its sinusoidal structure; global minimum is
$f(\mathbf{1}) = 0$.
 
\noindent\textbf{Griewank} ($d = 20$, domain $[-600, 600]^{20}$)
\textemdash\  A non-separable function whose product term introduces
interactions between dimensions, defined as
$f(\mathbf{x}) = \sum_i x_i^2/4000 - \prod_i \cos(x_i/\sqrt{i}) + 1$,
with global minimum $f(\mathbf{0}) = 0$.
 
\subsection{Real-World Benchmarks}
\label{app:realworld}
 
\noindent\textbf{Lunar Lander} ($d = 12$) \textemdash\ The objective
is to learn an optimal control policy for the OpenAI Gym
\texttt{LunarLander-v2} environment.
The reward is computed over full episodes and the goal is to maximise
the expected return averaged over 50 randomly generated environments.
The controller is parameterised as a feed-forward neural network with
two hidden layers of size $256$ using \texttt{tanh} activations.
A learning rate of $3 \times 10^{-4}$ is used with the Adam
optimiser.
 
\noindent\textbf{Pest Control} ($d = 25$, categorical) \textemdash\
This benchmark optimises a pest-management policy over multiple
seasons, modelled as a combinatorial problem with 25 categorical
variables each taking 5 values~\citet{NEURIPS2019_6c990b7a}.
We apply one-hot vector
encoding~\citep{GARRIDOMERCHAN202020} to handle categorical inputs.
The policy is parameterised by a two-layer neural network with $128$
neurons per layer and \texttt{tanh} activations, optimized with Adam
at learning rate $2 \times 10^{-3}$.
 
\noindent\textbf{Rover Trajectory Planning} ($d = 60$) \textemdash\
The goal is to find an optimal trajectory defined by 30 waypoints
(2 coordinates each) in a 2D environment with
obstacles~\citep{NEURIPS2019_6c990b7a}.
The reward penalises obstacle collisions and path deviation; higher
is better with no known analytical optimum.
This is the highest-dimensional real-world task in our evaluation
and directly tests the scalability of each method.
 
\noindent For all experiments, BO is performed
with an initial design of $n_{\mathrm{init}}$ points drawn via a
Sobol sequence, followed by sequential acquisition steps until the
total evaluation budget is reached.
The evaluation budget and $n_{\mathrm{init}}$ match those of the
baseline study of~\citet{li2024a} for direct comparability.

\section{Computational Complexity and FLOP Analysis}
\label{sec:complexity}
 
A significant bottleneck in BO is the
computational overhead associated with surrogate model updates,
the inner-loop optimization of the acquisition function, and ---
most critically in expensive black-box settings --- the number of
real function evaluations required per iteration. \textcolor{black}{In this section we evaluate complexity of our proposed method against baselines in terms of Flop count computational complexity.}
We compare against HMC-BNN, Deep Kernel Learning (DKL), and
Linearized Laplace Approximation (LLA).
 
\paragraph{Oracle query complexity.}
In standard BNN-based BO, the surrogate is re-fitted and the
acquisition function is re-optimised at every iteration, but the
number of real queries to $f$ per iteration remains $B$ (one batch
of candidate points).  VBO-MI matches this budget: the surrogate
$\hat{f}_\psi$, the helper $D_\theta$, and the action-net $E_\phi$
are all updated using either the surrogate's own predictions or
mini-batches from the replay buffer $\mathcal{H}$, so no additional
oracle calls are incurred inside the $K_a$, $K_s$, or $K_b$ loops.
The \emph{only} real query to $f$ per iteration is the evaluation
batch of $B$ points collected at the end of each loop, which
simultaneously calibrates the surrogate and tracks the
best-so-far value.  The per-iteration oracle cost is therefore
$\mathcal{O}(B)$, matching the standard BO budget.
\textcolor{black} {\begin{remark}
     In Section\ref{sec:ablation} we have provided the comparison of surrogate-based and no-surrogate VBO-MI comparisons in terms of oracle query complexity.
 \end{remark}}
\paragraph{Neural network FLOP complexity.}
Table~\ref{tab:flops} summarises the per-iteration FLOP breakdown
for each method.
 
\textbf{Helper update ($K_a$ steps).}
The helper $D_\theta$ receives surrogate predictions
$\hat{y}_t = \hat{f}_\psi(\mathbf{x}_t)$ in place of real
observations, at a combined cost of
$\mathcal{O}(K_a \cdot B \cdot (W_s + H))$, where $W_s$ is the
surrogate parameter count and $H$ is the LSTM hidden dimension.
 
\textbf{Surrogate update ($K_s$ steps).}
The surrogate is updated for $K_s$ gradient steps on mini-batches
of size $B$ drawn from $\mathcal{H}$, at cost
$\mathcal{O}(K_s \cdot B \cdot W_s)$.
Since $W_s$ is small (a three-layer MLP) and $K_s$ is modest,
this adds a low constant-factor overhead.
 
\textbf{Action-net update ($K_b$ steps).}
Each action-net step includes a frozen surrogate forward pass plus
the action-net backward pass, at cost
$\mathcal{O}(K_b \cdot (B \cdot W_s + W_e))$, where $W_e$ is the
action-net parameter count.
 
The key distinction from GP- and BNN-based methods is in the
surrogate update complexity.
GP-based methods require an $\mathcal{O}(T^3)$ Cholesky
decomposition at each iteration; HMC-BNN requires
$\mathcal{O}(S \cdot L \cdot T \cdot W)$ leapfrog steps over the
full dataset; and DKL combines neural network training with a GP
Cholesky.
By contrast, VBO-MI's internal update costs are independent of $T$:
the replay buffer $\mathcal{H}$ is capped at a fixed size, and all
network updates process fixed-size mini-batches.
The total per-iteration FLOP cost is therefore
$\mathcal{O}(K_s \cdot B \cdot W_s + K_a \cdot B \cdot H
             + K_b \cdot B \cdot W_s)$,
which is \emph{constant in $T$} --- a qualitatively different
scaling regime from all GP- and BNN-based baselines.
Furthermore, by eliminating the inner-loop acquisition optimizer
(replaced by the amortised action-net), the acquisition phase costs
only $\mathcal{O}(K_b \cdot W_e)$ rather than the
$\mathcal{O}(N_\text{starts} \cdot N_\text{steps} \cdot S \cdot W)$
required by BNN-based methods, yielding a near $10^2\times$
reduction in acquisition FLOPs.
 
Fig.~\ref{complexity-com} plots the scaling behaviour (\textcolor{black}{comparison of VBO-MI with and without surrogate}) and
Table~\ref{tab:flops} summarises the asymptotic complexity \textcolor{black}{against the baselines}.
Table~\ref{tab:complexity_params} lists the numerical values used
for the theoretical estimates (\textcolor{black}{ comparison of proposed method against baselines.)}.

{\color{black}
\textbf{Time Complexity of the Proposed Method.}
Table~\ref{tab:wallclock} reports the wall-clock time for our method and all baselines on both benchmarks, measured on an NVIDIA Tesla T4 GPU with an Intel(R) Xeon(R) CPU @ 2.00GHz. As shown in the table, our method's runtime is substantially lower than that of the BNN-based baselines while achieving comparable or better optimization performance.
}

\begin{table}[ht]
\centering
\caption{Comparison of FLOP and oracle-call complexity per BO
  iteration.  $T$: observations, $W$: BNN weights, $W_s$: surrogate
  weights, $H$: LSTM hidden dim, $S$: HMC samples, $L$: leapfrog
  steps, $K_{a,b,s}$: update steps.  We assume $W > H$ and
  $S,L > K_{a,b}$.}
\label{tab:flops}
\vskip 0.15in
\begin{small}
\begin{sc}
\begin{tabular}{lcccc}
\toprule
Method & Surrogate Update & Acquisition Opt. & Real $f$/iter & Scaling \\
\midrule
GP (Exact)
  & $\mathcal{O}(T^3)$
  & $N \cdot \mathcal{O}(T^2)$
  & $B$
  & Cubic \\
HMC-BNN
  & $\mathcal{O}(S \cdot L \cdot T \cdot W)$
  & $N \cdot \mathcal{O}(S \cdot W)$
  & $B$
  & High Linear \\
DKL
  & $\mathcal{O}(T \cdot W + T^3)$
  & $N \cdot \mathcal{O}(T^2 + W)$
  & $B$
  & Cubic \\
LLA
  & $\mathcal{O}(T \cdot W + W^3)$
  & $N \cdot \mathcal{O}(W^2)$
  & $B$
  & Linear \\
\textbf{VBO-MI (ours)}
  & $\mathcal{O}(K_s{\cdot}B{\cdot}W_s + K_a{\cdot}B{\cdot}H)$
  & $\mathcal{O}(K_b{\cdot}(B{\cdot}W_s + W_e))$
  & $\mathbf{B}$
  & \textbf{Constant} \\
\bottomrule
\end{tabular}
\end{sc}
\end{small}
\end{table}

\begin{table}[ht]
\centering
\caption{Parameters used for the FLOP complexity analysis in
  Section~\ref{sec:complexity}.}
\label{tab:complexity_params}
\vskip 0.15in
\begin{small}
\begin{sc}
\begin{tabular}{lcl}
\toprule
Parameter & Value & Description \\
\midrule
$W$          & $10{,}000$ & BNN parameter count (weights + biases) \\
$W_s$        & $\approx\!500$ & Surrogate network parameter count (3-layer MLP) \\
$W_e$        & $\approx\!400$ & Action-net parameter count (3-layer MLP) \\
$D$          & $10$     & Input dimensionality of the objective function \\
$H$          & $64$     & Hidden state dimension of the Helper LSTM \\
$S$          & $50$     & Number of HMC posterior samples \\
$L$          & $20$     & Leapfrog steps per HMC sample \\
$B$          & $256$    & Batch size (oracle calls per eval block) \\
$K_a$        & $5$      & Gradient steps for the Helper network \\
$K_b$        & $10$     & Gradient steps for the Action-net \\
$K_s$        & $10$     & Gradient steps for the Surrogate network \\
$N_{\text{starts}}$ & $10$ & Restarts for acquisition function optimization \\
$N_{\text{steps}}$  & $50$ & Iterations per restart \\
\midrule

\end{tabular}
\end{sc}
\end{small}
\end{table}
{\begin{table}[t]
\color{black}
\centering
\caption{ \textcolor{black}{ Average wall-clock time (per second) for each iteration on for each method on the synthetic (Ackley), and Real-world (Rover Trajectory), benchmarks, \textcolor{black}{with total of 100 iteration.} }}
\label{tab:wallclock}
\begin{tabular}{|l|c|c|}
\hline
Method & Ackley ($d=10$) & Rover Trajectory ($d=60$) \\
\hline
GP              & $32.68\pm 1.22  $        & $109.49\pm 2.72$           \\
\hline
I-BNN           & $20.69\pm 0.92 $ & $0.819\pm 0.08$          \\
\hline
Laplace         & $34.11\pm 1.18$          & $18.85 \pm 0.95$        \\
\hline
HMC             & $202.2\pm 6.21$ & $197.40\pm 5.67$      \\
\hline
SGHMC           & $37.74\pm 2.16 $          & $43.24\pm 2.94 $        \\
\hline
Sparse GP (Focal) & $157.80\pm 4.31$     &    $48.98\pm 1.55$      \\
\hline
DKL             & $1107.63\pm 6.32$  & $185.14\pm 2.13$          \\
\hline
VBO-MI   & $30.54\pm 1.33$      & $33.23\pm 1.26$       \\
\hline
\end{tabular}
\end{table}}

\section{Additional Numerical Evaluations}
\label{app:additional}

In this section, we provide extended numerical results across a broader
set of benchmarks and hyperparameter conditions.

\paragraph{Additional Synthetic Benchmark Results}

\textcolor{black}{To further evaluate the performance of VBO-MI across synthetic objective
functions with different dimensionalities, we consider two additional
benchmarks: the 6-dimensional Hartmann function and the 20-dimensional
Rastrigin function (negative of the main function for maximization). The results are shown in
Fig.~\ref{fig:hartmann} and Fig.~\ref{fig:rastrigin}}.

\textcolor{black}{As shown in Fig.~\ref{fig:hartmann}, VBO-MI achieves competitive/superior
performance on the 6-dimensional Hartmann function. On the 20-dimensional
Rastrigin function (Fig.~\ref{fig:rastrigin}), VBO-MI achieves a larger
performance advantage over the baseline methods. These additional results
provide further empirical evidence that VBO-MI remains effective across
synthetic functions with different structures and dimensionalities.}
\begin{figure*}[t]
\centering
\begin{subfigure}{0.48\textwidth}
    \centering
    \includegraphics[width=\linewidth]{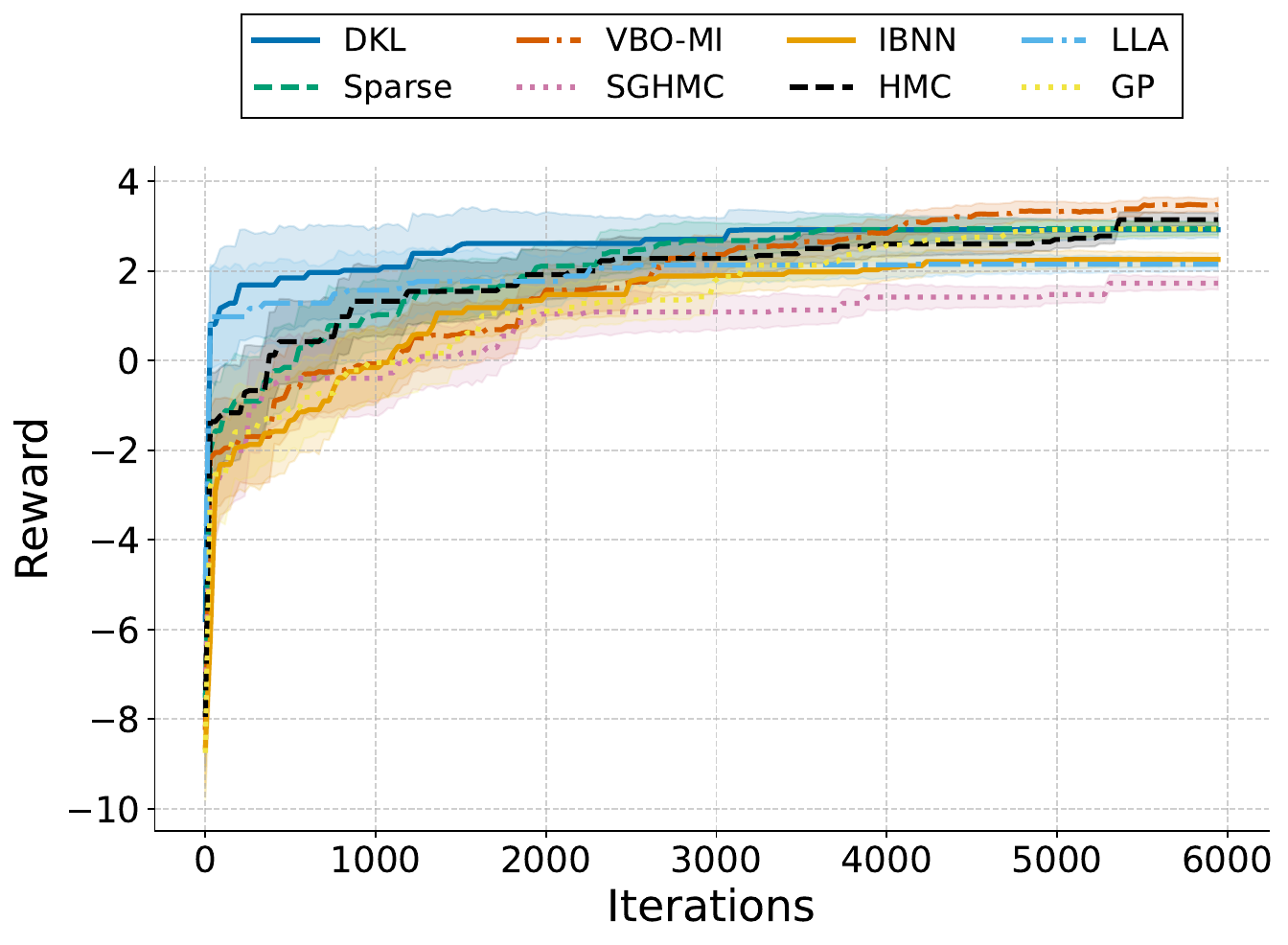}
    \caption{Hartmann ($d=6$)}
    \label{fig:hartmann}
\end{subfigure}
\hfill
\begin{subfigure}{0.48\textwidth}
    \centering
    \includegraphics[width=\linewidth]{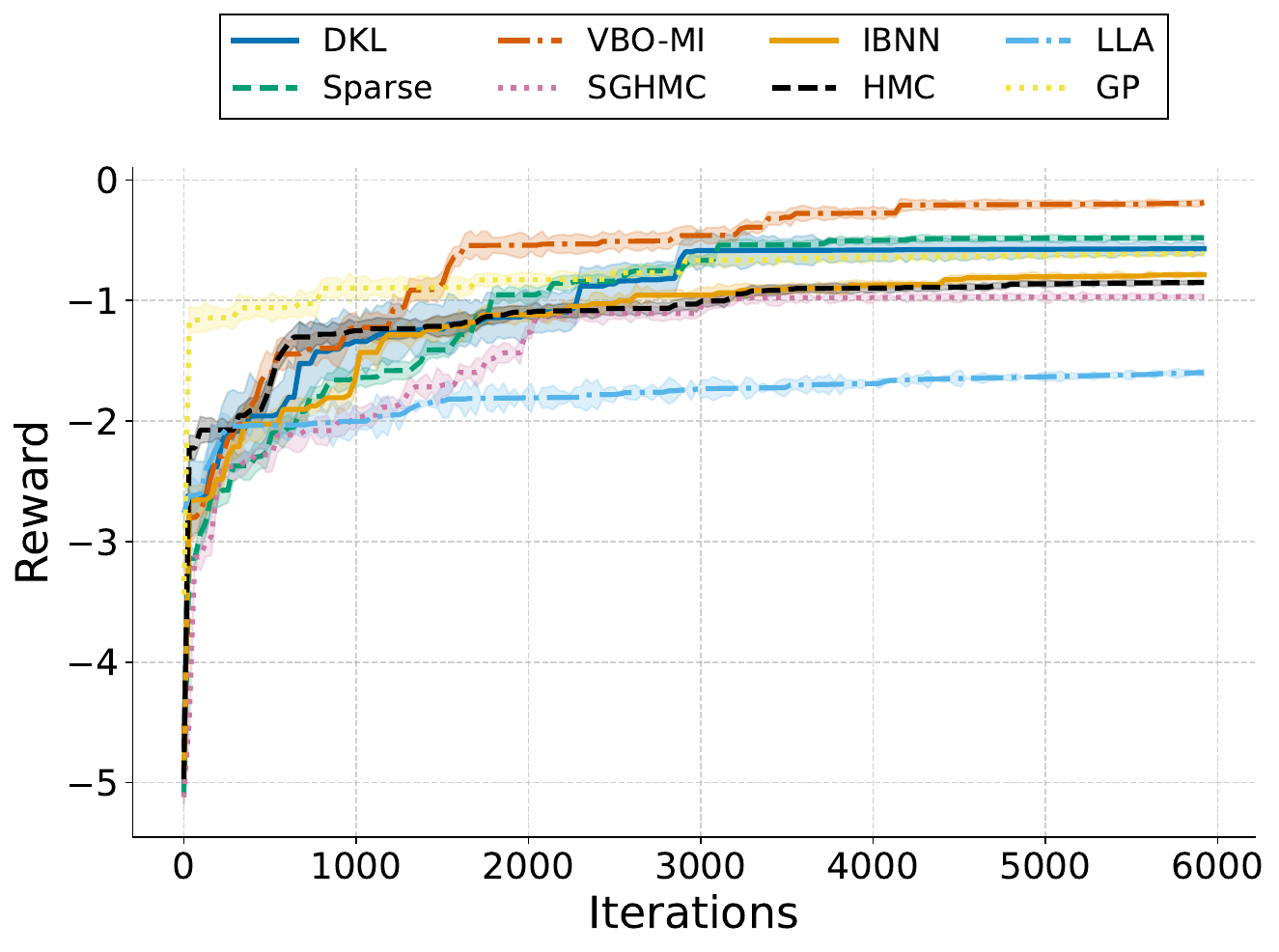}
    \caption{Rastrigin ($d=60$)}
    \label{fig:rastrigin}
\end{subfigure}
\caption{\textcolor{black}{Comparison of VBO-MI with two additional synthetic functions}}
\label{fig:rastrigin0}
\end{figure*}

\paragraph{Batch size sensitivity.}
Fig.~\ref{fig:benchmarks-batch} reports the effect of batch size $B$ on
final reward on PDE optimization, Interferometer, Lunar Lander, and Pest
Control.
VBO-MI generally improves with larger batch sizes up to a threshold,
beyond which gains diminish — a pattern consistent across all baselines.

\paragraph{$\beta$ sensitivity.}
Fig.~\ref{fig:benchmarks-beta} reports the sensitivity of each method
to the exploration parameter $\beta$ on the same four tasks.
VBO-MI is notably less sensitive to the choice of $\beta$ compared to
GP-based baselines, maintaining competitive final reward values across
a wide range of $\beta \in [10, 200]$.

\paragraph{Extended acquisition term ablation.}
Fig.~\ref{fig:benchmarks-ex-comp} extends the acquisition term ablation
of Section~\ref{sec:ablation} to all four lower-dimensional real-world
tasks (PDE, Interferometer, Lunar Lander, Pest Control).
As in the main ablation on Ackley and Rover, replacing either term with
its GP counterpart degrades performance, and replacing the exploitation
term causes greater damage, particularly on higher-dimensional tasks.
This confirms that the finding is not specific to the benchmarks shown
in Section~\ref{sec:ablation} and generalizes across task types.
\paragraph{Noise Effect}
\label{app:noise}
 
Fig.~\ref{fig:noise-robustness} studies how observation noise affects the
surrogate-assisted variant of VBO-MI relative to the original (no-surrogate)
variant, evaluated on Ackley ($d=10$) under three noise levels:
$\sigma \in \{0.4, 0.8, 1.8\}$.
 
Across all noise levels, the surrogate-assisted method (solid lines) achieves
a higher final reward than the original variant (dashed lines).
The performance gap between the two variants is consistent at low and moderate
noise ($\sigma=0.4$ and $\sigma=0.8$), confirming that the surrogate remains a
reliable approximation of $f$ when noise is moderate.
At high noise ($\sigma=1.8$) both variants converge more slowly, as expected,
but the surrogate-assisted method still maintains a clear advantage.
This indicates that the surrogate $\hat{f}_\psi$ is robust to noisy
observations: since it is trained by MSE minimisation over the replay buffer,
it learns a smoothed estimate of $f$ that is less sensitive to individual noisy
evaluations than the direct oracle queries used by the original variant.
 
\begin{figure}[ht]
    \centering
    \includegraphics[width=0.62\columnwidth]{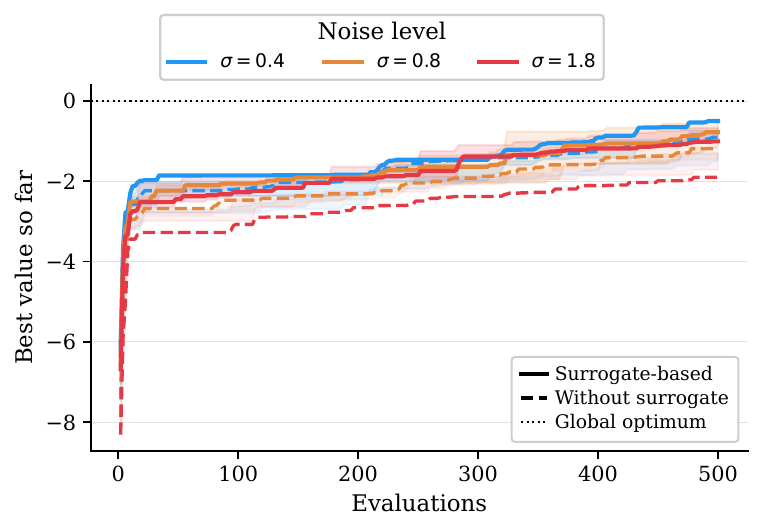}
    \caption{Effect of observation noise on VBO-MI.
             Solid lines: surrogate-assisted variant.
             Dashed lines: original variant (direct oracle queries in $K_a$,
             $K_b$ loops).}
    \label{fig:noise-robustness}
\end{figure}
 \paragraph{Warm-up steps}

Here, we investigate the impact of incorporating a surrogate
model into VBO-MI and analyze the sensitivity of the algorithm to the
number of warm-up steps. Specifically, we compare the performance of
the proposed surrogate-based framework against a variant without
surrogate modeling for different warm-up settings
($20$, $30$, and $60$ warm-up steps).

Fig.~\ref{fig:surrogate_warmup} illustrates the optimization performance
over the course of function evaluations. Overall, the surrogate-based
version consistently achieves superior optimization performance compared
to the non-surrogate variant, demonstrating the effectiveness of the
surrogate model in guiding the search process.

Moreover, increasing the number of warm-up steps from $20$ to $30$
improves the optimization performance, indicating that a moderate number
of initial exploratory samples helps stabilize the Bayesian inference
process and provides a better initialization for subsequent optimization.
However, further increasing the warm-up period to $60$ steps does not
yield noticeable additional improvement, as the performance becomes
similar to the $30$-step setting. This observation suggests that
approximately $30$ warm-up steps provide a suitable trade-off between
initial exploration and optimization efficiency for the considered setup.

\begin{figure}[t]
    \centering
    \includegraphics[width=0.65\linewidth]{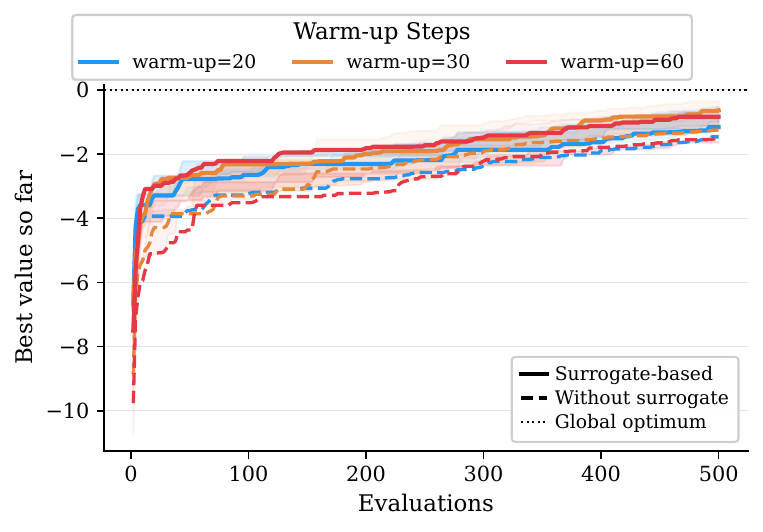}
    \caption{
    Comparison of VBO-MI with and without surrogate modeling for different
    numbers of warm-up steps. The surrogate-based version consistently
    outperforms the non-surrogate counterpart. Increasing the warm-up
    steps improves performance up to approximately $40$ steps, while
    further increases provide marginal gains.
    }
    \label{fig:surrogate_warmup}
\end{figure}

\begin{figure}[ht]
    \centering
    \textbf{Batch size effect on results}\par\medskip
    \begin{subfigure}[t]{0.75\textwidth}
        \centering
        \includegraphics[width=\textwidth]{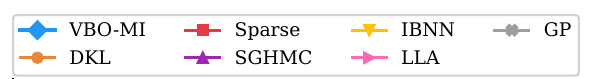}
    \end{subfigure}
    \begin{subfigure}[t]{0.33\textwidth}
        \centering
        \includegraphics[width=\textwidth]{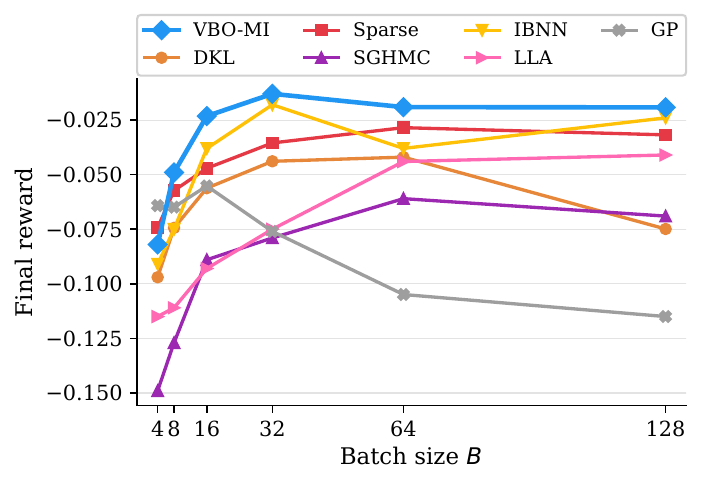}
        \caption{PDE optimization ($d=4$)}
        \label{figpde1-batch}
    \end{subfigure}
    \hfill
    \begin{subfigure}[t]{0.33\textwidth}
        \centering
        \includegraphics[width=\textwidth]{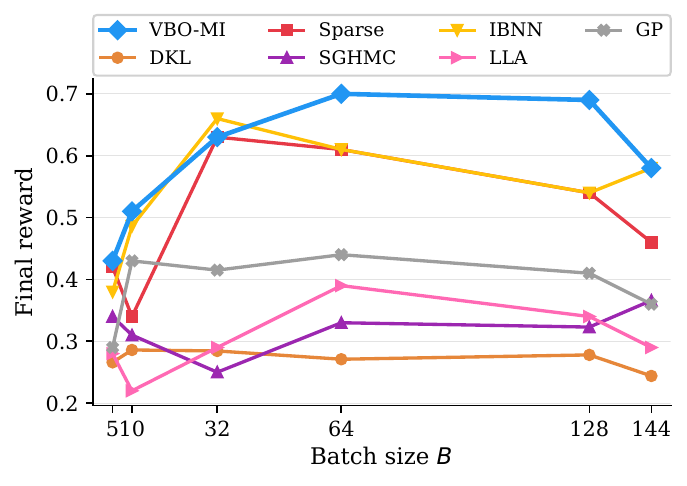}
        \caption{Interferometer ($d=4$)}
        \label{fig-2b-batch}
    \end{subfigure}
    \vspace{0.4cm}
    \begin{subfigure}[t]{0.33\textwidth}
        \centering
        \includegraphics[width=\textwidth]{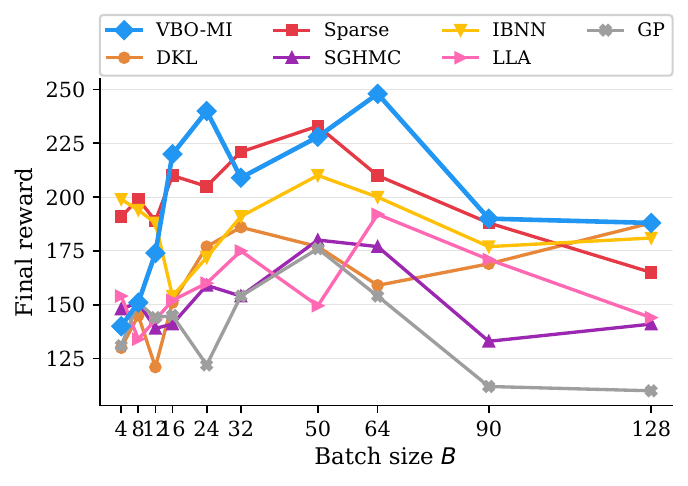}
        \caption{Lunar Lander ($d=12$)}
        \label{fig-2c-batch}
    \end{subfigure}
    \hfill
    \begin{subfigure}[t]{0.33\textwidth}
        \centering
        \includegraphics[width=\textwidth]{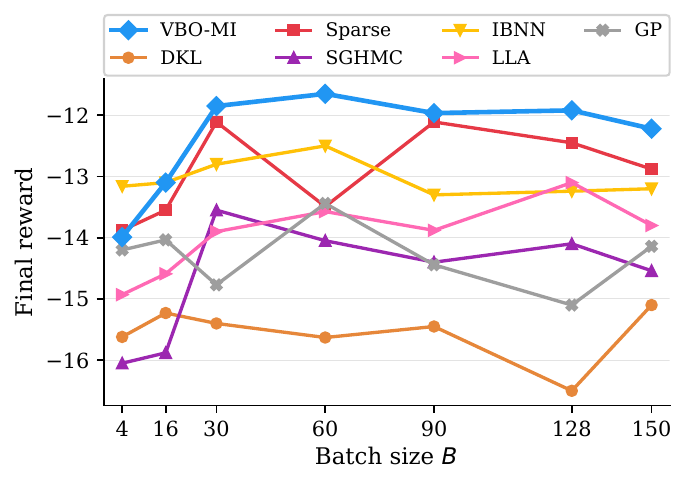}
        \caption{Pest Control ($d=25$)}
        \label{fig-2d-batch}
    \end{subfigure}
    \caption{Batch size sensitivity of VBO-MI and baselines on four
             real-world tasks. Final reward improves with batch size
             up to a threshold, after which gains diminish.}
    \label{fig:benchmarks-batch}
\end{figure}

\begin{figure}[ht]
    \centering
    \textbf{Beta ($\beta$) effect on results}\par\medskip
    \vspace{0.3cm}
    \begin{subfigure}[t]{0.75\textwidth}
        \centering
        \includegraphics[width=\textwidth]{figs/betalunar_3.pdf}
    \end{subfigure}
    \begin{subfigure}[t]{0.33\textwidth}
        \centering
        \includegraphics[width=\textwidth]{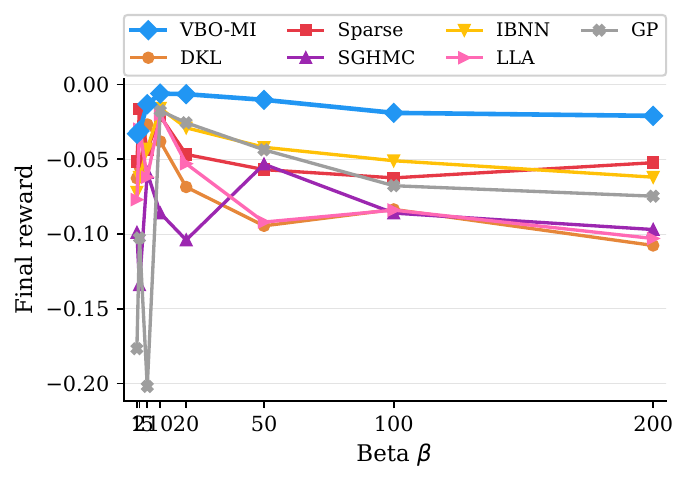}
        \caption{PDE optimization ($d=4$)}
        \label{figpde1-beta}
    \end{subfigure}
    \hfill
    \begin{subfigure}[t]{0.34\textwidth}
        \centering
        \includegraphics[width=\textwidth]{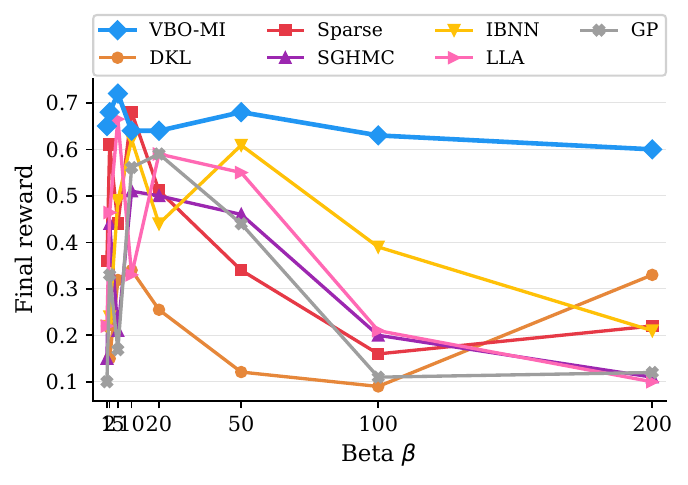}
        \caption{Interferometer ($d=4$)}
        \label{fig-2b-beta}
    \end{subfigure}
    \vspace{0.4cm}
    \begin{subfigure}[t]{0.36\textwidth}
        \centering
        \includegraphics[width=\textwidth]{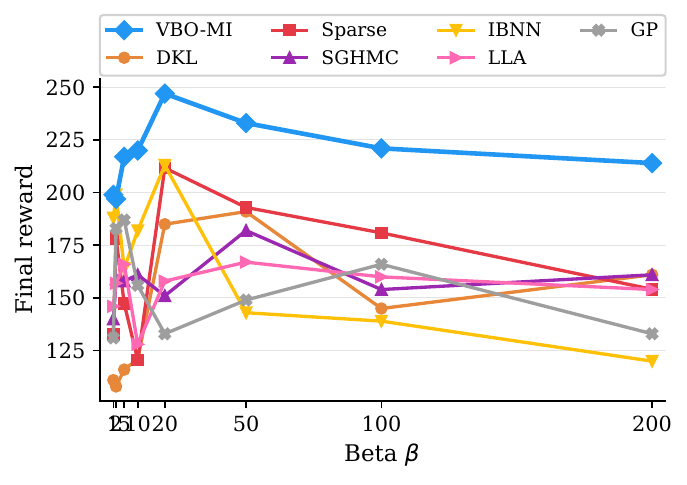}
        \caption{Lunar Lander ($d=12$)}
        \label{fig-2c-beta}
    \end{subfigure}
    \hfill
    \begin{subfigure}[t]{0.33\textwidth}
        \centering
        \includegraphics[width=\textwidth]{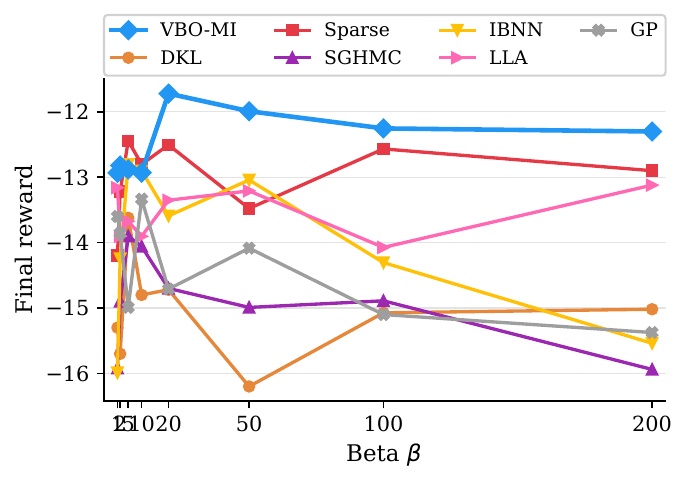}
        \caption{Pest Control ($d=25$)}
        \label{fig-2d-beta}
    \end{subfigure}
    \caption{Sensitivity to exploration parameter $\beta$ across four
             real-world tasks.
             VBO-MI is less sensitive to $\beta$ than GP-based
             baselines, maintaining high reward across $\beta\in[10,200]$.}
    \label{fig:benchmarks-beta}
\end{figure}

\begin{figure*}[ht]
\centering
\begin{subfigure}{0.75\linewidth}
    \centering
    \includegraphics[width=\linewidth]{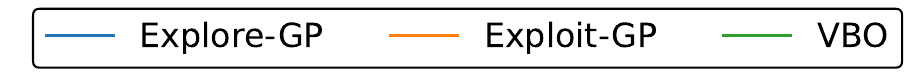}
\end{subfigure}
\vspace{0.2cm}
\begin{subfigure}{0.48\linewidth}
    \centering
    \includegraphics[width=.8\linewidth]{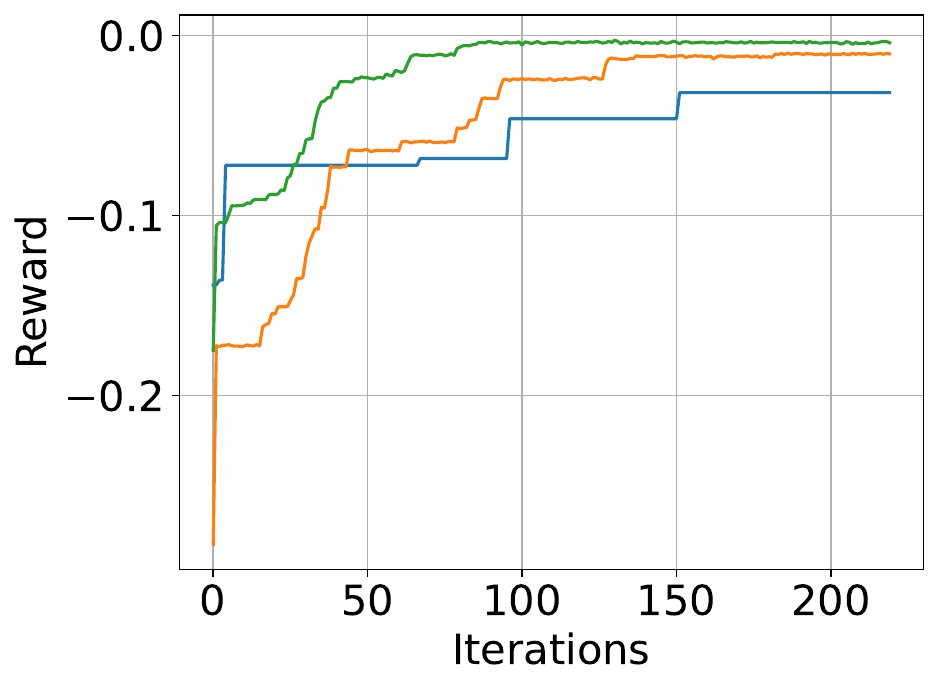}
    \caption{PDE optimization ($d=4$)}
    \label{figpde1-}
\end{subfigure}
\hfill
\begin{subfigure}{0.48\linewidth}
    \centering
    \includegraphics[width=.8\linewidth]{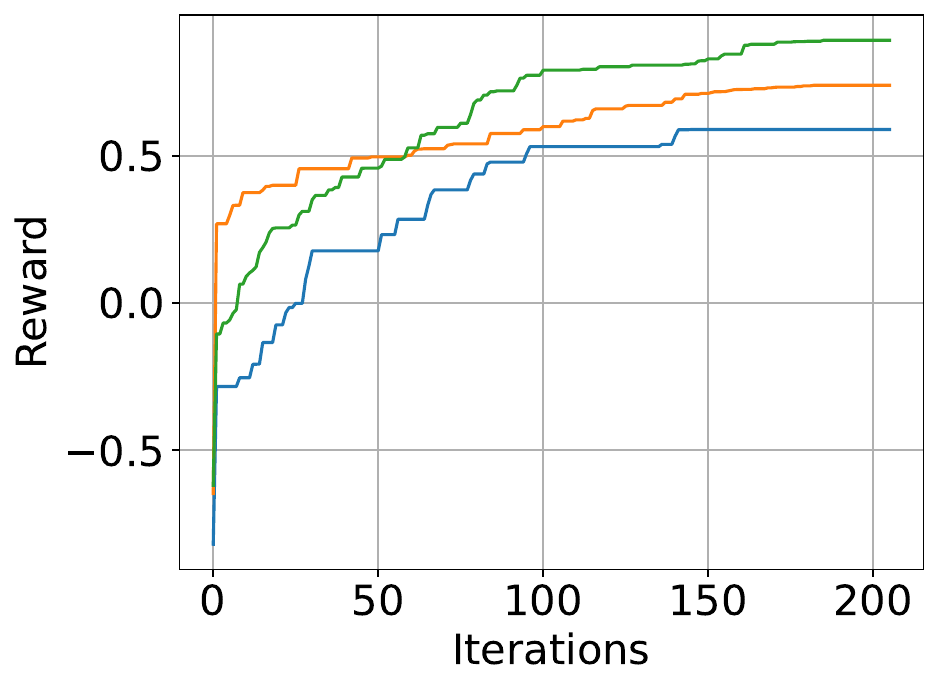}
    \caption{Interferometer ($d=4$)}
    \label{fig-2b--}
\end{subfigure}
\vspace{0.2cm}
\begin{subfigure}{0.48\linewidth}
    \centering
    \includegraphics[width=.8\linewidth]{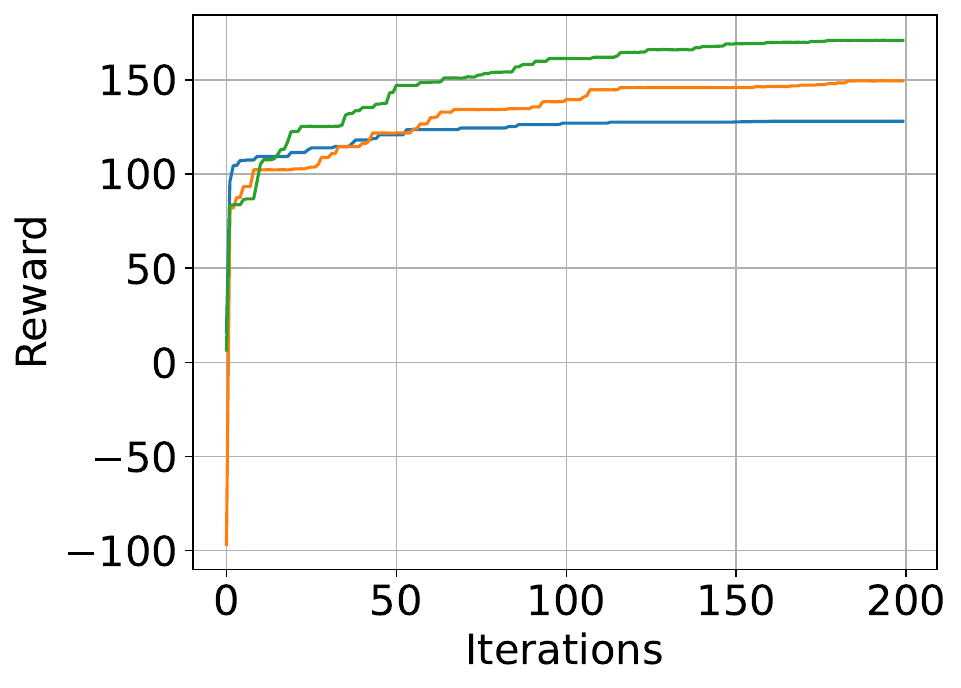}
    \caption{Lunar Lander ($d=12$)}
    \label{fig-2c--}
\end{subfigure}
\hfill
\begin{subfigure}{0.48\linewidth}
    \centering
    \includegraphics[width=.8\linewidth]{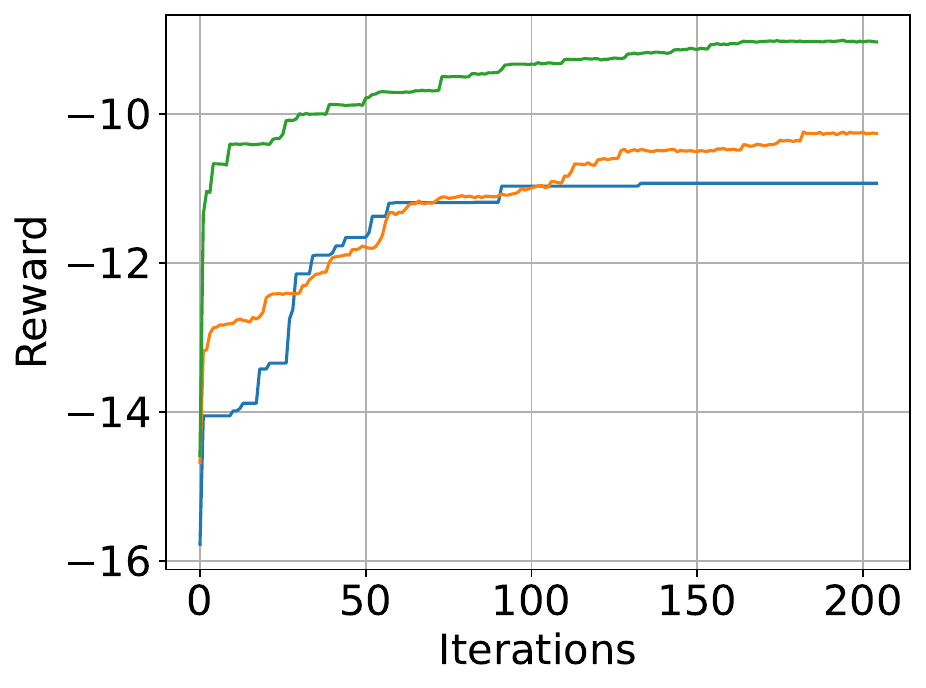}
    \caption{Pest Control ($d=25$)}
    \label{fig-2d--}
\end{subfigure}
\caption{Extended acquisition term ablation on four additional
         real-world tasks (cf.\ Section~\ref{sec:ablation} for results
         on Ackley and Rover).
         \emph{VBO-MI w/ GP exploration}: the exploration term
         $\sqrt{\beta}\,I_\theta$ is replaced by
         $\sqrt{\beta}\,\sigma^2_{\mathrm{GP}}$.
         \emph{VBO-MI w/ GP exploitation}: the exploitation term
         $\hat{f}_\psi$ is replaced by $\mu_{\mathrm{GP}}$.
         Full VBO-MI outperforms both hybrids on all four tasks.}
\label{fig:benchmarks-ex-comp}
\end{figure*}
\end{document}